\documentclass[twoside]{article}

\usepackage[accepted]{aistats2021}

\usepackage{amsmath}
\usepackage{amsthm}
\usepackage{amsfonts}
\usepackage{mathtools}
\usepackage{amssymb}
\usepackage{caption}

\usepackage[shortlabels]{enumitem}

\usepackage{url}


\newcommand{\abs}[1]{\left\lvert#1\right\rvert}
\newcommand{\argmin}[1]{\underset{#1}{ {\text{arg\,min}}}}
\newcommand{\betahat}{\hat{\beta}}
\newcommand{\betainf}{\beta_{\infty}}
\newcommand{\bigo}[1]{\mathcal{O}\left(#1\right)}
\newcommand{\card}[1]{\# #1}
\newcommand{\cardset}[1]{\# \{#1\}}
\newcommand{\condexpec}[2]{\mathbb{E}\left[#1\middle|#2\right]}
\newcommand{\condexpecunder}[3]{\mathbb{E}_{#3}\left[#1\middle|#2\right]}
\newcommand{\condproba}[2]{\mathbb{P}\left(#1\middle|#2\right)}
\newcommand{\corp}{\mathcal{C}}
\newcommand{\dencst}{c}
\newcommand{\defeq}{\vcentcolon =}
\newcommand{\distcos}[2]{d_{\cos} (#1,#2)}
\newcommand{\Exp}{\mathrm{exp}}
\newcommand{\Exps}{\mathrm{e}}
\renewcommand{\exp}[1]{\Exp\left(#1\right)}
\newcommand{\exps}[1]{\Exps^{#1}}
\newcommand{\Expec}{\mathbb{E}}
\newcommand{\expec}[1]{\Expec\left[#1\right]}
\newcommand{\expecunder}[2]{\Expec_{#2}\left[#1\right]}
\newcommand{\frobnorm}[1]{\norm{#1}_{\mathrm{F}}}
\newcommand{\Gammahat}{\hat{\Gamma}}
\newcommand{\Gammainf}{\Gamma_{\infty}}
\newcommand{\Identity}{I}
\newcommand{\Indic}{\mathbf{1}}
\newcommand{\indic}[1]{\Indic_{#1}}

\newcommand{\dg}{{\mathcal{D}}}
\newcommand{\dl}{D_{\ell}}

\newcommand{\norm}[1]{\left\lVert#1\right\rVert}

\newcommand{\Normtfidf}{\phi}
\newcommand{\opnorm}[1]{\norm{#1}_{\mathrm{op}}}
\newcommand{\smallopnorm}[1]{\smallnorm{#1}_{\mathrm{op}}}
\newcommand{\Tfidf}{\varphi}
\newcommand{\normtfidf}[1]{\Normtfidf(#1)}
\newcommand{\tfidf}[1]{\Tfidf(#1)}

\newcommand{\Proba}{\mathbb{P}}
\newcommand{\proba}[1]{\Proba\left (#1\right )}
\newcommand{\probaunder}[2]{\Proba_{#2}\left(#1\right)}
\newcommand{\Reals}{\mathbb{R}}
\newcommand{\Sigmahat}{\hat{\Sigma}}
\newcommand{\smallexpec}[1]{\Expec[#1]}
\newcommand{\smallnorm}[1]{\lVert#1\rVert}
\newcommand{\sphere}[1]{S^{#1}}
\newcommand{\Var}{\mathrm{Var}}
\newcommand{\var}[1]{\Var\left(#1\right)}


\newcommand{\word}{w}

\newcommand{\cvas}{\overset{\text{a.s.}}{\longrightarrow}}





























\theoremstyle{plain}
\newtheorem{theorem}{Theorem}
\newtheorem{proposition}{Proposition}
\newtheorem{lemma}{Lemma}
\newtheorem{corollary}{Corollary}
\theoremstyle{definition}
\newtheorem{definition}{Definition}
\newtheorem{remark}{Remark}

\makeatletter
\def\th@plain{%
  \thm@notefont{}
  \itshape 
}
\def\th@definition{%
  \thm@notefont{}
  \normalfont 
}
\makeatother


\setlength{\pdfpageheight}{11in}
\setlength{\pdfpagewidth}{8.5in}

\usepackage[round]{natbib}

\begin{document}
\runningauthor{Dina Mardaoui and Damien Garreau}

\twocolumn[

\aistatstitle{An Analysis of LIME for Text Data}

\aistatsauthor{Dina Mardaoui \And Damien Garreau} 
\aistatsaddress{Polytech Nice, France \And Universit\'e C\^ote d'Azur, Inria, CNRS, LJAD, France}
]

\begin{abstract}
Text data are increasingly handled in an automated fashion by machine learning algorithms. 
But the models handling these data are not always well-understood due to their complexity and are more and more often referred to as ``black-boxes.''
Interpretability methods aim to explain how these models operate. 
Among them, LIME has become one of the most popular in recent years. 
However, it comes without theoretical guarantees: even for simple models, we are not sure that LIME behaves accurately. 
In this paper, we provide a first theoretical analysis of LIME for text data. 
As a consequence of our theoretical findings, we show that LIME indeed provides meaningful explanations for simple models, namely decision trees and linear models. 
\end{abstract}

\section{Introduction}

Natural language processing has progressed at an accelerated pace in the last decade. 
This time period saw the second coming of artificial neural networks, embodied by the apparition of recurrent neural networks (RNNs) and more particularly long short-term memory networks (LSTMs). 
These new architectures, in conjunction with large, publicly available datasets and efficient optimization techniques, have allowed computers to compete with and sometime even beat humans on specific tasks. 

More recently, the paradigm has shifted from recurrent neural networks to \emph{transformers networks} \citep{vaswani_et_al_2017}.  
Instead of training models specifically for a task, large \emph{language models} are trained on supersized datasets. 
For instance, \texttt{Webtext2} contains the text data associated to $45$ millions links \citep{radford_et_al_2019}. 
The growth in complexity of these models seems to know no limit, especially with regards to their number of parameters. 
For instance, BERT \citep{devlin_et_al_2018} has roughly $340$ millions of parameters, a meager number compared to more recent models such as GTP-2 \citep[$1.5$ billions]{radford_et_al_2019} and GPT-3 \citep[$175$ billions]{brown_et_al_2020}.

Faced with such giants, it is becoming more and more challenging to understand how particular predictions are made. 
Yet, \emph{interpretability} of these algorithms is an urgent need. 
This is especially true in some applications such as healthcare, where natural language processing is used for instance to obtain summaries of patients records \citep{spyns_1996}. 
In such cases, we do not want to deploy in the wild an algorithm making near perfect predictions on the test set but for the wrong reasons: the consequences could be tragic. 

\begin{figure}
    \centering
\includegraphics[scale=0.21]{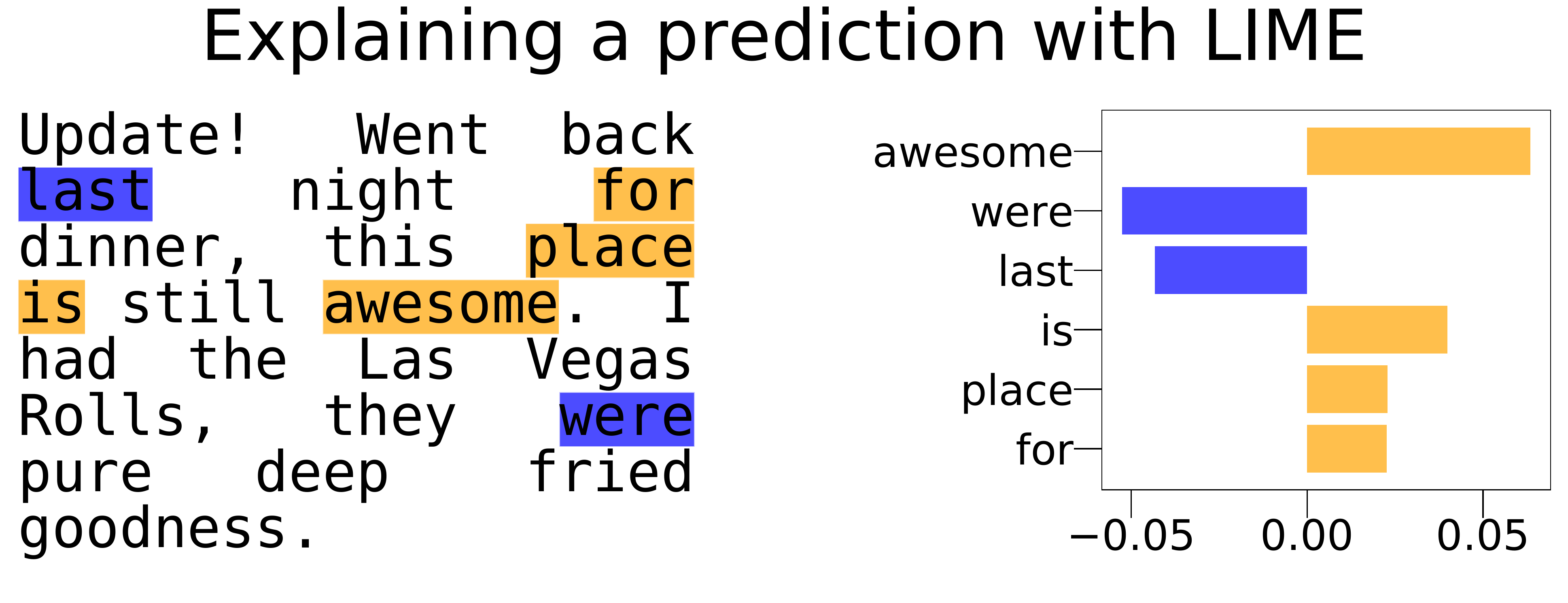}
    \vspace{-0.3in}
    \caption{\label{fig:general-explanation}Explaining the prediction of a random forest classifier on a Yelp review. \emph{Left panel:} the document to explain. The words deemed important for the prediction are highlighted, in orange (positive influence) and blue (negative influence). \emph{Right panel:}  values of the largest $6$ interpretable coefficients, ranked by absolute value.  }
\end{figure}

In this context, a flourishing literature proposing \emph{interpretability methods} emerged. 
We refer to the survey papers of \citet{Guidotti_et_al_2018} and \citet{Adadi_Berrada_2018} for an overview, and to \citet{danilevsky_et_al_2020} for a focus on natural language processing. 
With the notable exception of SHAP \citep{Lundberg_Lee_2017}, these methods do not come with any guarantees. 
Namely, given a simple model already interpretable to some extent, we cannot be sure that these methods provide meaningful explanations. 
For instance, explaining a model that is based on the presence of a given word should return an explanation that gives high weight to this word. 
Without such guarantees, using these methods on the tremendously more complex models aforementioned seems like a risky bet.  

In this paper, we focus on one of the most popular interpretability method: \emph{Local Interpretable Model-agnostic Explanations} \citet[LIME]{ribeiro_et_al_2016}, and more precisely its implementation for text data. 
LIME's process to explain the prediction of a model~$f$ for an example~$\xi$ can be summarized as follows: 
\begin{enumerate}[(i).,itemsep=1pt,topsep=0pt]
    \item from a corpus of documents $\corp$, create a TF-IDF transformer $\Normtfidf$ embedding documents into $\Reals^D$;
    \item create $n$ perturbed documents $x_1,\ldots,x_n$ by deleting words at random in $\xi$; 
    \item for each new example, get the prediction of the model $y_i\defeq f(\normtfidf{x_i})$;
    \item train a (weighted) linear surrogate model with inputs the absence / presence of words and responses the $y_i$s.
\end{enumerate}
The user is then given the coefficients of the surrogate model (or rather a subset of the coefficients, corresponding to the largest ones) as depicted in Figure~\ref{fig:general-explanation}. 
We call these coefficients the \emph{interpretable coefficients}. 

The model-agnostic approach of LIME has contributed greatly to its popularity: one does not need to know the precise architecture of~$f$ in order to get explanations, it is sufficient to be able to query~$f$ a large number of times. 
The explanations provided by the user are also very intuitive, making it easy to check that a model is behaving in the appropriate way (or not!) on a particular example. 

\paragraph{Contributions. }
In this paper, we present the first theoretical analysis of LIME for text data. 
In detail,

\begin{itemize}[itemsep=1pt,topsep=0pt]
    \item we show that, when the number of perturbed samples is large, \textbf{the interpretable coefficients concentrate with high probability around a fixed vector $\beta$} that depends only on the model, the example to explain, and hyperparameters of the method;
    \item we provide an \textbf{explicit expression of $\beta$}, from which we gain interesting insights on LIME. In particular, \textbf{the explanations provided are linear in $f$};
    \item for simple decision trees, we go further into the computations. We show that \textbf{LIME provably provides meaningful explanations}, giving large coefficients to words that are pivotal for the prediction;
    \item for linear models, we come to the same conclusion by showing that the interpretable coefficient associate to a given word is approximately equal to \textbf{the product of the coefficient in the linear model and the TF-IDF transform of the word} in the example. 
\end{itemize}

We want to emphasize that all our results apply to the default implementation of LIME for text data\footnote{\url{https://github.com/marcotcr/lime}} (as of October 12, 2020), with the only caveat that we do not consider any feature selection procedure in our analysis. 
All our theoretical claims are supported by numerical experiments, the code thereof can be found  at \url{https://github.com/dmardaoui/lime_text_theory}.

\paragraph{Related work. }
The closest related work to the present paper is \citet{garreau_luxburg_2020_aistats}, in which the authors provided a theoretical analysis of a variant of LIME in the case of tabular data (that is, unstructured data belonging to $\Reals^N$) when $f$ is linear. 
This line of work was later extended by the same authors  \citep{garreau_luxburg_2020_arxiv}, this time in a setting very close to the default implementation and for other classes of models (in particular partition-based classifiers such as CART trees and kernel regressors built on the Gaussian kernel). 
While uncovering a number of good properties of LIME, these analyses also exposed some weaknesses of LIME, notably cancellation of interpretable features for some choices of hyperparameters. 

The present work is quite similar in spirit, however we are concerned with \emph{text data}. 
The LIME algorithm operates quite differently in this case. 
In particular, the input data goes first through a TF-IDF transform (a non-linear transformation) and there is no discretization step since interpretable features are readily available (the words of the document). 
Therefore both the analysis and our conclusions are quite different, as it will become clear in the rest of the paper. 

\section{LIME for text data}
\label{sec:lime}

In this section, we lay out the general operation of LIME for text data and introduce our notation in the process. 
From now on, we consider a model~$f$ and look at its prediction for a fixed example~$\xi$ belonging to a corpus~$\corp$ of size~$N$, which is built on a dictionary~$\dg$ of size~$D$. 
We let $\norm{\cdot}$ denote the Euclidean norm, and $\sphere{D-1}$ the unit sphere of $\Reals^D$. 

Before getting started, let us note that LIME is usually used in the \emph{classification} setting: $f$ takes values in $\{0,1\}$ (say), and $f(\normtfidf{\xi})$ represents the class attributed to~$\xi$ by~$f$. 
However, behind the scenes, LIME requires~$f$ to be a real-valued function. 
In the case of classification, this function is the probability of belonging to a certain class according to the model. 
In other words, the \emph{regression} version of LIME is used, and this is the setting that we consider in this paper. 
We now detail each step of the algorithm.

\subsection{TF-IDF transform}
\label{sec:tfidf}

LIME works with a vector representation of the documents.  
The TF-IDF transform \citep{luhn_1957,jones_1972} is a popular way to obtain such a representation. 
The idea underlying the TF-IDF is quite simple: to any document, associate a vector of size~$D$. 
If we set $\word_1,\ldots,\word_D$ to be our dictionary, the $j$th component of this vector represents the importance of word~$\word_j$. 
It is given by the product of two terms: the term frequency (TF, how frequent the word is in the document), and the inverse term frequency (IDF, how rare the word is in our corpus). 
Intuitively, the TF-IDF of a document has a high value for a given word if this word is frequent in the document and, at the same time, not so frequent in the corpus. 
In this way, common words such as ``the'' do not receive high weight. 

Formally, let us fix $\delta \in\corp$. 
For each word $\word_j\in\dg$, we set $m_j$ the number of times $\word_j$ appears in $\delta$. 
We also set $v_j\defeq \log \frac{N+1}{N_j+1}+1$, where~$N_j$ is the number of documents in~$\corp$ containing~$\word_j$. 
When presented with~$\corp$, we can pre-compute all the $v_j$s and at run time we only need to count the number of occurrences of~$\word_j$ in~$\delta$. 
We can now define the normalized TF-IDF:

\begin{definition}[Normalized TF-IDF]
\label{def:tf-idf}
We define the \emph{normalized TF-IDF} of $\delta$ as the vector $\normtfidf{\delta}\in\Reals^D$ defined coordinate-wise by 
\begin{equation}
\label{eq:def-norm-tf-idf}
\forall 1\leq j\leq D,\quad \normtfidf{\delta}_j \defeq \frac{m_j v_j}{\sqrt{\sum_{j=1}^D m_j^2v_j^2}}
\, .
\end{equation}
In particular, $\norm{\phi(\delta)}=1$, where $\norm{\cdot}$ is the Euclidean norm. 
\end{definition}

Note that there are many different ways to define the TF and IDF terms, as well as normalization choices.  
We restrict ourselves to the version used in the default implementation of LIME, with the understanding that different implementation choices would not change drastically our analysis. 
For instance, normalizing by the $\ell_1$ norm instead of the $\ell_2$ norm would lead to slightly different computations in Proposition~\ref{prop:beta-computation-linear-main}. 

Finally, note that this transformation step does not take place for tabular data, since the data already belong to $\Reals^D$ in this case. 

\subsection{Sampling}
\label{sec:sampling}

Let us now fix a given document $\xi$ and describe the sampling procedure of LIME. 
Essentially, the idea is to sample new documents similar to $\xi$ in order to see how~$f$ varies in a neighborhood of $\xi$. 

\begin{figure}
    \centering
\includegraphics[scale=0.18]{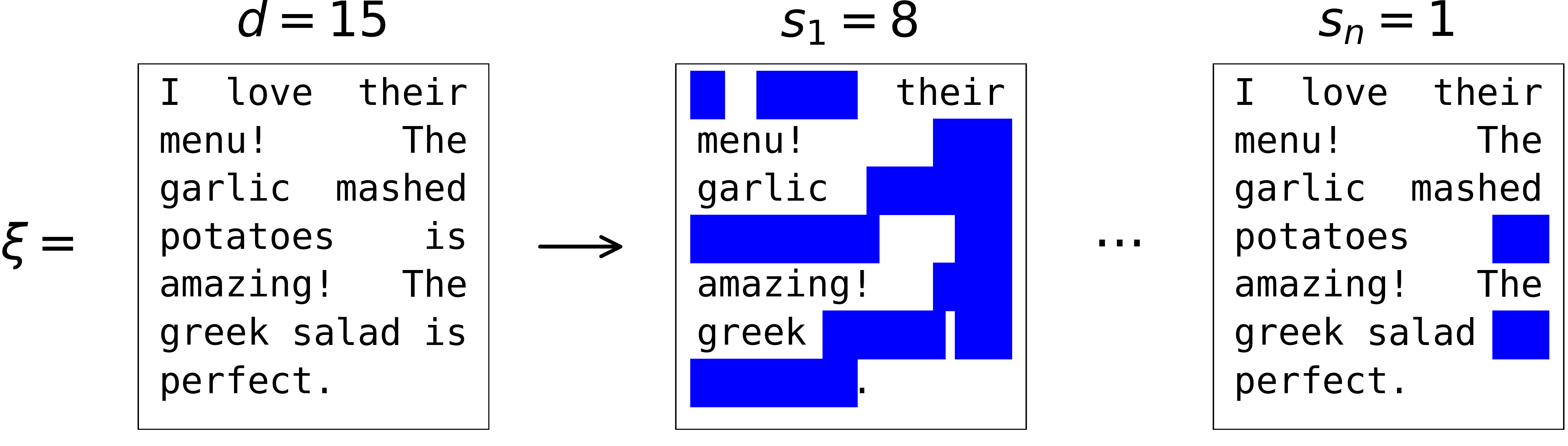}
    \caption{\label{fig:sampling}The sampling scheme of LIME for text data. To the left, the document to explain $\xi$, which contains $d=15$ distinct words. The new samples $x_1,\ldots,x_n$ are obtained by removing $s_i$ random words from $\xi$ (in blue). In the $n$th sample, one word is removed, yielding two deletions in the original document.}
\end{figure}

More precisely, let us denote by $d$ the number of distinct words in $\xi$ and set $\dl\defeq \{\word_1,\ldots,\word_d\}$ the \emph{local dictionary}. 
For each new sample, LIME first draws uniformly at random in $\{1,\ldots,d\}$ a number~$s_i$ of words to remove from~$\xi$. 
Subsequently, a subset $S_i\subseteq \{1,\ldots,d\}$ of size~$s_i$ is drawn uniformly at random: all the words with indices contained in $S_i$ are \emph{removed} from~$\xi$. 
Note that the multiplicity of removals is independent from $s_i$: if the word ``good'' appears $10$ times in $\xi$ and its index belongs to $S$, then all the instances of ``good'' are removed from $\xi$ (see Figure~\ref{fig:sampling}). 
This process is repeated~$n$ times, yielding~$n$ new samples $x_1,\ldots,x_n$. 
With these new documents come~$n$ new binary vectors $z_1,\ldots,z_n\in\{0,1\}^d$, marking the absence or presence of a word in $x_i$. 
Namely, $z_{i,j}=1$ if $\word_j$ belongs to $x_i$ and $0$ otherwise. 
We call the $z_i$s the \emph{interpretable features}. 
Note that we will write $\Indic\defeq (1,\ldots,1)^\top$ for the binary feature associated to~$\xi$: all the words are present. 

Already we see a difficulty appearing in our analysis: when removing words from $\xi$ at random, $\normtfidf{\xi}$ is modified in a non-trivial manner. 
In particular, the denominator of Eq.~\eqref{eq:def-norm-tf-idf} can change drastically if many words are removed.  

In the case of tabular data, the interpretable features are obtained in a completely different fashion, by discretizing the dataset. 

\subsection{Weights}

Let us start by defining the \emph{cosine distance}: 

\begin{definition}[Cosine distance]
For any $u,v\in\Reals^d$, we define
\begin{equation}
\label{eq:def-cos-distance}
\distcos{u}{v} \defeq 1 - \frac{u\cdot v}{\norm{u}\cdot \norm{v}}
\, .
\end{equation}
\end{definition}

Intuitively, the cosine distance between~$u$ and~$v$ is small if the \emph{angle} between~$u$ and~$v$ is small. 
Each new sample~$x_i$ receives a positive weight~$\pi_i$, defined~by
\begin{equation}
\label{eq:def-weights}
\pi_i \defeq \exp{\frac{-\distcos{\Indic}{z_i}^2}{2\nu^2}}
\, ,
\end{equation}
where $\nu$ is a positive \emph{bandwidth parameter}. 
The intuition behind these weights is that $x_i$ can be far away from $\xi$ if many words are removed (in the most extreme case, $s=d$, all the words from $\xi$ are removed). 
In that case, $z_i$ has mostly $0$ components, and is far away from $\Indic$.

Note that the cosine distance in Eq.~\eqref{eq:def-weights} is actually multiplied by $100$ in the current implementation of LIME. 
Thus there is the following correspondence between our notation and the code convention: $\nu_{\text{LIME}}=100\nu$. 
For instance, the default choice of bandwidth, $\nu_{\text{LIME}}=25$, corresponds to $\nu=0.25$. 

We now make the following important remark: \textbf{the weights only depends on the number of deletions.} 
Indeed, conditionally to $S_i$ having exactly $s$ elements, we have $z_i\cdot \Indic = d-s$ and $\norm{z_i}=\sqrt{d-s}$. 
Since $\norm{\Indic}=\sqrt{d}$, using Eq.~\eqref{eq:def-weights}, we deduce that $\pi_i=\psi(s/d)$, where we defined the mapping 
\begin{align}
\psi\colon [0,1]& \longrightarrow \Reals \label{eq:def-psi-main} \\
t &\longmapsto \exp{\frac{-(1-\sqrt{1-t})^2}{2\nu^2}} \notag 
\, .
\end{align}
We can see in Figure~\ref{fig:psi} how the weights are given to observations: when $s$ is small, then $\psi(s/d)\approx 1$ and when $s\approx d$, $\psi(s/d)$ which is a small quantity depending on $\nu$.  
Note that the complicated dependency of the weights in $s$ brings additional difficulty in our analysis, and that we will sometimes restrict ourselves to the large bandwidth regime (that is, $\nu\to +\infty$).  
In that case, $\pi_i \approx 1$ for any $1\leq i\leq n$. 

Euclidean distance between the interpretable features is used instead of the cosine distance in the tabular data version of the algorithm.

\begin{figure}
    \centering
\includegraphics[scale=0.18]{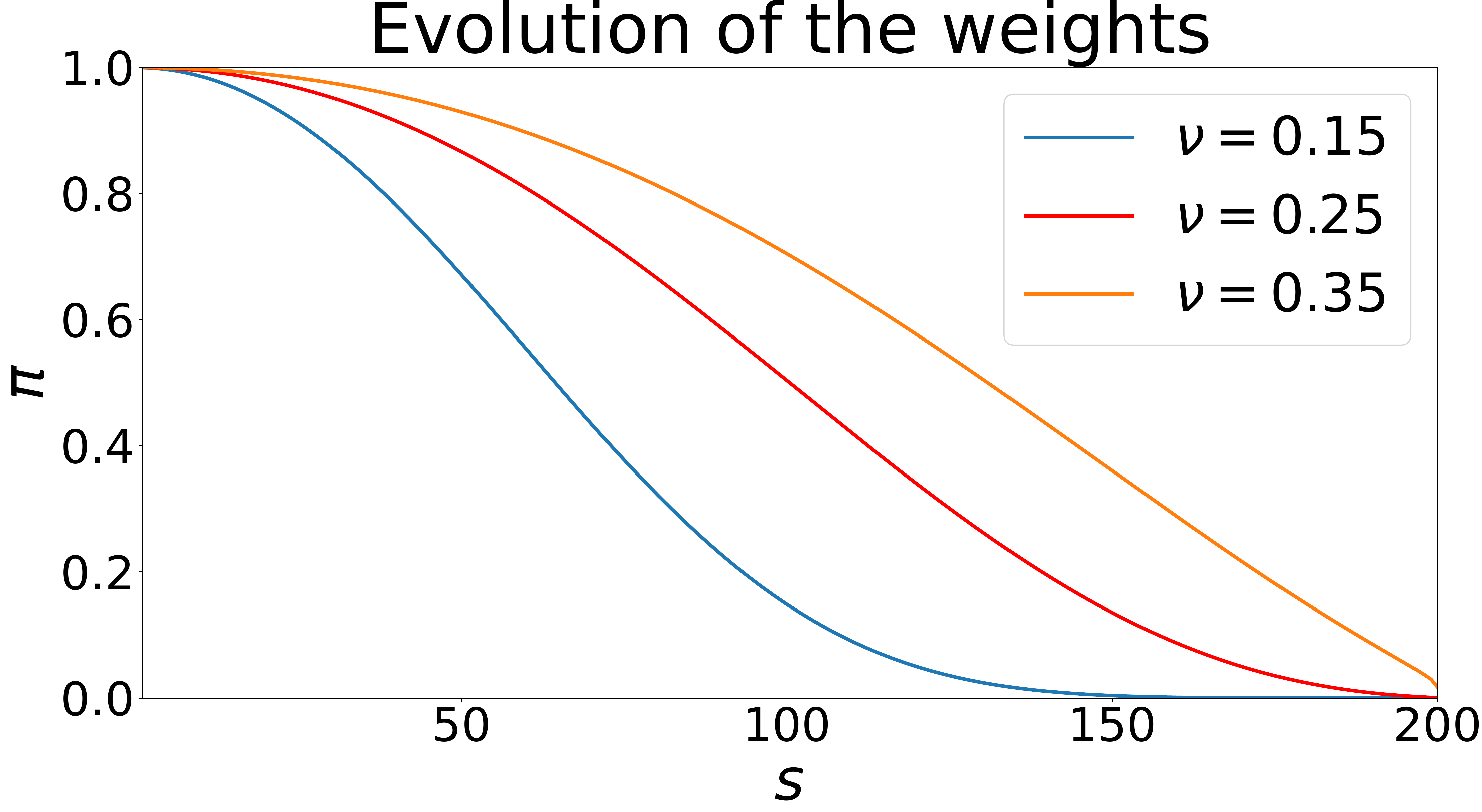}
    \caption{\label{fig:psi}Weights as a function of the number of deletions for different bandwidth parameters ($\nu=0.25$ is default). LIME gives more weights to documents with few deletions ($s/d\approx 0$ means that $\psi(s/d)\approx 1$ regardless of the bandwidth).}
\end{figure}

\subsection{Surrogate model}

The next step is to train a surrogate model on the interpretable features $z_1,\ldots,z_n$, trying to approximate the responses $y_i\defeq f(\normtfidf{x_i})$. 
In the default implementation of LIME, this model is linear and is obtained by weighted ridge regression \citep{hoerl_1970}. 
Formally, LIME outputs 
\begin{equation}
\label{eq:main-problem}
\betahat_n^{\lambda} \in\argmin{\beta\in\Reals^{d+1}} \biggl\{ \sum_{i=1}^n \pi_i(y_i - \beta^\top z_i)^2 + \lambda \norm{\beta}^2\biggr\}
\, ,
\end{equation}
where $\lambda>0$ is a regularization parameter. 
We call the components of $\betahat_n^\lambda$ the \emph{interpretable coefficients}, the $0$th coordinate in our notation is by convention the intercept. 
Note that some feature selection mechanism is often used in practice, limiting the number of interpretable features in output from LIME. 
We do not consider such mechanism in our analysis. 

We now make a fundamental observation. 
In its default implementation, LIME uses the default setting of \texttt{sklearn} for the regularization parameter, that is, $\lambda=1$. 
Hence the first term in Eq.~\eqref{eq:main-problem} is roughly of order $n$ and the second term of order $d$. 
Since we experiment in the large~$n$ regime ($n=5000$ is default) and with documents that have a few dozen distinct words, $n\gg d$. 
To put it plainly, we can consider that $\lambda=0$ in our analysis and still recover meaningful results. 
We will denote by $\betahat_n$ the solution of Eq.~\eqref{eq:main-problem} with $\lambda=0$, that is, ordinary least-squares. 

We conclude this presentation of LIME by noting that the main free parameter of the method is the bandwidth $\nu$. 
As far as we know, there is no principled way of choosing $\nu$. 
The default choice, $\nu=0.25$, does not seem satisfactory in many respects. 
In particular, other choices of bandwidth can lead to different values for interpretable coefficients. 
In the most extreme cases, they can even change sign, see Figure~\ref{fig:cancellation}. 
This phenomenon was also noted for tabular data in \citet{garreau_luxburg_2020_arxiv}. 

\begin{figure}
    \centering
\includegraphics[scale=0.2]{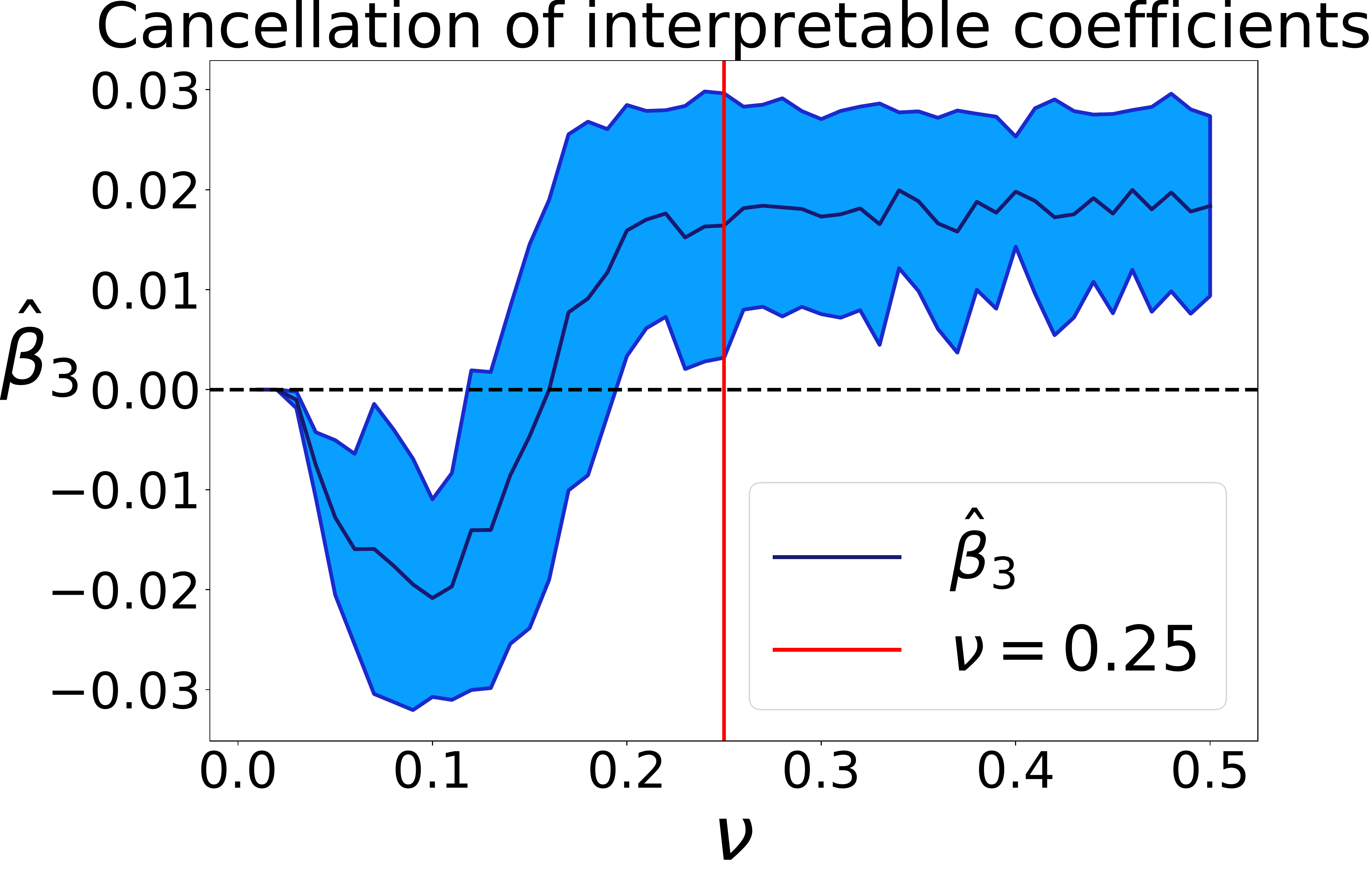}
    \vspace{-0.1in}
    \caption{\label{fig:cancellation}In this experiment, we plot the interpretable coefficient associated to the word ``came" as a function of the bandwidth parameter. The red vertical line marks the default bandwidth choice ($\nu=25$). We can see that LIME gives a negative influence for $\nu \approx 0.1$ and a positive one for $\nu > 0.2$. }
\end{figure}

\section{Main results}

Without further ado, let us present our main result. 
For clarity's sake, we split it in two parts: Section~\ref{sec:main:concentration} contains the concentration of $\betahat_n$ around $\beta^f$ whereas Section~\ref{sec:main:computation} presents the exact expression of $\beta^f$. 

\subsection{Concentration of $\betahat_n$}
\label{sec:main:concentration}

When the number of new samples~$n$ is large, we expect LIME to stabilize and the explanations not to vary too much. 
The next result supports this intuition.  

\begin{theorem}[Concentration of $\betahat_n$]
\label{th:concentration-of-betahat}
Suppose that the model $f$ is bounded by a positive constant $M$ on $\sphere{D-1}$. 
Recall that we let $d$ denote the number of distinct words of $\xi$, the example to explain. 
Let $0<\epsilon < M$ and $\eta\in (0,1)$.  
Then, there exist a vector $\beta^f\in\Reals^d$ such that, for every 
\[
n\gtrsim \max \left\{M^2d^{9} \exps{\frac{10}{\nu^2}}, Md^5\exps{\frac{5}{\nu^2}}\right\} \frac{\log \frac{8d}{\eta}}{\epsilon^2}
\, ,
\]
we have $\proba{\smallnorm{\betahat_n - \beta^f} \geq \epsilon} \leq \eta$. 
\end{theorem}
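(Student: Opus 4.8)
\emph{Proof plan.}
The plan is to follow the standard route for analysing an ordinary least squares estimator: rewrite $\betahat_n$ as the product of an empirical covariance matrix and an empirical cross-covariance vector, identify the deterministic limits of both, and control the fluctuations around these limits by combining concentration inequalities for bounded random variables with a perturbation bound for the matrix inverse. The vector $\beta^f$ will be \emph{defined} as this limiting estimator.

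Concretely, let $\tilde z_i \defeq (1, z_i^\top)^\top \in \Reals^{d+1}$ denote the interpretable feature of $x_i$ augmented with a constant coordinate for the intercept, and set
\[
\Sigmahat_n \defeq \frac{1}{n}\sum_{i=1}^n \pi_i\, \tilde z_i \tilde z_i^\top
\, ,
\qquad
\Gammahat_n \defeq \frac{1}{n}\sum_{i=1}^n \pi_i\, y_i\, \tilde z_i
\, .
\]
Whenever $\Sigmahat_n$ is invertible, the minimiser of Eq.~\eqref{eq:main-problem} with $\lambda = 0$ is $\betahat_n = \Sigmahat_n^{-1}\Gammahat_n$. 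Since the $x_i$ are i.i.d., both $\Sigmahat_n$ and $\Gammahat_n$ are empirical means of i.i.d.\ terms, so by the law of large numbers $\Sigmahat_n \to \Sigmainf \defeq \expec{\pi_1 \tilde z_1 \tilde z_1^\top}$ and $\Gammahat_n \to \Gammainf \defeq \expec{\pi_1 y_1 \tilde z_1}$ almost surely; I then set $\beta^f \defeq \Sigmainf^{-1}\Gammainf$, which indeed depends only on $f$, on $\xi$ (through the law of the $z_i$), and on the bandwidth $\nu$ (through the $\pi_i$), as announced.

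Three ingredients remain. (i) \emph{Invertibility of $\Sigmainf$ with a quantitative lower bound on $\lambda_{\min}(\Sigmainf)$.} Here I would use that, by exchangeability of the $d$ words, the law of $z_1$ is permutation invariant and $\pi_1 = \psi(s_1/d)$ depends only on the number of deletions; consequently $\Sigmainf$ has a simple structure, with a scalar $(0,0)$-entry, a first row and column proportional to the all-ones vector $\Indic$, and a lower-right $d \times d$ block of the form $c\,\Identity + (e-c)\,\Indic\Indic^\top$, for explicit scalars given by averages over the uniform draws of $s_1$ and $S_1$. Its eigenvalues are then available in closed form (two from a $2\times 2$ block, one of multiplicity $d-1$), and one checks that the smallest one is bounded below by a quantity of the form $d^{-k}\exps{-c_0/\nu^2}$ for explicit $k$ and $c_0$; the powers of $d$ and the $\exps{\cdot/\nu^2}$ factors in the statement trace back precisely to this bound (and to the invertibility condition $\opnorm{\Sigmahat_n - \Sigmainf} \leq \tfrac{1}{2}\lambda_{\min}(\Sigmainf)$ needed below). (ii) \emph{Boundedness of the summands.} One has $\norm{\tilde z_i} \leq \sqrt{d+1}$, $0 < \pi_i \leq 1$, and $\abs{y_i} = \abs{f(\normtfidf{x_i})} \leq M$ since $\normtfidf{x_i} \in \sphere{D-1}$ by Definition~\ref{def:tf-idf}; hence every summand above is bounded entrywise, by $1$ for $\Sigmahat_n$ and by $M$ for $\Gammahat_n$. (iii) \emph{Concentration.} Applying a matrix Hoeffding-type inequality (or Hoeffding coordinatewise together with a union bound over the $\bigo{d^2}$ entries, which is what produces the $\log(8d/\eta)$ factor), we get that $\opnorm{\Sigmahat_n - \Sigmainf}$ and $\norm{\Gammahat_n - \Gammainf}$ are both at most $t$ with probability at least $1 - \eta$ as soon as $n \gtrsim d^2 \log(8d/\eta)/t^2$, where $t$ will be chosen below half the lower bound on $\lambda_{\min}(\Sigmainf)$.

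On this event $\Sigmahat_n$ is invertible with $\opnorm{\Sigmahat_n^{-1}} \leq 2\opnorm{\Sigmainf^{-1}}$, and I conclude with the deterministic estimate
\[
\smallnorm{\betahat_n - \beta^f}
\leq \opnorm{\Sigmahat_n^{-1}}\, \norm{\Gammahat_n - \Gammainf}
  + \opnorm{\Sigmahat_n^{-1}}\, \opnorm{\Sigmainf^{-1}}\, \opnorm{\Sigmahat_n - \Sigmainf}\, \norm{\Gammainf}
\, ,
\]
which follows from $\Sigmahat_n^{-1} - \Sigmainf^{-1} = \Sigmahat_n^{-1}(\Sigmainf - \Sigmahat_n)\Sigmainf^{-1}$, bounding $\norm{\Gammainf} \leq M\sqrt{d+1}$ and $\opnorm{\Sigmainf^{-1}} \leq d^{k}\exps{c_0/\nu^2}$ from ingredient (i). Picking $t$ so that the right-hand side is at most $\epsilon$ and substituting into $n \gtrsim d^2\log(8d/\eta)/t^2$ yields the stated sample size, the first term in the maximum coming from the $\Sigmahat_n$-fluctuation coupled with $\norm{\Gammainf} \asymp M\sqrt{d}$ and the second from the invertibility requirement. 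I expect the main obstacle to be ingredient (i): extracting a lower bound on $\lambda_{\min}(\Sigmainf)$ with the correct polynomial order in $d$, since a loose analysis of the $2\times 2$-plus-rank-one eigenvalue problem (or of the underlying combinatorial averages) would inflate the exponent of $d$ and degrade the whole bound, whereas the concentration and perturbation steps are essentially routine once the summands are known to be bounded.
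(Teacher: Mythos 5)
Your plan is structurally the same as the paper's proof: write $\betahat_n = \Sigmahat_n^{-1}\Gammahat_n$, define $\beta^f \defeq \Sigma^{-1}\Gamma^f$ with $\Sigma = \expec{\pi \tilde z\tilde z^\top}$ and $\Gamma^f = \expec{\pi f(\normtfidf{x})\tilde z}$, concentrate both empirical quantities via a matrix Hoeffding inequality (the boundedness observations you list are exactly the ones used), and conclude with the first-order perturbation bound for the inverse on the event $\smallopnorm{\Sigma^{-1}(\Sigmahat_n - \Sigma)} \leq 0.32$. The only real difference is where the quantitative control of $\Sigma^{-1}$ comes from: you propose to lower bound $\lambda_{\min}(\Sigma)$ via the spectral decomposition of the structured matrix, whereas the paper inverts $\Sigma$ in closed form by block inversion and bounds $\opnorm{\Sigma^{-1}} \leq \frobnorm{\Sigma^{-1}}$ entrywise. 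These routes are equivalent up to polynomial factors in $d$: the $2\times 2$ reduced eigenvalue problem you describe has determinant exactly $\dencst_d = (d-1)\alpha_0\alpha_2 - d\alpha_1^2 + \alpha_0\alpha_1$, and the eigenvalue of multiplicity $d-1$ is $\alpha_1 - \alpha_2$ — precisely the two quantities the paper must bound from below.

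That said, the step you defer ("one checks that the smallest eigenvalue is bounded below by a quantity of the form $d^{-k}\exps{-c_0/\nu^2}$") is the genuinely hard part of the theorem, and your proposal contains no argument for it. Lower bounding $\alpha_1 - \alpha_2$ is routine (one gets $\alpha_1 - \alpha_2 \geq \exps{-1/(2\nu^2)}/6$ from $\psi(t) \geq \exps{-1/(2\nu^2)}$). But for $\dencst_d$ the natural move is to write $d\,\dencst_d = A^2B^2 - \bigl(\sum_s a_sb_s\bigr)^2$ with $a_s = \sqrt{\psi(s/d)}$ and $b_s = (s/d)\sqrt{\psi(s/d)}$, and Cauchy--Schwarz then gives only $\dencst_d > 0$ with no rate; the paper has to invoke a quantitative refinement of Cauchy--Schwarz (Theorem~2.1 of Filipovski, 2019) to extract $\dencst_d \geq \exps{-2/\nu^2}/40$, and even then suspects the resulting dependence $\opnorm{\Sigma^{-1}} \lesssim d^{3/2}\exps{5/(2\nu^2)}$ is not tight in $d$. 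So you have correctly located the crux, but as written the proposal asserts the conclusion of ingredient (i) rather than proving it, and without that lower bound none of the exponents of $d$ or the $\exps{\cdot/\nu^2}$ factors in the stated sample-size requirement can actually be derived.
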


We refer to the supplementary material for a complete statement (we omitted numerical constants here for clarity) and a detailed proof. 
In essence, Theorem~\ref{th:concentration-of-betahat} tells us that we can focus on $\beta^f$ in order to understand how LIME operates, provided that~$n$ is large enough. 
The main limitation of Theorem~\ref{th:concentration-of-betahat} is the dependency of~$n$ in~$d$ and~$\nu$. 
The control that we achieve on $\smallnorm{\betahat_n-\beta}$ becomes quite poor for large~$d$ or small~$\nu$: we would then need~$n$ to be unreasonably large in order to witness concentration. 

We notice that Theorem~\ref{th:concentration-of-betahat} is very similar in its form to Theorem~1 in \citet{garreau_luxburg_2020_arxiv} except that (i) the dimension is replaced by the number of distinct words in the document to explain, and (ii) there is no discretization parameter in our case. 
The differences with the analysis in the tabular data framework will be more visible in the next section. 

\subsection{Expression of $\beta^f$}
\label{sec:main:computation}

Our next result shows that we can derive an explicit expression for $\beta^f$. 
Before stating our result, we need to introduce more notation. 
From now on, we set $x$ a random variable such that $x_1,\ldots,x_n$ are i.i.d. copies of $x$. 
Similarly,~$\pi$ corresponds to the draw of the $\pi_i$s and $z$ to that of the~$z_i$s. 

\begin{definition}[$\alpha$ coefficients]
\label{def:alphas}
Define $\alpha_0\defeq \expec{\pi}$ and, for any $1\leq p\leq d$, 
\begin{equation}
\label{eq:def-alphas-main}
\alpha_p  \defeq \expec{\pi \cdot z_1 \cdots z_p }
\, .
\end{equation}
\end{definition}

Intuitively, when $\nu$ is large, $\alpha_p$ corresponds to the probability that $p$ distinct words are present in $x$. 
The sampling process of LIME is such that $\alpha_p$ does not depend on the exact set of indices considered. 
In fact,~$\alpha_p$ only depends on~$d$ and~$\nu$. 
We show in the supplementary material that it is possible to compute the $\alpha$ coefficients in closed-form as a function of~$d$ and~$\nu$:

\begin{proposition}[Computation of the $\alpha$ coefficients]
\label{prop:alphas-computation-main}
Let $0\leq p\leq d$. 
For any $d\geq 1$ and $\nu >0$, it holds that 
\[
\alpha_p = \frac{1}{d} \sum_{s=1}^d \prod_{k=0}^{p-1} \frac{d-s-k}{d-k} \psi\left(\frac{s}{d}\right)
\, .
\]
\end{proposition}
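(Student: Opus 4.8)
The plan is to compute the expectation defining $\alpha_p$ by conditioning on the number $s$ of words removed from $\xi$, exploiting the key structural fact noted in the paper that the weight $\pi$ is a deterministic function of $s$, namely $\pi = \psi(s/d)$. Write
\[
\alpha_p = \expec{\pi \cdot z_1 \cdots z_p} = \sum_{s=1}^d \proba{s_i = s} \cdot \psi\!\left(\frac{s}{d}\right) \cdot \condexpec{z_1 \cdots z_p}{s_i = s}
\, .
\]
Since $s_i$ is drawn uniformly in $\{1,\ldots,d\}$, we have $\proba{s_i = s} = 1/d$, which produces the prefactor $\frac{1}{d}$ and the sum over $s$. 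It remains to identify the conditional expectation with the product $\prod_{k=0}^{p-1} \frac{d-s-k}{d-k}$.

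The core of the argument is this combinatorial identity. Conditionally on $s_i = s$, the set $S_i$ of removed indices is a uniformly random $s$-subset of $\{1,\ldots,d\}$, and $z_1 \cdots z_p$ is the indicator that none of the first $p$ indices lies in $S_i$, i.e.\ the indicator that $S_i \subseteq \{p+1,\ldots,d\}$. Hence $\condexpec{z_1\cdots z_p}{s_i=s}$ equals the probability that a uniformly random $s$-subset of a $d$-element set avoids a fixed $p$-element subset. This probability is $\binom{d-p}{s}/\binom{d}{s}$, which one checks equals $\prod_{k=0}^{p-1}\frac{d-s-k}{d-k}$ by expanding the binomial coefficients and cancelling; equivalently, it is the probability that $p$ successive draws without replacement, from an urn of $d$ items of which $s$ are ``removed,'' all avoid the removed items, giving the telescoping product $\frac{d-s}{d}\cdot\frac{d-s-1}{d-1}\cdots\frac{d-s-(p-1)}{d-(p-1)}$. (When $s > d-p$ the product contains a zero factor, consistently with the impossibility of an $s$-subset fitting inside a set of size $d-p$.) Substituting this back gives exactly the claimed formula. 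The case $p=0$ is immediate since $z_1\cdots z_p$ is the empty product, equal to $1$, so $\alpha_0 = \frac{1}{d}\sum_{s=1}^d \psi(s/d) = \expec{\pi}$, matching Definition~\ref{def:alphas}.

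I do not anticipate a serious obstacle here: the only mildly delicate point is justifying that the conditional expectation depends only on $s$ (and not on which specific indices $1,\ldots,p$ are singled out), which follows from the exchangeability of the sampling scheme — $S_i$ is chosen uniformly among all $s$-subsets — and then carrying out the elementary cancellation between $\binom{d-p}{s}/\binom{d}{s}$ and the stated product. One should also note for completeness that $\pi$ being a function of $s$ alone is what lets $\psi(s/d)$ be pulled out of the conditional expectation; this was already established in the discussion surrounding Eq.~\eqref{eq:def-psi-main}. A sentence remarking that $\alpha_p$ thus depends only on $d$ and $\nu$ (through $\psi$) rounds out the proof.
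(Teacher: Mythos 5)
Your proof is correct and follows essentially the same route as the paper's: condition on the event $\{\card{S}=s\}$ (each with probability $1/d$), pull out $\pi=\psi(s/d)$, and identify $\condexpec{z_1\cdots z_p}{\card{S}=s}$ with $\binom{d-p}{s}/\binom{d}{s}=\prod_{k=0}^{p-1}\frac{d-s-k}{d-k}$, which is exactly the paper's Lemma on the conditional probability of containing given words. No gaps; the exchangeability remark and the $s>d-p$ edge case are handled the same way (the latter implicitly, via the vanishing binomial coefficient).
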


From these coefficients, we form the normalization constant
\begin{equation}
\label{eq:def-densct}
\dencst_d \defeq (d-1)\alpha_0\alpha_2 -d\alpha_1^2 + \alpha_0\alpha_1
\, .
\end{equation}

We will also need the following. 

\begin{definition}[$\sigma$ coefficients]
\label{def:sigmas}
For any $d\geq 1$ and $\nu >0$, define
\begin{equation}
\label{eq:def-sigmas}
\begin{cases}
\sigma_1 &\defeq -\alpha_1
\, , \\
\sigma_2 &\defeq \frac{(d-2)\alpha_0 \alpha_2 - (d-1)\alpha_1^2 + \alpha_0\alpha_1}{\alpha_1-\alpha_2}\, , \\
\sigma_3 &\defeq \frac{\alpha_1^2-\alpha_0\alpha_2}{\alpha_1-\alpha_2 }
\, .
\end{cases}
\end{equation}
\end{definition}

With these notation in hand, we have:

\begin{proposition}[Expression of $\beta^f$]
\label{prop:expression-of-beta}
Under the assumptions of Theorem~\ref{th:concentration-of-betahat}, we have $\dencst_d >0$ and, for any $1\leq j\leq d$,
\begin{align}
\label{eq:def-beta}
\beta_j^f = \dencst^{-1}_d\biggl\{\sigma_1 \expec{\pi f(\normtfidf{x})} & + \sigma_2 \expec{\pi z_j f(\normtfidf{x})} \\
&+ \sigma_3 \sum_{\substack{k=1 \\ k\neq j}}^d \expec{\pi z_k f(\normtfidf{x})}\biggr\} \notag 
\, .
\end{align}
\end{proposition}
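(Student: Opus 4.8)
The plan is to derive $\beta^f$ as the minimizer of the population-level weighted least squares problem, obtained by taking $n\to\infty$ in Eq.~\eqref{eq:main-problem} with $\lambda=0$. Since $\betahat_n$ is the solution of an ordinary least-squares problem with design matrix $Z\in\Reals^{n\times(d+1)}$ (first column all ones, remaining columns the $z_i$s) and response $y$, the normal equations give $\betahat_n = (Z^\top \mathrm{Diag}(\pi) Z)^{-1} Z^\top \mathrm{Diag}(\pi) y$. By the law of large numbers, $\frac1n Z^\top\mathrm{Diag}(\pi)Z \to \Sigma$ and $\frac1n Z^\top\mathrm{Diag}(\pi)y\to \Gamma$, where $\Sigma\defeq\expec{\pi z z^\top}$ (with $z$ augmented by a leading $1$) and $\Gamma\defeq\expec{\pi z f(\normtfidf{x})}$. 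One then defines $\beta^f\defeq\Sigma^{-1}\Gamma$; the fact that this is the vector appearing in Theorem~\ref{th:concentration-of-betahat} is exactly what the concentration argument (stated earlier, proved in the supplement) establishes, so here I only need to justify that $\Sigma$ is invertible and to carry out the inversion.

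The key computational step is to exploit the heavy symmetry of $\Sigma$. Because the sampling of LIME treats all $d$ words exchangeably and $\pi$ depends only on the number of deletions $s$, the entries of $\Sigma$ take only four distinct values: $\Sigma_{0,0}=\expec{\pi}=\alpha_0$; $\Sigma_{0,j}=\Sigma_{j,0}=\expec{\pi z_j}=\alpha_1$ for $1\le j\le d$; $\Sigma_{j,j}=\expec{\pi z_j^2}=\expec{\pi z_j}=\alpha_1$ (since $z_j\in\{0,1\}$); and $\Sigma_{j,k}=\expec{\pi z_j z_k}=\alpha_2$ for $j\neq k$. Thus $\Sigma$ has the block form
\begin{equation*}
\Sigma = \begin{pmatrix} \alpha_0 & \alpha_1 \Indic^\top \\ \alpha_1 \Indic & (\alpha_1-\alpha_2)\Identity + \alpha_2 \Indic\Indic^\top \end{pmatrix}
\, ,
\end{equation*}
where here $\Indic\in\Reals^d$. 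Such an ``arrowhead plus rank-one'' matrix can be inverted in closed form, either by the block-inverse formula together with the Sherman--Morrison identity for the lower-right block, or by directly positing that $\Sigma^{-1}$ has the same symmetric structure (one value for the $(0,0)$ entry, one for the border, one for the diagonal, one for the off-diagonal) and solving the resulting small linear system. Computing the Schur complement $\alpha_0 - \alpha_1^2\Indic^\top[(\alpha_1-\alpha_2)\Identity+\alpha_2\Indic\Indic^\top]^{-1}\Indic$ should, after simplification, reveal the normalization constant $\dencst_d = (d-1)\alpha_0\alpha_2 - d\alpha_1^2 + \alpha_0\alpha_1$ of Eq.~\eqref{eq:def-densct} up to a factor involving $\alpha_1-\alpha_2$; positivity of $\dencst_d$ then follows from $\Sigma\succ 0$ (which in turn follows from the $z_i$s not being almost surely colinear with $\Indic$, guaranteed by $d\ge 2$ and $\nu>0$, together with the boundedness of $f$ ensuring all expectations are finite).

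Once $\Sigma^{-1}$ is written with its four symmetric entries, multiplying by $\Gamma = (\expec{\pi f(\normtfidf x)}, \expec{\pi z_1 f(\normtfidf x)},\ldots,\expec{\pi z_d f(\normtfidf x)})^\top$ and reading off the $j$th coordinate ($1\le j\le d$) produces a linear combination of $\expec{\pi f(\normtfidf x)}$, $\expec{\pi z_j f(\normtfidf x)}$, and $\sum_{k\neq j}\expec{\pi z_k f(\normtfidf x)}$; matching the three coefficients against Definition~\ref{def:sigmas} gives exactly Eq.~\eqref{eq:def-beta}. I would double-check the algebra by verifying the identity $\sigma_2 - \sigma_3 = $ (diagonal minus off-diagonal entry of $\dencst_d\Sigma^{-1}$) and that the border entry equals $\sigma_1 = -\alpha_1$ after cancellation. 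The main obstacle is purely the bookkeeping in this inversion — keeping the $\alpha_1-\alpha_2$ factors and the $d$-dependent coefficients straight so that the final expression collapses precisely into the $\sigma_1,\sigma_2,\sigma_3$ of Definition~\ref{def:sigmas}; there is no conceptual difficulty, but a sign or a mislabeled $d$ versus $d-1$ would break the match.
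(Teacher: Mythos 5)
Your proposal is correct and follows essentially the same route as the paper: define $\beta^f=\Sigma^{-1}\Gamma^f$, use the exchangeability of the words to reduce $\Sigma$ to a matrix with entries $\alpha_0,\alpha_1,\alpha_2$, invert it in closed form via the block/Sherman--Morrison structure (the paper likewise invokes block inversion and verifies $\Sigma\Sigma^{-1}=\Identity$), and read off the $j$th coordinate of $\Sigma^{-1}\Gamma^f$. The one place where you genuinely diverge is the positivity of $\dencst_d$: you deduce it from $\Sigma\succ 0$ through the Schur complement (which equals $\dencst_d/\sigma_0$ with $\sigma_0=(d-1)\alpha_2+\alpha_1$, not $\dencst_d$ up to a factor of $\alpha_1-\alpha_2$ as you guessed), whereas the paper writes $\dencst_d=\frac{1}{d}\bigl[A^2B^2-(\sum_s a_sb_s)^2\bigr]$ explicitly and applies Cauchy--Schwarz, later sharpened to the quantitative bound $\dencst_d\geq \exps{-2/\nu^2}/40$ needed for Theorem~\ref{th:concentration-of-betahat}. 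Your qualitative argument is cleaner and fully sufficient for this proposition (positive definiteness of $\Sigma$ holds since $\pi>0$ and the support of $(1,z^\top)^\top$ spans $\Reals^{d+1}$ for $d\geq 2$), but it does not yield the explicit lower bound on $\dencst_d$ that the concentration result ultimately requires.
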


We also have an expression for the intercept which can be found in the supplementary material, as well as the proof of Proposition~\ref{prop:expression-of-beta}. 
At first glance, Eq.~\eqref{eq:def-beta} is quite similar to Eq.~(6) in \citet{garreau_luxburg_2020_arxiv}, which gives the expression of $\beta_j^f$ in the tabular data case. 
The main difference is the TF-IDF transform in the expectation, personified by $\Normtfidf$, and the additional terms (there is no $\sigma_3$ factor in the tabular data case). 
In addition, the expression of the $\sigma$ coefficients is much more complicated than in the tabular data case. 
We now present some immediate consequences of Proposition~\ref{prop:expression-of-beta}. 

\paragraph{Linearity of explanations. }
Perhaps the most striking feature of Eq.~\eqref{eq:def-beta} is that it is \textbf{linear in $f$}.
More precisely, the mapping $f\mapsto \beta^f$ is linear in $f$: for any given two functions $f$ and $g$, we have
\[
\beta^{f+g} = \beta^f + \beta^g
\, .
\]
Therefore, because of Theorem~\ref{th:concentration-of-betahat}, the explanations $\betahat_n$ obtained for a finite sample of new examples are also approximately linear in the model to explain. 
We illustrate this phenomenon in Figure~\ref{fig:linearity}. 
This is remarkable: many models used in machine learning can be written as a linear combination of smaller models (\emph{e.g.}, generalized linear models, kernel regressors, decision trees and random forests). 
In order to understand the explanations provided by these complicated models, one can try and understand the explanations for the elementary elements of the models first. 

\paragraph{Large bandwidth. }
It can be difficult to get a good sense of the values taken by the $\sigma$ coefficients, and therefore of $\beta$. 
Let us see how Proposition~\ref{prop:expression-of-beta} simplifies in the large bandwidth regime and what insights we can gain. 
We denote by $\betainf$ the limit of $\beta$ when $\nu\to +\infty$. 
When $\nu\to +\infty$, we prove in the supplementary material that, for any $1\leq j\leq d$, up to $\bigo{1/d}$ terms and a numerical constant, the $j$-th coordinate of $\betainf$ is then approximately equal to 
\begin{equation*}
\left(\betainf^f\right)_j\! \approx\! \condexpec{f(\normtfidf{x})}{\word_j\in x} - \frac{1}{d}\sum_{k\neq j} \condexpec{f(\normtfidf{x})}{\word_k\in x}
.
\end{equation*}
Intuitively, the interpretable coefficient associated to the word $\word_j$ is high if \textbf{the expected value of the model when word $\word_j$ is present is significantly higher than the typical expected value when other words are present}. 
We think that this is reasonable: if the model predicts much higher values when $\word_j$ belongs to the example, it surely means that~$\word_j$ being present is important for the prediction. 
Of course, this is far from the full picture, since (i) this reasoning is only valid for large bandwidth, and (ii) in practice, we are concerned with $\betahat_n$ which may be not so close to $\beta^f$ for small $n$. 

\begin{figure}
    \centering
\includegraphics[scale=0.25]{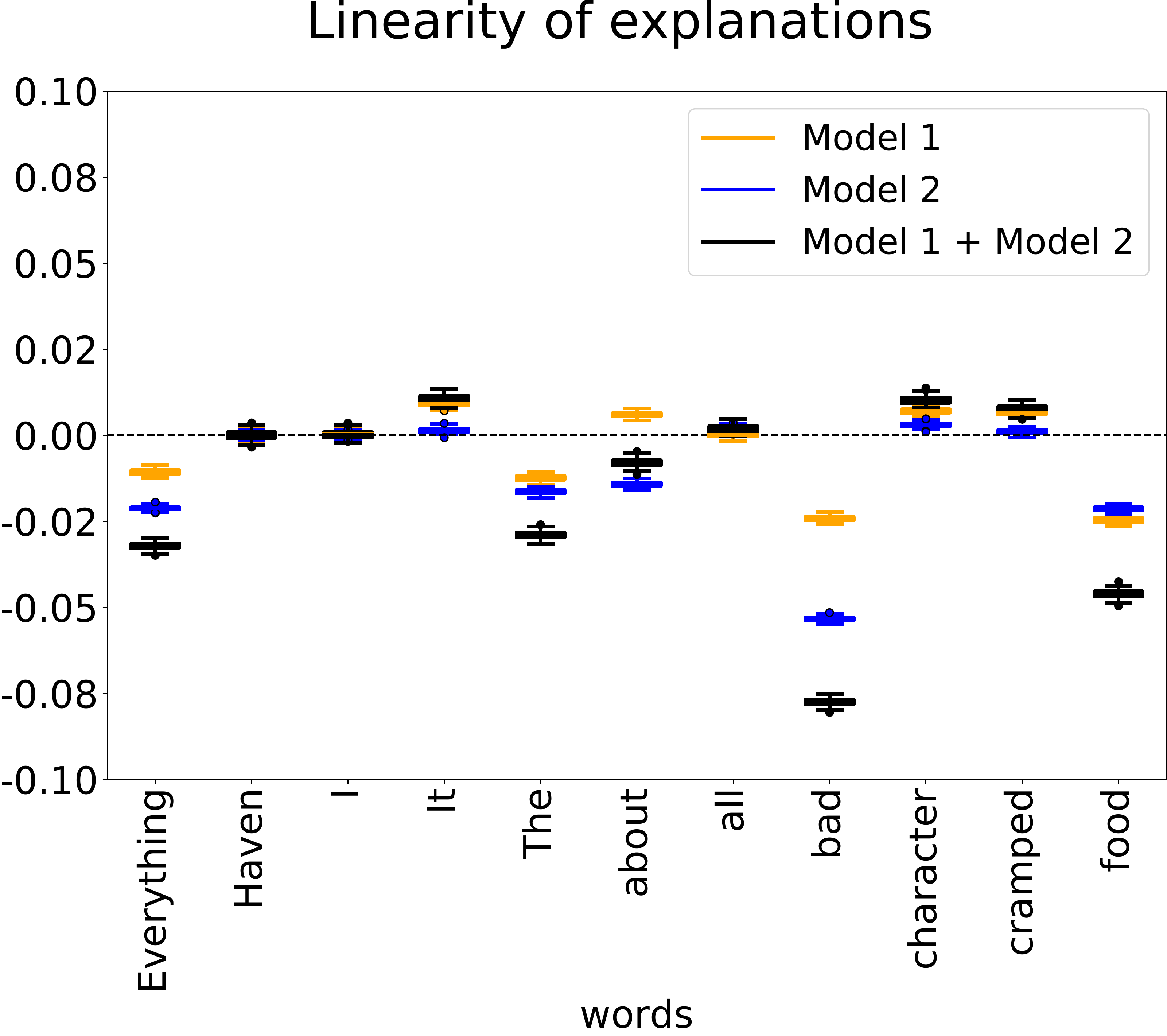}
    \vspace{-0.1in}
    \caption{\label{fig:linearity}The explanations given by LIME for the sum of two models (here two random forests regressors) are the sum of the explanations for each model, up to noise coming from the sampling procedure.}
\end{figure}


\subsection{Sketch of the proof}

We conclude this section with a brief sketch of the proof of Theorem~\ref{th:concentration-of-betahat}, the full proof can be found in the supplementary material. 

Since we set $\lambda=0$ in Eq.~\eqref{eq:main-problem}, $\betahat_n$ is the solution of a weighted least-squares problem. 
Denote by $W\in\Reals^{n\times n}$ the diagonal matrix such that $W_{i,i}=\pi_i$, and set $Z\in\{0,1\}^{n\times (d+1)}$ the matrix such that its $i$th line is $(1,z_i^\top)$. 
Then the solution of Eq.~\eqref{eq:main-problem} is given by 
\[
\betahat_n = \left(Z^\top WZ\right)^{-1}Z^\top Wy
\, ,
\]
where we defined $y\in\Reals^n$ such that $y_i=f(\normtfidf{x_i})$ for all $1\leq i\leq n$. 
Let us set $\Sigmahat_n\defeq \frac{1}{n}Z^\top WZ$ and $\Gammahat_n^f\defeq \frac{1}{n}Z^\top Wy$. 
By the law of large numbers, we know that both $\Sigmahat_n$ and $\Gammahat_n^f$ converge in probability towards their population counterparts $\Sigma\defeq \smallexpec{\Sigmahat_n}$ and $\Gamma^f\defeq \smallexpec{\Gammahat_n}$. 
Therefore, provided that $\Sigma$ is invertible, $\betahat_n$ is close to $\beta^f\defeq \Sigma^{-1}\Gamma^f$ with high probability. 

As we have seen in Section~\ref{sec:lime}, the main differences with respect to the tabular data implementation are (i) the interpretable features, and (ii) the TF-IDF transform. 
The first point lead to a completely different $\Sigma$ than the one obtained in \citet{garreau_luxburg_2020_arxiv}. 
In particular, it has no zero coefficients, leading to more complicated expression for $\beta^f$ and additional challenges when controlling $\opnorm{\Sigma^{-1}}$. 
The second point is quite challenging since, as noted in Section~\ref{sec:tfidf}, \textbf{the TF-IDF transform of a document changes radically when deleting words at random in the document.} 
This is the main reason why we have to resort to approximations when dealing with linear models.


\section{Expression of $\beta^f$ for simple models}
\label{sec:discussion}

In this section, we see how to specialize Proposition~\ref{prop:expression-of-beta} to simple models $f$. 
Recall that our main goal in doing so is to investigate whether it makes sense or not to use LIME in these cases. 
We will focus on two classes of models: decision trees (Section~\ref{sec:decision-trees}) and linear models (Section~\ref{sec:linear-models}). 

\subsection{Decision trees}
\label{sec:decision-trees}

In this section we focus on simple decision trees built on the presence or absence of given words. 
For instance, let us look at the model returning $1$ if the word ``food'' is present, or if ``about'' and ``everything'' are present in the document. 
Ideally, LIME would give high positive weights to ``food,'' ``about,'' and ``everything,'' if they are present in the document to explain, and small weight to all other words. 

We first notice that such simple decision trees can be written as sums of products of the binary features. 
Indeed, recall that we defined $z_j=\indic{\word_j\in x}$. 
For instance, suppose that the first three words of our dictionary are ``food,'' ``about,'' and ``everything.''
Then the model from the previous paragraph can be written 
\begin{equation}
\label{eq:def-g}
g(x) = z_1 + (1-z_1)\cdot z_2 \cdot z_3
\, .
\end{equation}

Now it is clear that the $z_j$s can be written as function of the TF-IDF transform of a word, since $\word_j\in x$ if, and only if, $\normtfidf{x}_j > 0$. 
Therefore this class of models falls into our framework and we can use Theorem~\ref{th:concentration-of-betahat} and Proposition~\ref{prop:expression-of-beta} in order to gain insight on the explanations provided by LIME. 
For instance, Eq.~\eqref{eq:def-g} can be written as $f(\normtfidf{x})$ with, for any $\zeta\in\Reals^D$, 
\[
f(\zeta) \defeq \indic{\zeta_1 > 0} + (1-\indic{\zeta_1>0}) \cdot \indic{\zeta_2 > 0} \cdot \indic{\zeta_3 > 0}
\, .
\]
By linearity, it is sufficient to know how to compute~$\beta^f$ when~$f$ is a product of indicator functions. 

We now make an important remark: since the new example $x_1,\ldots,x_n$ are created by deleting words at random from the text $\xi$, \textbf{$x$ only contains words that are already present in $\xi$}. 
Therefore, without loss of generality, we can restrict ourselves to the local dictionary (the distinct words of $\xi$). 
Indeed, for any word $\word$ not already in $\xi$, $\indic{\word \in x}=0$ almost surely. 
As before, we denote by $\dl$ the local dictionary associated to $\xi$, and we denote its elements by $\word_1,\ldots,\word_d$. 
We can compute in closed-form the interpretable coefficients for a product of indicator functions:

\begin{proposition}[Computation of $\beta^f$, product of indicator functions]
\label{prop:beta-computation-indicator-product-general-main}
Let $J\subseteq \{1,\ldots,d\}$ be a set of~$p$ distinct indices and set $f(x) = \prod_{j\in J}\indic{x_j>0}$. 
Then, for any $j\in J$, 
\begin{align*}
\beta_j^f \!&=\! \dencst_d^{-1}\!\bigl[\sigma_1\alpha_p + \sigma_2\alpha_p + (d\!-\!p)\sigma_3\alpha_{p+1} + (p\!-\!1)\sigma_3\alpha_p\bigr]
\end{align*}
and, for any $j\in\{1,\ldots,d\}\setminus J$, 
\begin{align*}
\beta_j^f \!&=\! \dencst_d^{-1}\!\bigl[\sigma_1\alpha_p+\sigma_2\alpha_{p+1}+(d\!-\!p\!-\!1)\sigma_3\alpha_{p+1} + p\sigma_3\alpha_p \bigr]
.
\end{align*}
\end{proposition}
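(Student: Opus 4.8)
The plan is to apply Proposition~\ref{prop:expression-of-beta} essentially verbatim, after rewriting $f(\normtfidf{x})$ in terms of the interpretable features. First I would use the remark made above that $\word_j\in x$ if and only if $\normtfidf{x}_j>0$, so that $\indic{\normtfidf{x}_j>0}=z_j$ almost surely. Hence, for $f(\zeta)=\prod_{j\in J}\indic{\zeta_j>0}$ one has $f(\normtfidf{x})=\prod_{j\in J}z_j$ almost surely. Since each $z_j$ is $\{0,1\}$-valued, $z_j^2=z_j$, which is the key algebraic fact that lets us collapse a product whenever an index is repeated. Note also that this $f$ is bounded by $1$ on $\sphere{D-1}$, so the hypotheses of Theorem~\ref{th:concentration-of-betahat} hold with $M=1$ and Proposition~\ref{prop:expression-of-beta} applies; in particular $\dencst_d>0$ is already guaranteed.

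Next I would evaluate the three expectations appearing in Eq.~\eqref{eq:def-beta}, using Definition~\ref{def:alphas} together with the fact recalled after that definition that $\expec{\pi z_{k_1}\cdots z_{k_q}}$ depends only on the number $q$ of distinct indices involved, and equals $\alpha_q$. The first term gives $\expec{\pi f(\normtfidf{x})}=\expec{\pi\prod_{j\in J}z_j}=\alpha_p$. For the second term, $\expec{\pi z_j f(\normtfidf{x})}$: if $j\in J$ then $z_j\prod_{k\in J}z_k=\prod_{k\in J}z_k$ by idempotence, so the expectation is $\alpha_p$; if $j\notin J$ we pick up one extra distinct index, so it is $\alpha_{p+1}$. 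For the sum $\sum_{k\neq j}\expec{\pi z_k f(\normtfidf{x})}$ I would split the range of $k$ according to whether $k\in J$. When $j\in J$, there are $p-1$ indices $k\in J\setminus\{j\}$ each contributing $\alpha_p$ and $d-p$ indices $k\notin J$ each contributing $\alpha_{p+1}$, giving $(p-1)\alpha_p+(d-p)\alpha_{p+1}$. When $j\notin J$, there are $p$ indices $k\in J$ each contributing $\alpha_p$ and $d-p-1$ indices $k\notin J\cup\{j\}$ each contributing $\alpha_{p+1}$, giving $p\alpha_p+(d-p-1)\alpha_{p+1}$.

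Finally I would substitute these values into Eq.~\eqref{eq:def-beta}. For $j\in J$ this yields $\beta_j^f=\dencst_d^{-1}\bigl[\sigma_1\alpha_p+\sigma_2\alpha_p+\sigma_3\bigl((p-1)\alpha_p+(d-p)\alpha_{p+1}\bigr)\bigr]$, and for $j\notin J$ it yields $\beta_j^f=\dencst_d^{-1}\bigl[\sigma_1\alpha_p+\sigma_2\alpha_{p+1}+\sigma_3\bigl(p\alpha_p+(d-p-1)\alpha_{p+1}\bigr)\bigr]$; rearranging terms gives exactly the stated formulas. There is no real obstacle here: the argument is bookkeeping. The only points demanding care are the combinatorial split of the sum over $k\neq j$ — in particular counting correctly how many indices fall into each category depending on whether $j$ itself belongs to $J$ — and using the identification $f(\normtfidf{x})=\prod_{j\in J}z_j$ consistently, including the idempotence $z_j^2=z_j$ that prevents an index of $J$ from being double counted.
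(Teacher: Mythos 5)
Your proof is correct and follows essentially the same route as the paper: the paper first records the values $\expec{\pi z_\ell \prod_{j\in J} z_j}$ as the entries of $\Gamma^f$ (equal to $\alpha_p$ for $\ell\in\{0\}\cup J$ and $\alpha_{p+1}$ otherwise, using idempotence of the $z_j$) and then substitutes into the general expression $\beta^f=\Sigma^{-1}\Gamma^f$, which is exactly your computation. The combinatorial counts $(p-1,\,d-p)$ and $(p,\,d-p-1)$ are right, so nothing is missing.
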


In particular, when $p=0$, Proposition~\ref{prop:beta-computation-indicator-product-general-main} simplifies greatly and we find that $1\leq k\leq d$, $\beta_k^f=\indic{k=j}$. 
It is already a reassuring result: when the model is just indicating if a given word is present, \textbf{the explanation given by LIME is one for this word and zero for all the other words}. 

It is slightly more complicated to see what happens when $p\geq 1$. 
To this extent, let us set $j\in J$ and $k\notin J$. 
Then it follows readily from Proposition~\ref{prop:beta-computation-indicator-product-general} that
\[
\beta^f_j - \beta_k^f = \dencst_d^{-1}(\sigma_2+\sigma_3)(\alpha_p-\alpha_{p+1})
\, .
\]
Since $\alpha_p\approx 1/(p+1)$ and $\sigma_2+\sigma_3\approx 6$, we deduce that $\beta_j^f \gg \beta_k^f$. 
Moreover, from Definition~\ref{def:alphas} and~\ref{def:sigmas} one can show that $\beta_k^f = \bigo{1/d}$ when $\nu$ is large. 
Thus Proposition~\ref{prop:beta-computation-indicator-product-general} tells us that  \textbf{LIME gives large positive coefficients to words that are in the support of~$f$ and small coefficients to all the other words}. 
This is a satisfying property. 

Together with the linearity property, Proposition~\ref{prop:beta-computation-indicator-product-general} allows us to compute $\beta^f$ for any decision tree that can be written as in Eq.~\eqref{eq:def-g}. 
We give an example of our theoretical predictions in Figure~\ref{fig:decision-tree-result}. 
As predicted, \textbf{the words that are pivotal in the prediction have high interpretable coefficients, whereas the other words receive near-zero coefficients}. 
It is interesting to notice that words that are near the root of the tree receive a greater weight. 
We present additional experiments in the supplementary material.

\begin{figure}
    \centering
\includegraphics[scale=0.25]{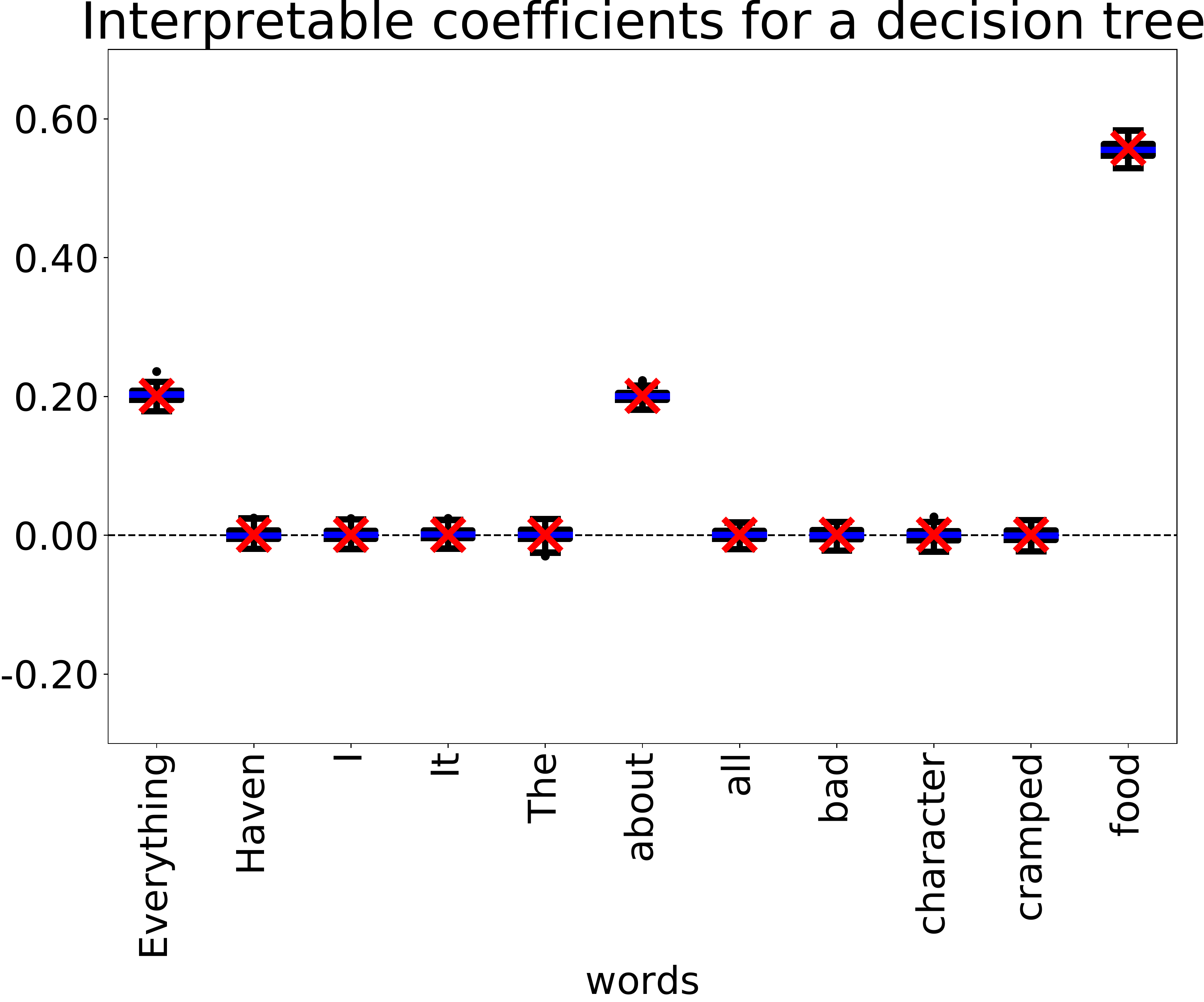}
    \vspace{-0.1in}
    \caption{\label{fig:decision-tree-result}Theory \emph{vs} practice for the tree defined by Eq.~\eqref{eq:def-g}. The black whisker boxes correspond to $100$ runs of LIME with default settings ($n=5000$ new examples and $\nu=0.25$) whereas the red crosses correspond to the theoretical predictions given by our analysis. The example to explain is a Yelp review with $d=35$ distinct words.}
\end{figure}

\subsection{Linear models}
\label{sec:linear-models}

We now focus on linear models, that is, for any document $x$,
\begin{equation}
\label{eq:def-linear-model-main}
f(\normtfidf{x}) \defeq \sum_{j=1}^d \lambda_j \normtfidf{x}_j
\, ,
\end{equation}
where $\lambda_1,\ldots,\lambda_d$ are arbitrary fixed coefficients. 
We have to resort to approximate computations in this case: from now on, we assume that $\nu = +\infty$. 
We start with the simplest linear function: all coefficients are zero except one, that is, $\lambda_k=1$ if $k=j$ and $0$ otherwise in Eq.~\eqref{eq:def-linear-model-main}, for a fixed index~$j$. 
We need to introduce additional notation before stating or result. 
For any $1\leq j\leq d$, define
\[
\omega_k \defeq \frac{m_j^2v_j^2}{\sum_{\ell=1}^d m_\ell^2v_\ell^2}
\, ,
\]
where the $m_k$s and $v_k$s were defined in Section~\ref{sec:tfidf}. 
For any~$J$ that is a strict subset of $\{1,\ldots,d\}$, define $H_S\defeq \sum_{j\in J}\omega_j$. 
Recall that $S$ denotes the random subset of indices chosen by LIME in the sampling step (see Section~\ref{sec:sampling}). 
Define $E_j= \condexpec{(1-H_S)^{-1/2}}{S\not\ni j}$ and for any $k\neq j$, $E_{j,k} = \condexpec{(1-H_S)^{-1/2}}{S\not\ni j,k}$. 
Then we have the following:

\begin{proposition}[Computation of $\beta^f$, linear case]
\label{prop:beta-computation-linear-main}
Let $1\leq j\leq d$ and assume that $f(\normtfidf{x})=\normtfidf{x}_j$. 
Then, for any $1\leq k\leq d$ such that $k\neq j$, 
\begin{align*}
\left(\betainf^f\right)_k &= \biggl[2 E_{j,1} - \frac{2}{d}\sum_{\ell \neq k,j}E_{j,\ell}\biggr] \normtfidf{\xi}_j + \bigo{\frac{1}{d}}
\, ,
\end{align*}
and
\begin{align*}
\left(\betainf^f\right)_j &= \biggl[3E_j - \frac{2}{d} \sum_{k \neq j}E_{j,k}\biggr] \normtfidf{\xi}_j + \bigo{\frac{1}{d}}
\, .
\end{align*}
\end{proposition}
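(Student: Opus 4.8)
The plan is to start from the exact formula of Proposition~\ref{prop:expression-of-beta}, set $\nu=+\infty$ (so that $\pi\equiv 1$; this is legitimate because each entry of $\Sigma$ and of $\Gamma^f$ depends continuously on $\nu$ through the bounded factor $\pi$, whence $\betainf^f=\lim_{\nu\to\infty}\beta^f$ is obtained by simply replacing $\pi$ with $1$), and then plug in $f(\normtfidf{x})=\normtfidf{x}_j$. The one genuinely new ingredient is a \emph{rescaling identity} for the TF-IDF transform under LIME's sampling. Write $S$ for the random set of deleted indices, so that $z_\ell=\indic{\ell\notin S}$, and recall $H_S=\sum_{\ell\in S}\omega_\ell$. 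Since deleting a word removes all of its occurrences at once, the count $m_\ell$ of any surviving word is unchanged, so in Eq.~\eqref{eq:def-norm-tf-idf} the numerator of $\normtfidf{x}_\ell$ is still $m_\ell v_\ell$ while the denominator squared is $\sum_{k\notin S}m_k^2v_k^2=(1-H_S)\sum_{k}m_k^2v_k^2$. Hence
\[
\normtfidf{x}_\ell=\normtfidf{\xi}_\ell\,(1-H_S)^{-1/2}z_\ell,
\qquad\text{so}\qquad
f(\normtfidf{x})=\normtfidf{\xi}_j\,(1-H_S)^{-1/2}z_j .
\]
The event $\{S\text{ full}\}$, of probability $1/d$, is harmless: there $z_\ell=0$ for all $\ell$, and we read the right-hand side as $0$. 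Conditionally on $j\notin S$ and for $d\geq 2$, $S$ is never the full set, so $1-H_S\geq\omega_{\min}\defeq\min_\ell\omega_\ell>0$; in particular $E_j$ and all $E_{j,k}$ are finite, bounded by $\omega_{\min}^{-1/2}$, a quantity we treat as depending on $\xi$ only.

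Next I would evaluate the three expectations in Eq.~\eqref{eq:def-beta}. Using $\pi\equiv1$, $z_j^2=z_j$, and $\proba{j\notin S}=\alpha_1$, $\proba{j,k\notin S}=\alpha_2$ at $\nu=+\infty$, conditioning on the deleted set gives
\[
\expec{\pi f(\normtfidf{x})}=\expec{\pi z_j f(\normtfidf{x})}=\normtfidf{\xi}_j\,\alpha_1 E_j,
\qquad
\expec{\pi z_k f(\normtfidf{x})}=\normtfidf{\xi}_j\,\alpha_2 E_{j,k}\quad(k\neq j).
\]
Substituting into Proposition~\ref{prop:expression-of-beta} and collecting terms yields
\[
\left(\betainf^f\right)_j=\dencst_d^{-1}\normtfidf{\xi}_j\Bigl[(\sigma_1+\sigma_2)\alpha_1 E_j+\sigma_3\alpha_2\sum_{k\neq j}E_{j,k}\Bigr]
\]
and, for $k\neq j$,
\[
\left(\betainf^f\right)_k=\dencst_d^{-1}\normtfidf{\xi}_j\Bigl[(\sigma_1+\sigma_3)\alpha_1 E_j+\sigma_2\alpha_2 E_{j,k}+\sigma_3\alpha_2\sum_{\ell\neq j,k}E_{j,\ell}\Bigr].
\]

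It then remains to evaluate the numerical prefactors. At $\nu=+\infty$ Proposition~\ref{prop:alphas-computation-main} gives the exact values $\alpha_0=1$, $\alpha_1=\frac{d-1}{2d}$, $\alpha_2=\frac{d-2}{3d}$; feeding these into Definition~\ref{def:sigmas} and Eq.~\eqref{eq:def-densct} — first using $\sigma_1=-\alpha_1$ to collapse $\sigma_1+\sigma_2$ and $\sigma_1+\sigma_3$ before clearing denominators — turns every prefactor into an explicit rational function of $d$, which one expands:
\begin{align*}
\dencst_d^{-1}(\sigma_1+\sigma_2)\alpha_1 &= 3+\bigo{1/d}, & \dencst_d^{-1}\sigma_2\alpha_2 &= 2+\bigo{1/d}, \\
\dencst_d^{-1}\sigma_3\alpha_2 &= -2/d+\bigo{1/d^2}, & \dencst_d^{-1}(\sigma_1+\sigma_3)\alpha_1 &= \bigo{1/d}.
\end{align*}
Using $\lvert E_j\rvert,\lvert E_{j,k}\rvert\leq\omega_{\min}^{-1/2}$ and $\lvert\normtfidf{\xi}_j\rvert\leq1$, the first two prefactors reproduce $3E_j\normtfidf{\xi}_j$ resp.\ $2E_{j,k}\normtfidf{\xi}_j$ up to $\bigo{1/d}$, the last is absorbed into $\bigo{1/d}$, and in the $\sigma_3\alpha_2$ term the sum has $d-1$ (resp.\ $d-2$) bounded summands, so $\bigo{1/d^2}\cdot\bigo{d}=\bigo{1/d}$ and it contributes $-\frac{2}{d}\normtfidf{\xi}_j\sum E_{j,k}$ up to $\bigo{1/d}$. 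This gives the two displayed formulas; the $2E_{j,1}$ of the statement is the common value obtained after the further approximation $E_{j,k}=E_{j,1}+\bigo{1/d}$, which is harmless whenever no single word carries a constant fraction of the TF-IDF mass, i.e.\ $\max_\ell\omega_\ell=\bigo{1/d}$.

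The main obstacle is the error accounting rather than any isolated computation: it is not enough to know the prefactors to leading order, since several of them multiply $\bigo{d}$-sized sums $\sum_{k}E_{j,k}$, so the expansions of $\sigma_2,\sigma_3,\dencst_d$ must be carried one order further and the cancellations checked (the same $\sigma_2\alpha_2$-versus-$(d-\cdot)\sigma_3\alpha_2$ cancellation that governs the off-support coefficients in Proposition~\ref{prop:beta-computation-indicator-product-general-main}). A secondary, structural difficulty is that — exactly the ``TF-IDF changes radically under deletion'' phenomenon of the proof sketch — the factor $(1-H_S)^{-1/2}$ cannot be simplified away, so the answer is genuinely expressed through the $\xi$-dependent $E_j,E_{j,k}$ rather than universal constants, and the statement is only an approximation valid in the large-bandwidth regime.
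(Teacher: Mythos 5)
Your proposal is correct and follows essentially the same route as the paper: your rescaling identity $\normtfidf{x}_\ell=\normtfidf{\xi}_\ell(1-H_S)^{-1/2}z_\ell$ is exactly the paper's computation of the expected normalized TF-IDF (its Proposition on $\expec{\normtfidf{x}_j}$ and $\expec{z_k\normtfidf{x}_j}$), and combining that with the large-bandwidth limits of $\sigma_i/\dencst_d$ is precisely how the paper obtains the result. Your derivation (like the paper's own intermediate formulas) naturally yields $2E_{j,k}$ rather than the stated $2E_{j,1}$, and your reading of that discrepancy as a harmless further approximation is the right one; your explicit accounting of the $\bigo{1/d^2}\cdot\bigo{d}$ error in the summed terms is if anything more careful than the paper's one-line proof.
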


Proposition~\ref{prop:beta-computation-linear-main} is proved in the supplementary material. 
The main difficulty is to compute the expected value of $\normtfidf{x}_j$: this is the reason for the $E_j$ terms, for which we find an approximate expression as a function of the $\omega_k$s. 
Assuming that the $\omega_k$ are small, we can further this approximation and show that $E_j \approx 1.22$ and $E_{j,k}\approx 1.15$. 
In particular, \textbf{these expressions do not depend on~$j$ and~$k$}. 
Thus we can drastically simplify the statement of Proposition~\ref{prop:beta-computation-linear-main}: for any $k\neq j$, $\left(\beta_\infty^f\right)_k \approx 0$ and $\left(\beta_\infty^f\right)_j \approx 1.36 \normtfidf{\xi}_j$. 
We can now go back to our original goal, Eq.~\eqref{eq:def-linear-model-main}. 
By linearity, we deduce that 
\begin{equation}
\label{eq:simplified-betainf-linear-main}
\forall 1\leq j\leq d, \quad \left(\beta_\infty^f\right)_j \approx 1.36 \cdot \lambda_j \cdot  \normtfidf{\xi}_j
\, .
\end{equation}
In other words, up to a numerical constant and small error terms depending on $d$, \textbf{the explanation for a linear~$f$ is the TF-IDF value of the word multiplied by the coefficient of the linear model. }
We believe that this behavior is desirable for an interpretability method: large coefficients in the linear model should intuitively be associated to large interpretable coefficients. 
But at the same time the TF-IDF of the term is taken into account. 

We observe a very good match between theory and practice (see Figure~\ref{fig:linear}). 
Surprisingly, this is the case even though we assume that~$\nu$ is large in our derivations, whereas~$\nu$ is chosen by default in all our experiments.
We present experiments with other bandwidths in the supplementary.  

\begin{figure}
    \centering
\includegraphics[scale=0.24]{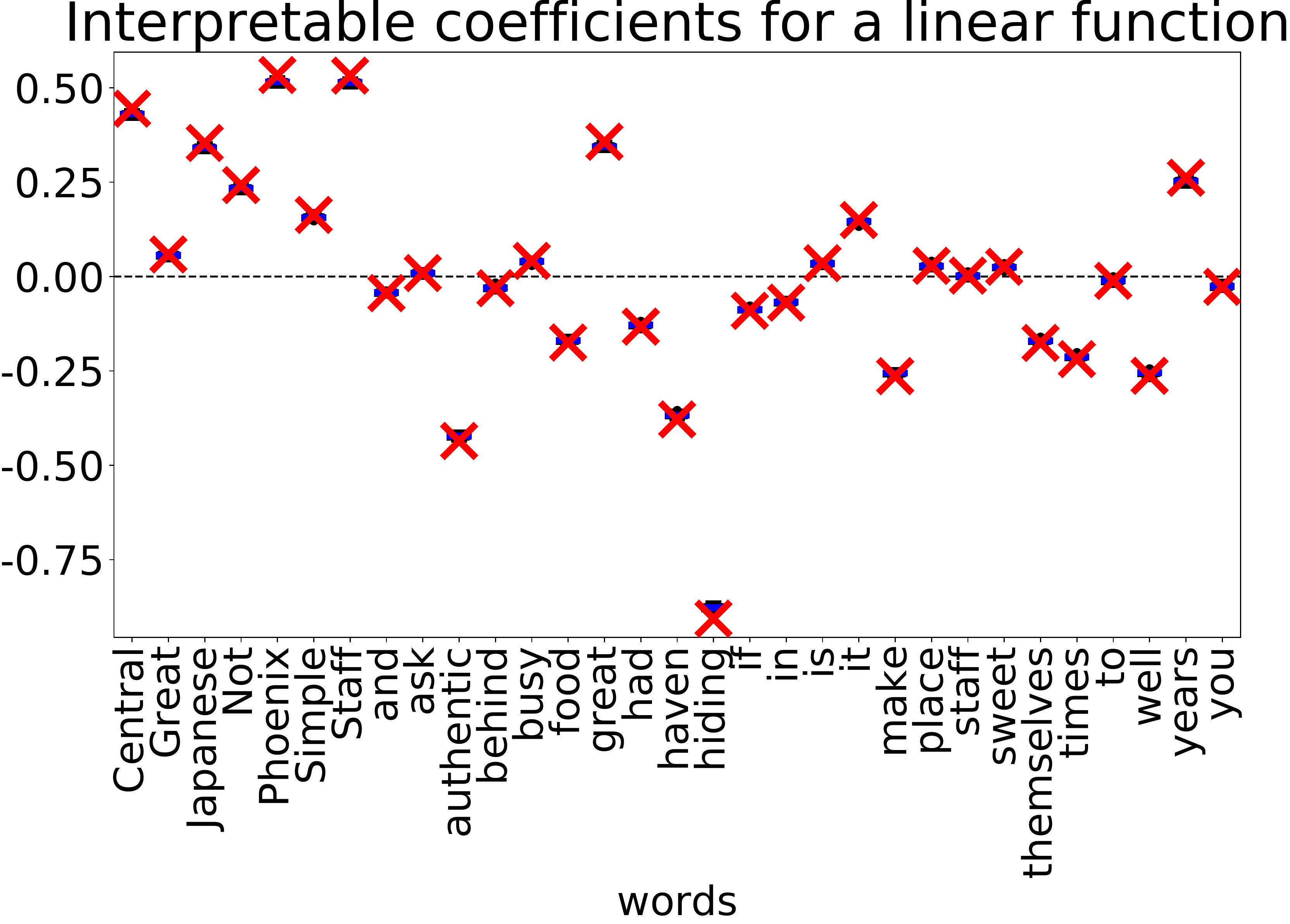}
    \caption{\label{fig:linear}Theory \emph{vs} practice for an arbitrary linear model. The black whisker boxes correspond to $100$ runs of LIME with default settings ($n=5000$ and $\nu=0.25$). The red crosses correspond to our theoretical predictions: $\beta_j\approx 1.36\lambda_j\normtfidf{\xi}_j$. Here $d=29$. }
\end{figure}

\section{Conclusion}

In this work we proposed the first theoretical analysis of LIME for text data. 
In particular, we provided a closed-form expression for the interpretable coefficients when the number of perturbed samples is large. 
Leveraging this expression, we exhibited some desirable behavior of LIME such as the linearity with respect to the model. 
In specific cases (simple decision trees and linear models), we derived more precise expression, showing that LIME outputs meaningful explanations in these cases. 

As future work, we want to tackle more complex models. 
More precisely, we think that it is possible to obtained approximate statements in the spirit of Eq.~\eqref{eq:simplified-betainf-linear-main} for models that are not linear. 

\subsubsection*{Acknowledgments}

This work was partly funded by the UCA DEP grant. 
The authors want to thank Andr\'e Galligo for getting them to know eachother. 

\bibliography{biblio}
\bibliographystyle{abbrvnat}

\newpage

\onecolumn
\aistatstitle{Supplementary material for the paper: \\ ``An Analysis of LIME for Text Data''}

\thispagestyle{empty}

\setcounter{section}{0}
\section*{Organization of the supplementary material}

In this supplementary material, we collect the proofs of all our theoretical results and additional experiments. 
We study the covariance matrix in Section~\ref{sec:study-of-sigma} and the responses in Section~\ref{sec:study-of-gamma}. 
The proof of our main results can be found in Section~\ref{sec:study-of-beta}. 
Combinatorial results needed for the approximation formulas obtained in the linear case are collected in Section~\ref{sec:subsets-sums}, while other technical results can be found in Section~\ref{sec:technical}. 
Finally, we present some additional experiments in Section~\ref{sec:experiments}.  

\paragraph{Notation.}
First, let us quickly recall our notation. 
We consider $x,z,\pi$ the generic random variables associated to the sampling of new examples by LIME. 
To put it plainly, the new examples $x_1,\ldots,x_n$ are i.i.d. samples from the random variable $x$. 
Also remember that we denote by $S\subseteq \{1,\ldots,d\}$ the random subset of indices removed by LIME when creating new samples for a text with $d$ distinct words. 
For any finite set $R$, we write $\card{R}$ the cardinality of $R$. 
Recall that we denote by $S$ the random set of indices deleted in the sampling. 
We write $\Expec_s$ the expectation conditionally to $\card{S}=s$. 
Since we consider vectors belonging to $\Reals^{d+1}$ with the zero-th coordinate corresponding to an intercept, we will often start the numbering at $0$ instead of $1$. 
For any matrix $M$, we set $\frobnorm{M}$ the Frobenius norm of $M$ and $\opnorm{M}$ the operator norm of $M$. 


\section{The study of $\Sigma$}
\label{sec:study-of-sigma}

We begin by the study of the covariance matrix. 
We show in Section~\ref{sec:computation-of-sigma} how to compute $\Sigma$. 
We will see how the $\alpha$ coefficients defined in the main paper appear. 
In Section~\ref{sec:computation-of-sigma-inverse}, we show that it is possible to invert $\Sigma$ in closed-form: it can be written in function of $\dencst_d$ and the $\sigma$ coefficients. 
We show how $\Sigmahat_n$ concentrates around $\Sigma$ in Section~\ref{sec:sigmahat-concentration}. 
Finally, Section~\ref{sec:control-opnorm} is dedicated to the control of $\opnorm{\Sigma^{-1}}$. 


\subsection{Computation of $\Sigma$}
\label{sec:computation-of-sigma}

In this section, we derived a closed-form expression for $\Sigma\defeq \smallexpec{\Sigmahat_n}$ as a function of $d$ and $\nu$.  
Recall that we defined $\Sigmahat = \frac{1}{n}Z^\top WZ$. 
By definition of $Z$ and $W$, we have
\[
\Sigmahat =
\begin{pmatrix}
\frac{1}{n}\sum_{i=1}^n \pi_i & \frac{1}{n}\sum_{i=1}^n \pi_i z_{i,1} & \cdots & \frac{1}{n}\sum_{i=1}^n \pi_i z_{i,d} \\ 
\frac{1}{n}\sum_{i=1}^n \pi_i z_{i,1}  & \frac{1}{n}\sum_{i=1}^n \pi_i z_{i,1}  & \cdots &  \frac{1}{n}\sum_{i=1}^n \pi_i z_{i,1}z_{i,d} \\ 
\vdots & \vdots  & \ddots & \vdots \\ 
\frac{1}{n}\sum_{i=1}^n \pi_i z_{i,d} & \frac{1}{n}\sum_{i=1}^n \pi_i z_{i,1}z_{i,d} & \cdots & \frac{1}{n}\sum_{i=1}^n \pi_i z_{i,d}
\end{pmatrix}
\in\Reals^{(d+1)\times (d+1)}
\, .
\]
Taking the expectation in the last display with respect to the sampling of new examples yields
\begin{equation}
\label{eq:def-sigma}
\Sigma =\begin{pmatrix}
\expec{\pi} & \expec{\pi z_1} & \cdots & \expec{\pi z_d} \\ 
 \expec{\pi z_1}  & \expec{\pi  z_1 }  & \cdots &  \expec{\pi z_1 z_d}  \\ 
\vdots & \vdots  & \ddots & \vdots \\ 
\expec{\pi z_d} & \expec{\pi z_1z_d } & \cdots & \expec{\pi z_d}
\end{pmatrix}
\in\Reals^{(d+1)\times (d+1)}
\, .
\end{equation}

An important remark is that $\expec{\pi z_j}$ does not depend on $j$. 
Indeed, there is no privileged index in the sampling of $S$ (the subset of removed indices). 
Thus we only have to look into  $\expec{\pi z_1}$ (say). 
For the same reason, $\expec{\pi z_jz_k}$ does not depend on the $2$-uple $(j,k)$, and we can limit our investigations to $\expec{\pi z_1z_2}$. 
This is the reason why we defined $\alpha_0 = \expec{\pi}$ and, for any $1\leq p\leq d$, 
\begin{equation}
\label{eq:def-alphas}
\alpha_p = \expec{\pi \cdot z_1 \cdots z_p}
\end{equation}
in the main paper. 
We recognize the definition of the $\alpha_p$s in Eq.~\eqref{eq:def-sigma} and we write
\[
\Sigma_{j,k} = 
\begin{cases}
\alpha_0 &\text{ if } j=k=0, \\
\alpha_1 &\text{ if } j=0 \text{ and } k> 0 \text{ or } j> 0 \text{ and } k=0 \text{ or } j=k> 0, \\
\alpha_2 &\text{ otherwise. }
\end{cases}
\]
As promised, we can be more explicit regarding the $\alpha$ coefficients. 
Recall that we defined the mapping 
\begin{align}
\psi\colon [0,1]& \longrightarrow \Reals \label{eq:def-psi} \\
t &\longmapsto \exp{-(1-\sqrt{1-t})^2/(2\nu^2)} \notag 
\, .
\end{align}
It is a decreasing mapping (see Figure~\ref{fig:psi-t}). 
With this notation in hand, we have the following expression for the $\alpha$ coefficients (this is Proposition~1 in the paper):

\begin{figure}
    \centering
    \includegraphics[scale=0.16]{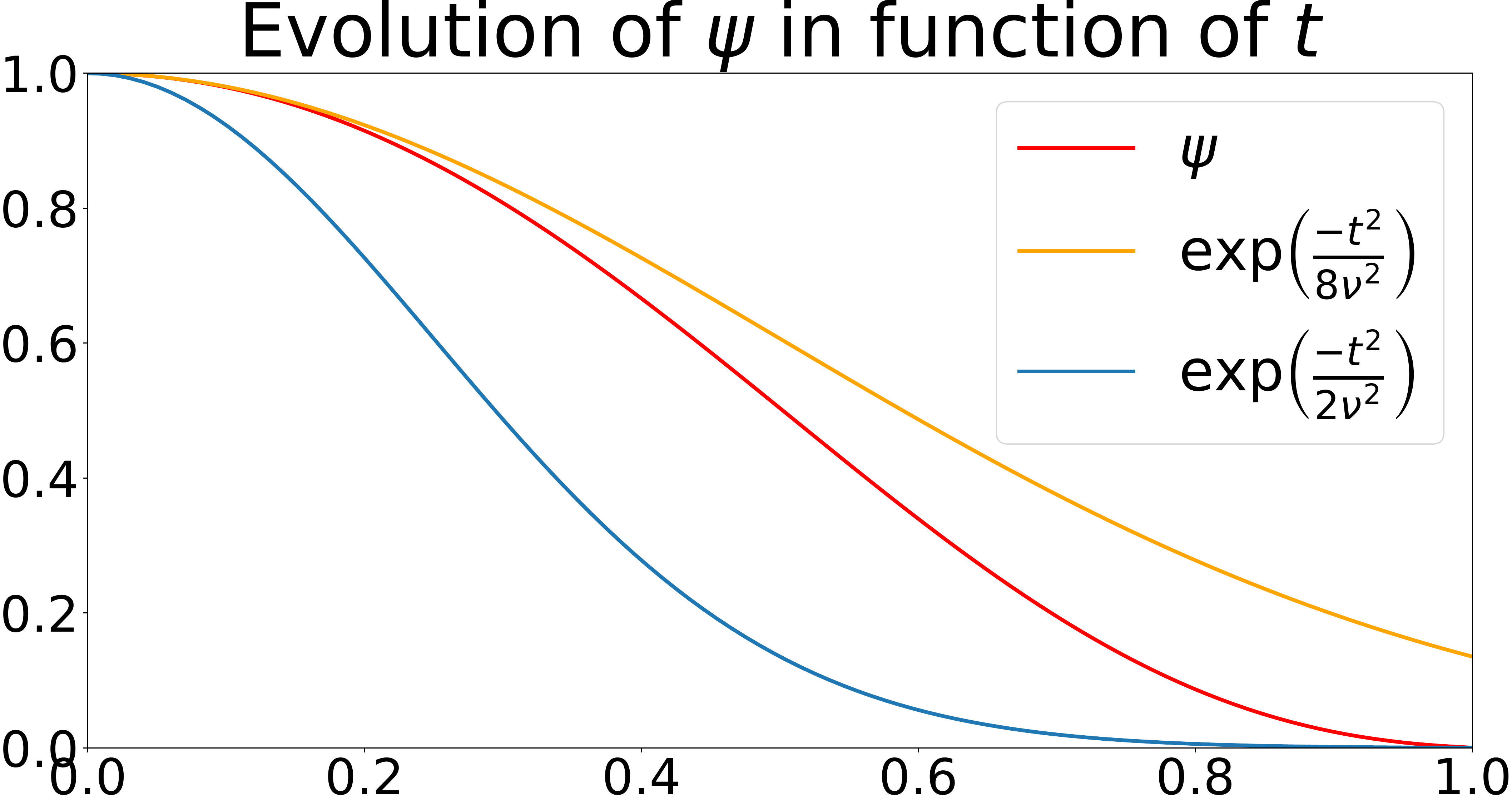}
    \caption{\label{fig:psi-t}The function $\psi$ defined by Eq.~\eqref{eq:def-psi} with bandwidth parameter $\nu=0.25$. In orange (resp. blue), one can see the upper (resp. lower) bound given by Eq.~\eqref{eq:psi-precise-bound}. }
\end{figure}

\begin{proposition}[Computation of the $\alpha$ coefficients]
\label{prop:alphas-computation}
For any $d\geq 1$, $\nu >0$, and $p\geq 0$, it holds that
\[
\alpha_p = \frac{1}{d} \sum_{s=1}^d \prod_{k=0}^{p-1} \frac{d-s-k}{d-k} \psi\left(\frac{s}{d}\right)
\, .
\]
\end{proposition}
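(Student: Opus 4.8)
The plan is to compute $\alpha_p = \expec{\pi \cdot z_1 \cdots z_p}$ directly by conditioning on the number of deleted words $s = \card{S}$. Recall from Section~\ref{sec:weights} that the weight $\pi$ is a deterministic function of $s$, namely $\pi = \psi(s/d)$, where $\psi$ is defined in Eq.~\eqref{eq:def-psi}. Recall also that, conditionally on $\card{S} = s$, the set $S$ is a uniformly random subset of $\{1,\ldots,d\}$ of size $s$. The first step is therefore to write
\[
\alpha_p = \expec{\pi \cdot z_1 \cdots z_p} = \frac{1}{d}\sum_{s=1}^d \psi\!\left(\frac{s}{d}\right) \expec{z_1 \cdots z_p \;\middle|\; \card{S} = s}
\, ,
\]
using that $s = \card{S}$ is uniform on $\{1,\ldots,d\}$ by the LIME sampling scheme, and that $\pi$ factors out of the conditional expectation once we condition on $s$.

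The second step is to evaluate $\expec{z_1 \cdots z_p \mid \card{S} = s}$. Since $z_j = \indic{\word_j \in x} = \indic{j \notin S}$, the product $z_1 \cdots z_p$ is the indicator that none of the indices $1,\ldots,p$ belong to $S$. When $\card{S} = s$ and $S$ is a uniformly random $s$-subset of a $d$-element set, the probability that $S$ avoids a fixed set of $p$ indices is $\binom{d-p}{s}\big/\binom{d}{s}$, which (after cancelling factorials) equals $\prod_{k=0}^{p-1}\frac{d-s-k}{d-k}$. One should note the edge cases: this quantity is naturally $0$ when $s > d-p$ (the product contains a zero factor), consistent with the fact that more than $d-p$ deletions forces at least one of the first $p$ words out; and when $p=0$ the empty product is $1$, recovering $\alpha_0 = \frac{1}{d}\sum_{s=1}^d \psi(s/d) = \expec{\pi}$.

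Substituting this conditional expectation into the sum from the first step yields
\[
\alpha_p = \frac{1}{d}\sum_{s=1}^d \prod_{k=0}^{p-1}\frac{d-s-k}{d-k}\,\psi\!\left(\frac{s}{d}\right)
\, ,
\]
which is exactly the claimed formula. There is no real obstacle here — the only point requiring a little care is the combinatorial identity for the hypergeometric-type probability and keeping track of the degenerate ranges of $s$; both are routine. The proposition is essentially a bookkeeping computation made possible by the two structural facts that $\pi$ depends only on $\card{S}$ and that the indices play symmetric roles in the sampling.
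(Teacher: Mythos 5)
Your proof is correct and follows essentially the same route as the paper's: condition on the cardinality $s$ of the deleted set (uniform on $\{1,\ldots,d\}$, so each event has probability $1/d$), use the fact that $\pi=\psi(s/d)$ is deterministic given $s$, and compute the conditional probability that $S$ avoids the $p$ fixed indices as $\binom{d-p}{s}\big/\binom{d}{s}=\prod_{k=0}^{p-1}\frac{d-s-k}{d-k}$, which is exactly the content of the paper's Lemma on conditional probabilities of containing given words. Your additional remarks on the degenerate cases ($s>d-p$ and $p=0$) are correct and are harmless extras not spelled out in the paper.
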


In particular, the first three $\alpha$ coefficients can be written
\[
\alpha_0 = \frac{1}{d} \sum_{s=1}^d \psi\left(\frac{s}{d}\right) \, ,
\quad 
\alpha_1 = \frac{1}{d} \sum_{s=1}^d \left(1-\frac{s}{d}\right)\psi\left(\frac{s}{d}\right)
\, ,
\quad \text{ and } \quad
\alpha_2 = \frac{1}{d} \sum_{s=1}^d \left(1-\frac{s}{d}\right)\left(1-\frac{s}{d-1}\right)\psi\left(\frac{s}{d}\right)
\, .
\]

\begin{proof}
The idea of the proof is to use the law of total expectation with respect to the collection of events $\{\card{S}=s\}$ for $s\in\{1,\ldots,d\}$. 
Since $\proba{\card{S}=s}=\frac{1}{d}$ for any $1\leq s\leq d$, all that is left to compute is the expectation of $\pi z_1\cdots z_p$ conditionally to $\card{S}=s$. 
According to the remark in Section~2.3 of the main paper, $\pi = \psi(s/d)$ conditionally to $\{\card{S}=s\}$.
We can conclude since, according to Lemma~\ref{lemma:proba-containing-cond}, 
\[
\probaunder{\word_1\in x,\ldots,\word_p\in x}{s} = \frac{(d-s)(d-s-1)\cdots (d-s-p+1)}{d(d-1)\cdots (d-p+1)}
\, .
\]
\end{proof}

\begin{figure}
\centering
\includegraphics[scale=0.11]{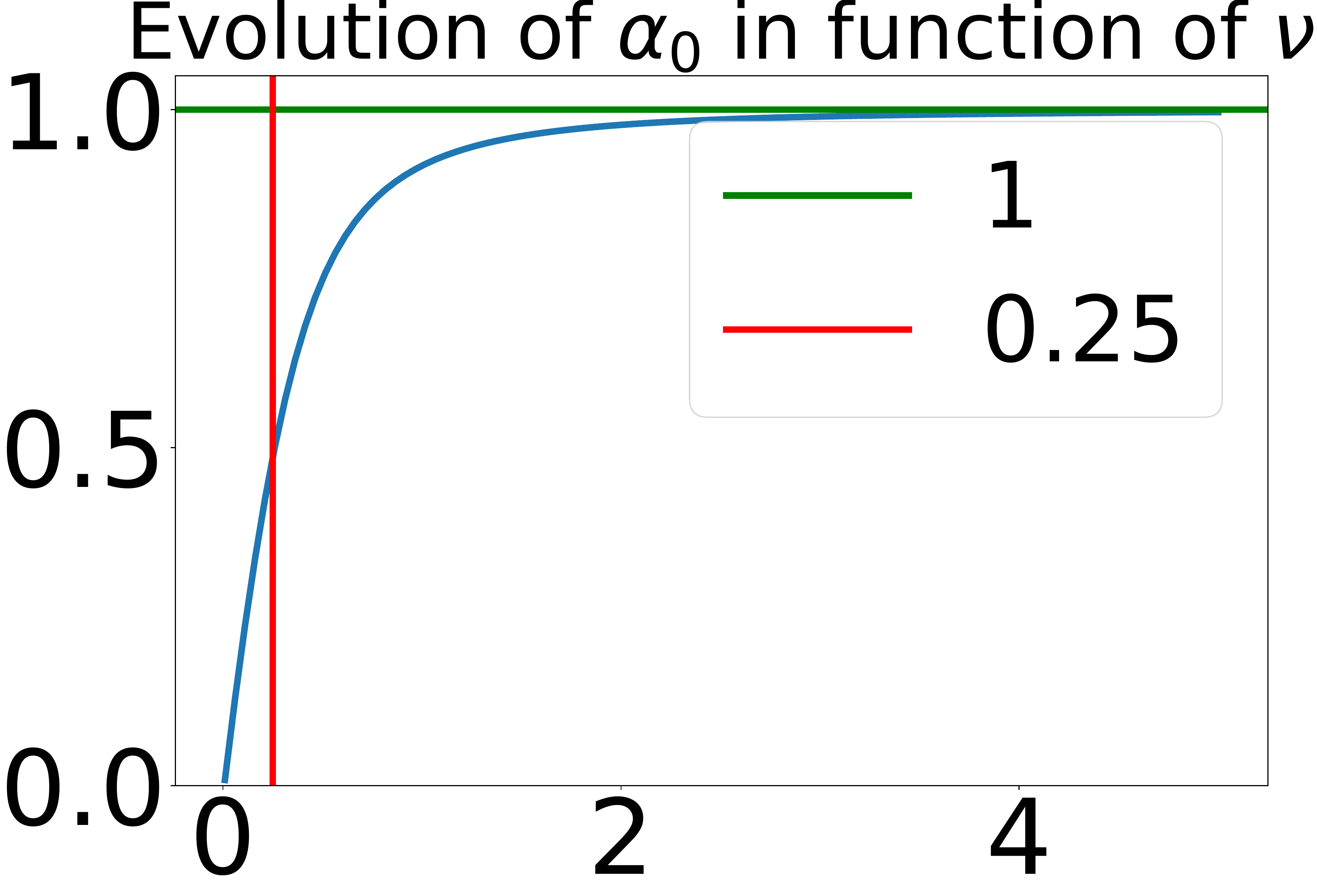} 
\includegraphics[scale=0.11]{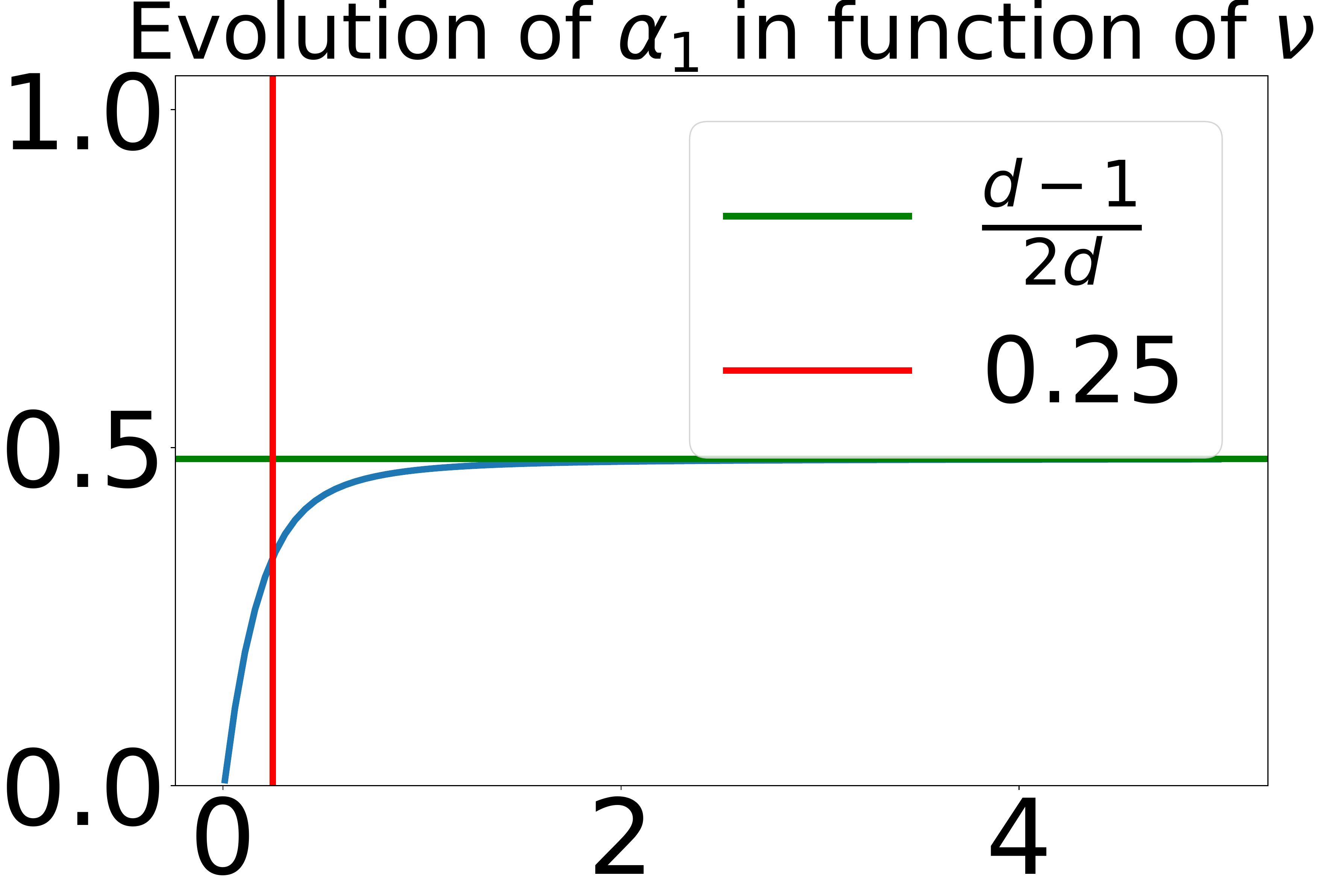}
\includegraphics[scale=0.11]{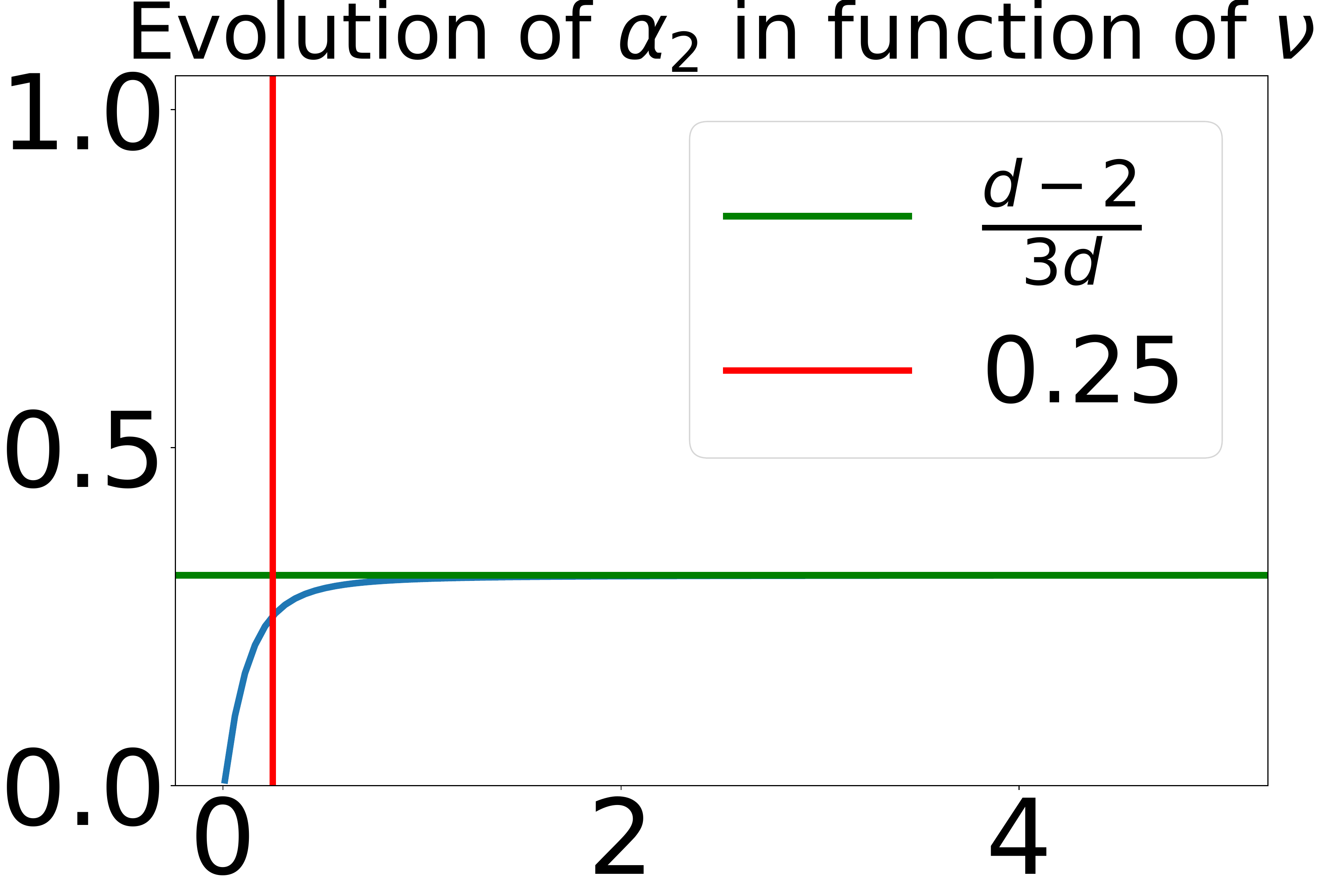}
\includegraphics[scale=0.11]{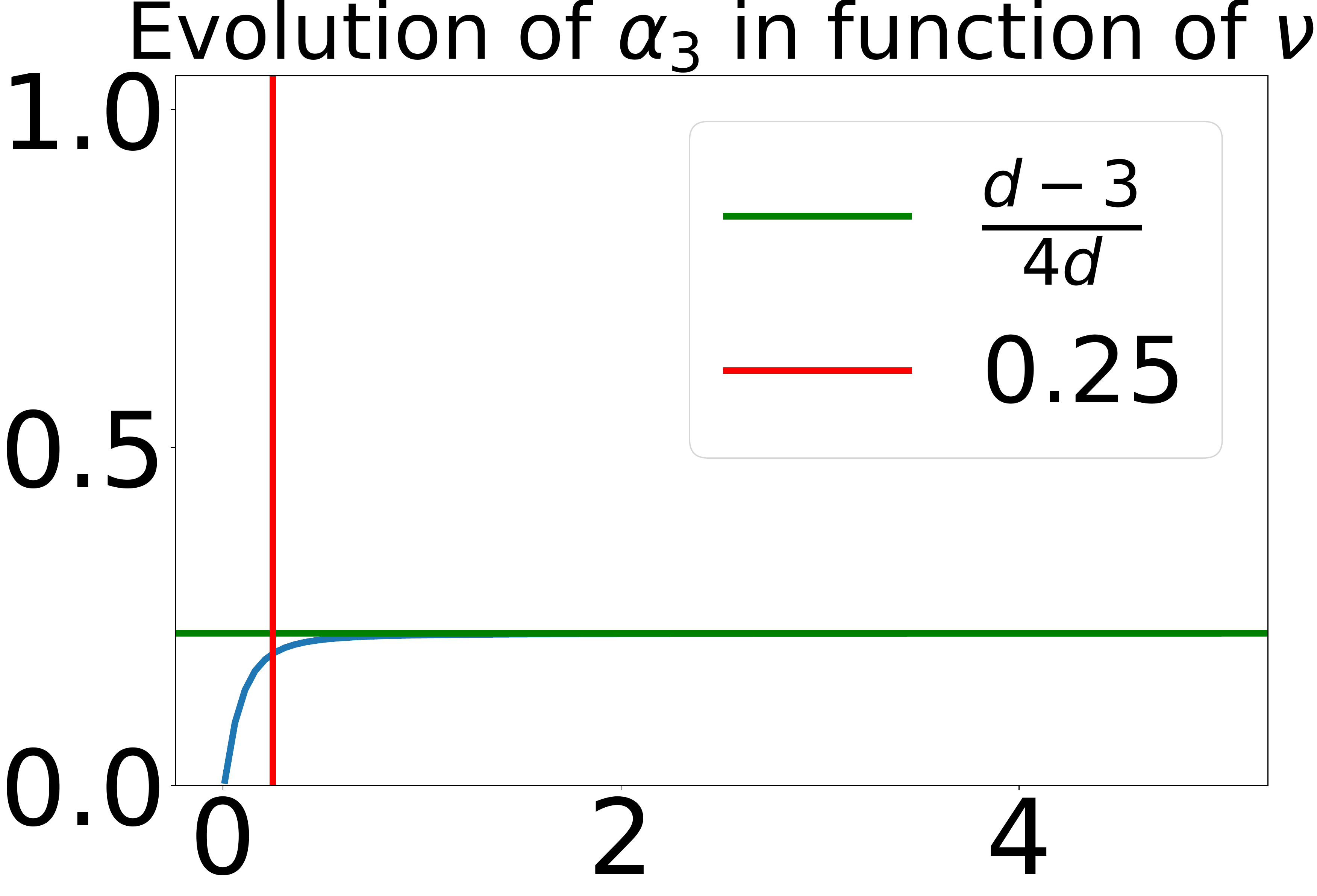}
\caption{\label{fig:alphas-bandwidth-dependency}Behavior of the first $\alpha$ coefficients with respect to the bandwidth parameter $\nu$. The red vertical lines mark the default bandwidth choice ($\nu=0.25$). The green horizontal line denotes the limits for large $d$ given by Corollary~\ref{cor:alphas-approx}.}
\end{figure}

It is important to notice that, when $\nu \to +\infty$, $\psi(t)\to 0$ for any $t\in (0,1]$. 
As a consequence, in the large bandwidth regime, the $\psi(s/d)$ weights are arbitrarily close to one. 
We demonstrate this effect in Figure~\ref{fig:alphas-bandwidth-dependency}. 
In this situation, the $\alpha$ coefficients take a simpler form. 

\begin{corollary}[Large bandwidth approximation of $\alpha$ coefficients]
\label{cor:alphas-approx}
For any $0\leq p\leq d$, it holds that
\[
\lim_{\nu\to +\infty} \alpha_p = \frac{d-p}{(p+1)d}
\, .
\]
\end{corollary}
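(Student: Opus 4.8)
The plan is to start from the exact formula for $\alpha_p$ given by Proposition~\ref{prop:alphas-computation}, namely
\[
\alpha_p = \frac{1}{d} \sum_{s=1}^d \prod_{k=0}^{p-1} \frac{d-s-k}{d-k} \psi\left(\frac{s}{d}\right)
\, ,
\]
and pass to the limit $\nu\to+\infty$ term by term in the (finite) sum over $s$. The key observation, already noted in the text, is that $\psi(t) = \exp{-(1-\sqrt{1-t})^2/(2\nu^2)} \to 1$ for every fixed $t\in[0,1]$ as $\nu\to+\infty$, and that $0\le\psi(s/d)\le 1$, so there is no subtlety in exchanging limit and the finite sum. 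Hence
\[
\lim_{\nu\to+\infty}\alpha_p = \frac{1}{d}\sum_{s=1}^d \prod_{k=0}^{p-1}\frac{d-s-k}{d-k}
\, .
\]

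The remaining work is purely combinatorial: evaluate $\frac{1}{d}\sum_{s=1}^d \prod_{k=0}^{p-1}\frac{d-s-k}{d-k}$ and show it equals $\frac{d-p}{(p+1)d}$. First I would pull the $s$-independent denominator out: $\prod_{k=0}^{p-1}\frac{1}{d-k} = \frac{(d-p)!}{d!}$, so the sum becomes $\frac{(d-p)!}{d!}\sum_{s=1}^d (d-s)(d-s-1)\cdots(d-s-p+1)$. Reindexing with $r = d-s$ (so $r$ runs from $0$ to $d-1$), the inner sum is $\sum_{r=0}^{d-1} r(r-1)\cdots(r-p+1) = p!\sum_{r=0}^{d-1}\binom{r}{p}$, where terms with $r<p$ vanish automatically. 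The hockey-stick identity gives $\sum_{r=0}^{d-1}\binom{r}{p} = \binom{d}{p+1}$. Putting the pieces together,
\[
\frac{1}{d}\cdot\frac{(d-p)!}{d!}\cdot p!\cdot\binom{d}{p+1} = \frac{1}{d}\cdot\frac{(d-p)!\,p!}{d!}\cdot\frac{d!}{(p+1)!(d-p-1)!} = \frac{1}{d}\cdot\frac{d-p}{p+1}
\, ,
\]
which is exactly the claimed value $\frac{d-p}{(p+1)d}$.

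I expect essentially no obstacle here: the interchange of limit and finite sum is trivial by dominated convergence (or simply because the sum has finitely many terms), and the combinatorial identity is the standard hockey-stick sum. The only place to be mildly careful is the edge case $p = 0$, where the empty product is $1$ and the formula correctly returns $\frac{1}{d}\sum_{s=1}^d 1 = 1 = \frac{d-0}{1\cdot d}$, and the case $p = d$, where the product $\prod_{k=0}^{d-1}\frac{d-s-k}{d-k}$ vanishes for every $s\ge 1$ (since the factor at $k=d-s$ is zero), giving $\alpha_d \to 0 = \frac{d-d}{(d+1)d}$, consistent with the formula. It may also be worth recording the alternative derivation via the sampling interpretation: in the large-bandwidth limit, $\alpha_p$ is the probability that $p$ fixed distinct words all survive the deletion, i.e. $\frac{1}{d}\sum_{s=1}^d \binom{d-s}{p}/\binom{d}{p}$, and the same hockey-stick computation closes it; but the direct algebraic route above is the most economical.
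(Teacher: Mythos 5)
Your proof is correct and follows essentially the same route as the paper: the paper also passes to the limit $\psi(s/d)\to 1$ in the finite sum and then invokes Lemma~\ref{lemma:proba-containing}, whose proof is precisely the hockey-stick computation you carry out inline (your ``alternative derivation via the sampling interpretation'' is in fact the paper's stated argument). No gaps.
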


We report these approximate values in Figure~\ref{fig:alphas-bandwidth-dependency}. 
In particular, when both $\nu$ and $d$ are large, we can see that $\alpha_p\approx 1/(p+1)$. 
Thus $\alpha_0\approx 1$, $\alpha_1\approx \frac{1}{2}$, and $\alpha_2\approx \frac{1}{3}$. 

\begin{proof}
When $\nu\to +\infty$, we have $\psi(s/d)\to 1$ and we can conclude directly by using Lemma~\ref{lemma:proba-containing}. 
\end{proof}

Notice that we can be slightly more precise than Corollary~\ref{cor:alphas-approx}. 
Indeed, $\psi$ is decreasing on $[0,1]$, thus for any $t\in [0,1]$, $\exp{-1/(2\nu^2)}\leq \psi(t)\leq 1$. 
Therefore we can present some efficient bounds for the $\alpha$ coefficients when $\nu$ is large. 

\begin{corollary}[Bounds on the $\alpha$ coefficients]
\label{cor:alphas-bounds}
For any $0\leq p\leq d$, it holds that 
\[
\frac{d-p}{(p+1)d} \exps{\frac{-1}{2\nu^2}} \leq \alpha_p \leq \frac{d-p}{(p+1)d}
\, .
\]
\end{corollary}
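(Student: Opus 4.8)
The plan is to start from the closed-form expression of Proposition~\ref{prop:alphas-computation} and bound the weights $\psi(s/d)$ uniformly in $s$. Recall that
\[
\alpha_p = \frac{1}{d}\sum_{s=1}^d \left(\prod_{k=0}^{p-1}\frac{d-s-k}{d-k}\right)\psi\!\left(\frac{s}{d}\right) .
\]
The first thing I would note is that each product appearing here is nonnegative: the denominators $d-k$ for $0\le k\le p-1\le d-1$ are all positive, and the numerators $d-s-k$ decrease by one as $k$ increases, so if any numerator vanishes it does so strictly before any numerator becomes negative; hence the product either is a product of positive terms or contains a zero factor, and in both cases it is $\ge 0$. (Alternatively, this product equals $\probaunder{\word_1\in x,\ldots,\word_p\in x}{s}$ by Lemma~\ref{lemma:proba-containing-cond}, which is manifestly nonnegative.)

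Next I would invoke the monotonicity of $\psi$. Since $\psi$ is decreasing on $[0,1]$ (see Figure~\ref{fig:psi-t}) and $s/d\in(0,1]$ for $1\le s\le d$, we have $\psi(1)\le \psi(s/d)\le \psi(0)$, i.e. $\exps{-1/(2\nu^2)}\le \psi(s/d)\le 1$. Multiplying these inequalities by the nonnegative products and summing over $s$ gives
\[
\exps{\frac{-1}{2\nu^2}}\cdot\frac{1}{d}\sum_{s=1}^d\prod_{k=0}^{p-1}\frac{d-s-k}{d-k}\ \le\ \alpha_p\ \le\ \frac{1}{d}\sum_{s=1}^d\prod_{k=0}^{p-1}\frac{d-s-k}{d-k} .
\]

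Finally, I would identify the remaining $\nu$-free sum. It is exactly the quantity obtained by setting $\psi\equiv 1$ in Proposition~\ref{prop:alphas-computation}, which is the limiting value of $\alpha_p$ as $\nu\to+\infty$ (the sum being finite, the pointwise convergence $\psi(s/d)\to 1$ passes through); by Corollary~\ref{cor:alphas-approx} (equivalently, by the combinatorial identity of Lemma~\ref{lemma:proba-containing}) this equals $\frac{d-p}{(p+1)d}$. Substituting into the two-sided bound yields the claim. I do not anticipate any real obstacle: the only step deserving care is the sign of the products, which is why I single it out at the start; everything else is monotonicity of $\psi$ together with a combinatorial identity already established.
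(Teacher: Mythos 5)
Your proposal is correct and follows the same route the paper takes (the paper states this corollary without a formal proof, relying exactly on the monotonicity bound $\exps{-1/(2\nu^2)}\leq\psi(t)\leq 1$ applied to the closed form of Proposition~\ref{prop:alphas-computation}, with the $\nu$-free sum identified via Corollary~\ref{cor:alphas-approx}). Your explicit verification that each product $\prod_{k}\frac{d-s-k}{d-k}$ is nonnegative is a detail the paper leaves implicit, and it is correctly argued.
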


One can further show that, for any $0\leq t\leq 1$, 
\begin{equation}
\label{eq:psi-precise-bound}
\exp{\frac{-t^2}{2\nu^2}} \leq \psi(t) \leq \exp{\frac{-t^2}{8\nu^2}}
\, .
\end{equation}
Using Eq.~\eqref{eq:psi-precise-bound} together with the series-integral comparison theorem would yield very accurate bounds for the $\alpha$ coefficients and related quantities, but we will not follow that road.


\subsection{Computation of $\Sigma^{-1}$}
\label{sec:computation-of-sigma-inverse}

In this section, we present a closed-form formula for the matrix inverse of $\Sigma$ as a function of $d$ and $\nu$.

\begin{minipage}{0.7\textwidth}
\begin{proposition}[Computation of $\Sigma^{-1}$]
\label{prop:sigma-inverse-computation}
For any $d\geq 1$ and $\nu >0$, recall that we defined
\[
\dencst_d = (d-1)\alpha_0\alpha_2 -d\alpha_1^2 + \alpha_0\alpha_1
\, .
\]
Assume that $\dencst_d\neq 0$ and $\alpha_1\neq \alpha_2$. 
Define $\sigma_0 \defeq  (d-1)\alpha_2 + \alpha_1$ and recall that we set 
\[
\begin{cases}
\sigma_1 &= -\alpha_1
\, , \\
\sigma_2 &= \frac{(d-2)\alpha_0 \alpha_2 - (d-1)\alpha_1^2 + \alpha_0\alpha_1}{\alpha_1-\alpha_2}\, , \\
\sigma_3 &= \frac{\alpha_1^2-\alpha_0\alpha_2}{\alpha_1-\alpha_2 }
\, .
\end{cases}
\]
Then it holds that
\begin{equation}
\label{eq:sigma-inverse-computation}
 \Sigma^{-1} = 
 \frac{1}{\dencst_d}
\begin{pmatrix}
   \sigma_0 &  \sigma_1 & \sigma_1  &\cdots  & \sigma_1  \\ 
   
   \sigma_1 & \sigma_2 & \sigma_3 & \cdots  &  \sigma_3 \\
   
   \sigma_1 &  \sigma_3 & \sigma_2   &  \ddots  &  \vdots  \\ 
   
   \vdots  &  \vdots &  \ddots &  \ddots  & \sigma_3 \\
   
   \sigma_1 & \sigma_3 & \cdots & \sigma_3 &  \sigma_2  \\
\end{pmatrix}
\in\Reals^{(d+1)\times (d+1)}
\, .
\end{equation}
\end{proposition}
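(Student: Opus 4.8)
The plan is to verify directly that the product of $\Sigma$ with the matrix on the right-hand side of Eq.~\eqref{eq:sigma-inverse-computation} equals the identity $\Identity_{d+1}$, exploiting the highly structured form of both matrices. First I would partition the index set into $\{0\}$ and $\{1,\dots,d\}$ and use the case analysis on $\Sigma_{j,k}$ recorded just after Eq.~\eqref{eq:def-sigma} to write
\[
\Sigma = \begin{pmatrix} \alpha_0 & \alpha_1\Indic^\top \\ \alpha_1\Indic & (\alpha_1-\alpha_2)\Identity_d + \alpha_2 J_d \end{pmatrix},
\qquad
M \defeq \frac{1}{\dencst_d}\begin{pmatrix} \sigma_0 & \sigma_1\Indic^\top \\ \sigma_1\Indic & (\sigma_2-\sigma_3)\Identity_d + \sigma_3 J_d \end{pmatrix},
\]
where $\Indic\in\Reals^d$ is the all-ones vector, $\Identity_d$ the identity, and $J_d \defeq \Indic\Indic^\top$ the all-ones matrix. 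Both matrices thus live in the small commutative algebra spanned by the blocks built from $\Identity_d$, $J_d$, $\Indic$, $\Indic^\top$ and the scalar entry.

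Next I would compute the four blocks of $\Sigma M$ using only the identities $\Indic^\top\Indic = d$, $J_d\Indic = d\Indic$, $J_d^2 = d J_d$, and $\Identity_d J_d = J_d \Identity_d = J_d$. Each block reduces to a linear combination of $\Identity_d$, $J_d$, $\Indic$, $\Indic^\top$ and $1$; matching against $\Identity_{d+1}$ yields a handful of scalar equations: the $(0,0)$ block gives $\alpha_0\sigma_0 + d\alpha_1\sigma_1 = \dencst_d$; the border blocks give $\alpha_0\sigma_1 + \alpha_1(\sigma_2-\sigma_3) + d\alpha_1\sigma_3 = 0$ and $\alpha_1\sigma_0 + (\alpha_1 + (d-1)\alpha_2)\sigma_1 = 0$; and the lower-right block, after regrouping as $a\Identity_d + bJ_d$, gives $(\alpha_1-\alpha_2)(\sigma_2-\sigma_3) = \dencst_d$ together with a second relation forcing the $J_d$-coefficient $b$ to vanish. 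The proof then amounts to checking that the explicit values of $\sigma_0,\sigma_1,\sigma_2,\sigma_3$ and $\dencst_d$ from the statement satisfy this system — a direct substitution using the definitions of the $\sigma$'s. The hypotheses $\dencst_d\neq 0$ and $\alpha_1\neq\alpha_2$ are precisely what makes $M$ well-defined and guarantees invertibility.

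I expect the main obstacle to be the bookkeeping in the lower-right block: one must expand $\alpha_1\sigma_1 J_d + \bigl((\alpha_1-\alpha_2)\Identity_d + \alpha_2 J_d\bigr)\bigl((\sigma_2-\sigma_3)\Identity_d + \sigma_3 J_d\bigr)$, collect the coefficients of $\Identity_d$ and of $J_d$, and verify simultaneously that the first equals $\dencst_d$ and the second equals $0$; the vanishing of the $J_d$-term is where the exact form of $\sigma_2$ and $\sigma_3$, and in particular the common denominator $\alpha_1-\alpha_2$, is used. As a sanity check and to motivate the invertibility conditions cleanly, I would also note that $\Sigma$ leaves invariant the two-dimensional subspace spanned by $e_0$ and $(0,\Indic^\top)^\top$ and acts as multiplication by $\alpha_1-\alpha_2$ on its orthogonal complement inside $\{e_0\}^\perp$; the restriction to the two-dimensional subspace has matrix $\bigl(\begin{smallmatrix}\alpha_0 & d\alpha_1\\ \alpha_1 & \alpha_1+(d-1)\alpha_2\end{smallmatrix}\bigr)$, whose determinant is exactly $\dencst_d$. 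Hence $\Sigma$ is invertible if and only if $\dencst_d\neq 0$ and $\alpha_1\neq\alpha_2$, and this spectral decomposition already predicts that $\Sigma^{-1}$ has the same block structure as $\Sigma$, which is why the ansatz $M$ is the right one to test.
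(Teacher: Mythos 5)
Your proposal is correct and follows essentially the same route as the paper, which simply observes that $\Sigma$ is a block matrix and that one can check directly that $\Sigma\cdot\Sigma^{-1}=\Identity_{d+1}$; your block decomposition into the algebra generated by $\Identity_d$, $J_d$, and $\Indic$ is exactly the bookkeeping that makes this one-line verification rigorous, and the scalar identities you list do check out against the stated $\sigma$'s and $\dencst_d$. The closing spectral remark (that $\Sigma$ acts as $\alpha_1-\alpha_2$ on $\Indic^\perp$ and has determinant $\dencst_d$ on the complementary two-dimensional invariant subspace) is a nice bonus the paper omits, since it shows the invertibility hypotheses are not just sufficient but necessary.
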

\end{minipage}
\begin{minipage}{0.25\textwidth}
    \begin{center}
    \includegraphics[scale=0.11]{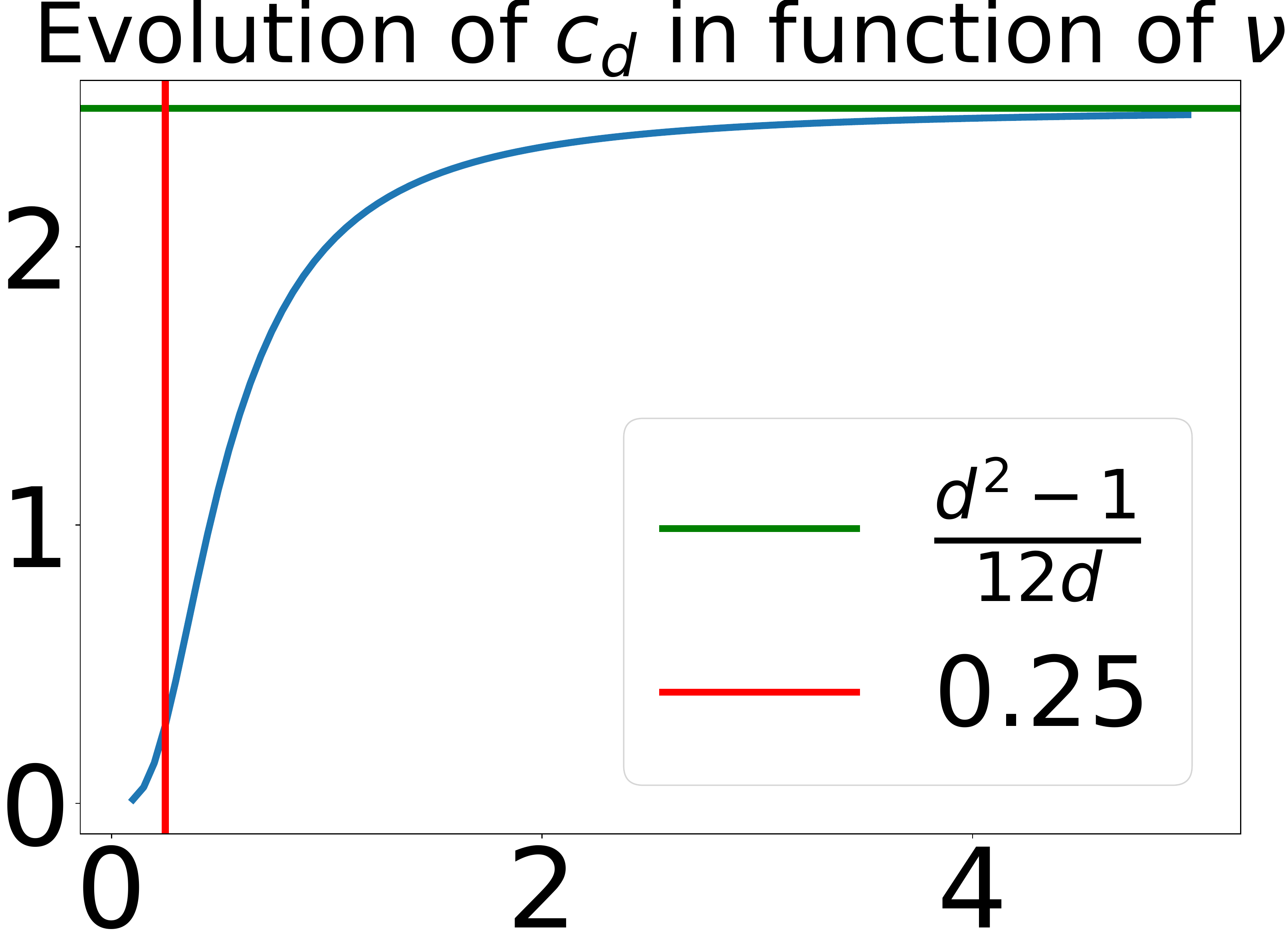}
    \end{center}
    \captionof{figure}{\label{fig:dencst}Evolution of the normalization constant $\dencst_d$ as a function of the bandwidth for $d=30$. In red, the default bandwidth $\nu=0.25$, in green the limit for large bandwidth given by Corollary~\ref{cor:approximate-sigma-inverse}.}
\end{minipage}

We display the evolution of the  $\sigma_i/\dencst_d$  coefficients with respect to $\nu$ in Figure~\ref{fig:sigmas}. 

\begin{figure}
    \centering
    \includegraphics[scale=0.11]{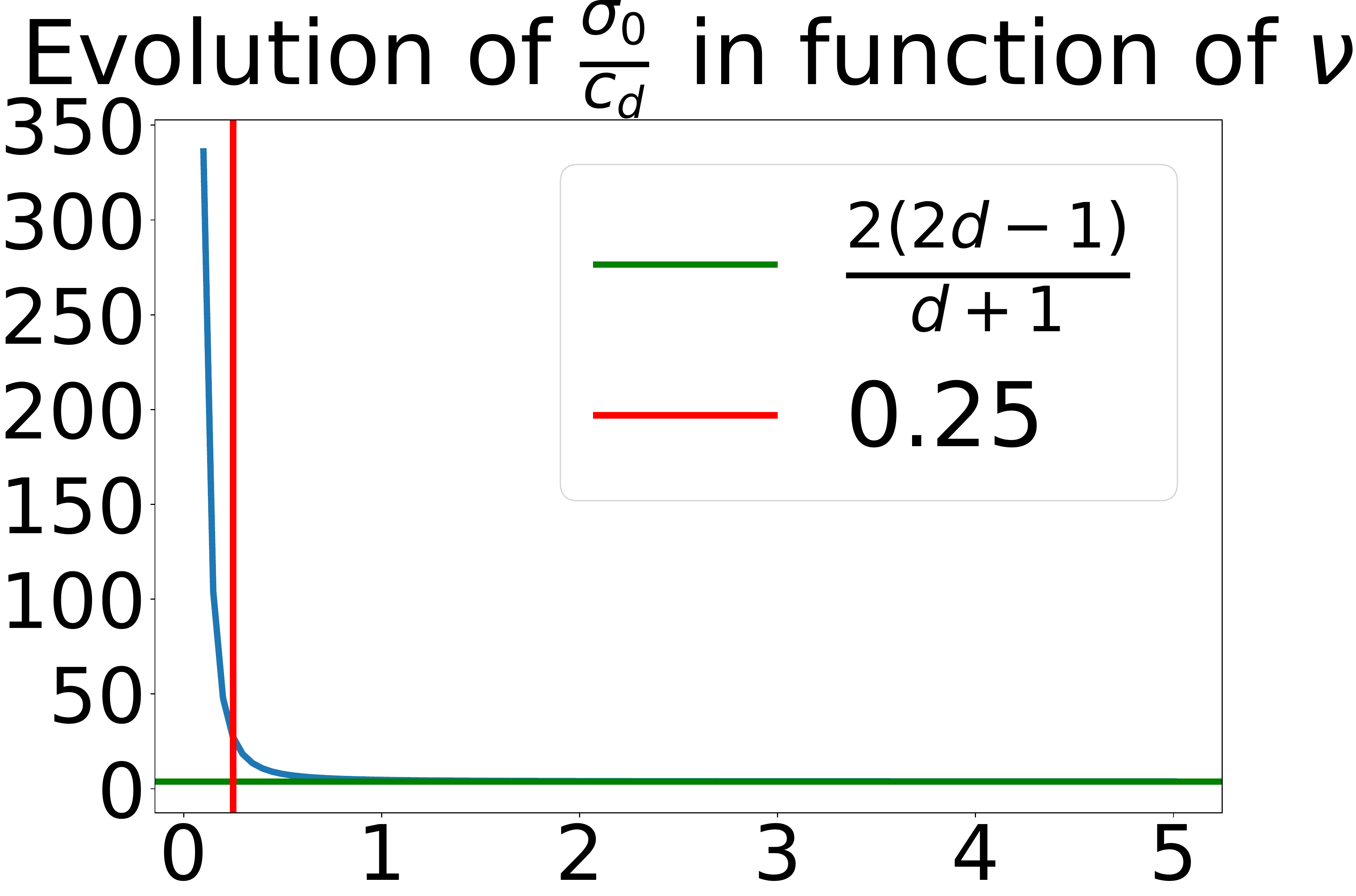} 
    \includegraphics[scale=0.11]{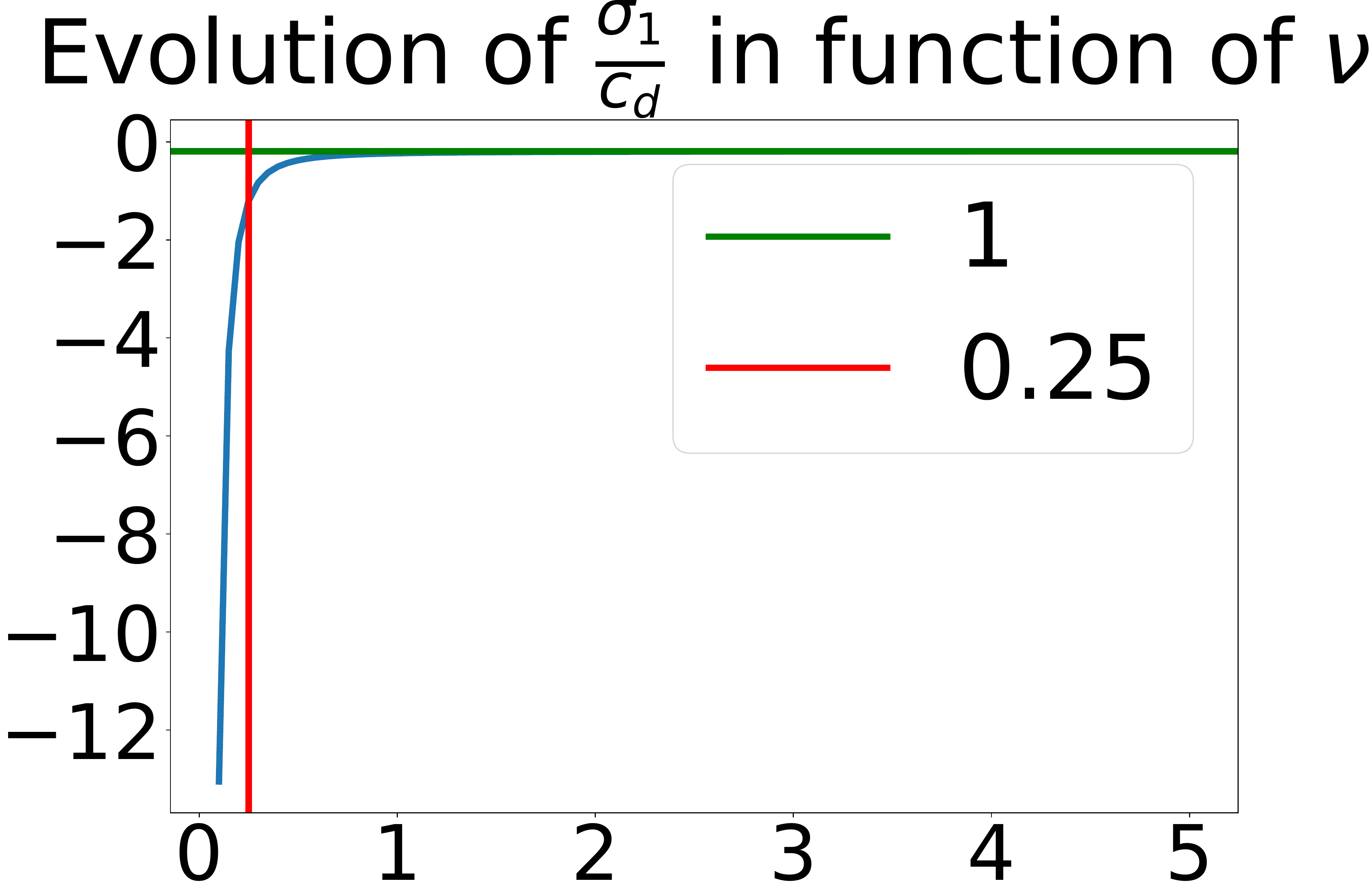}
    \includegraphics[scale=0.11]{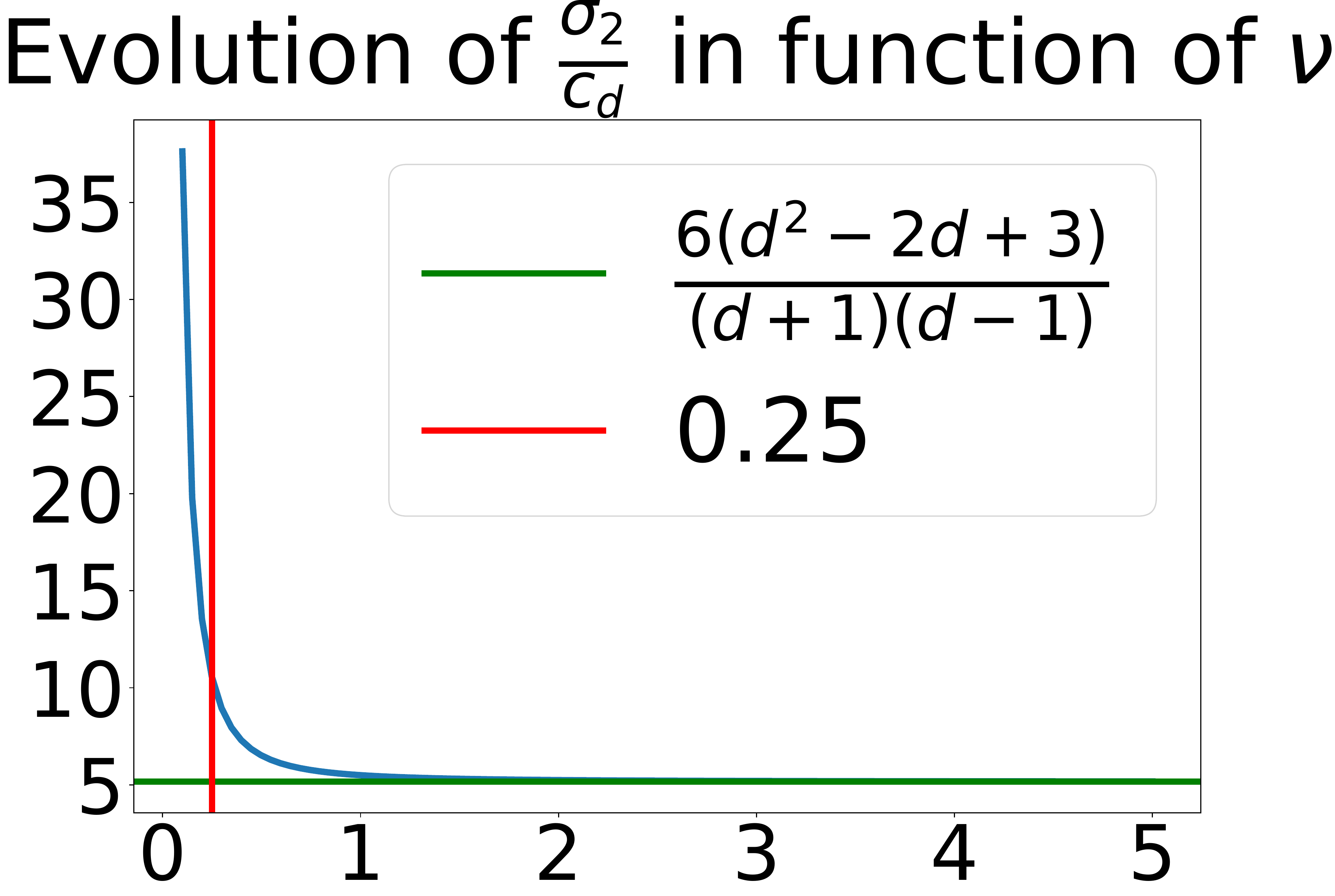}
    \includegraphics[scale=0.11]{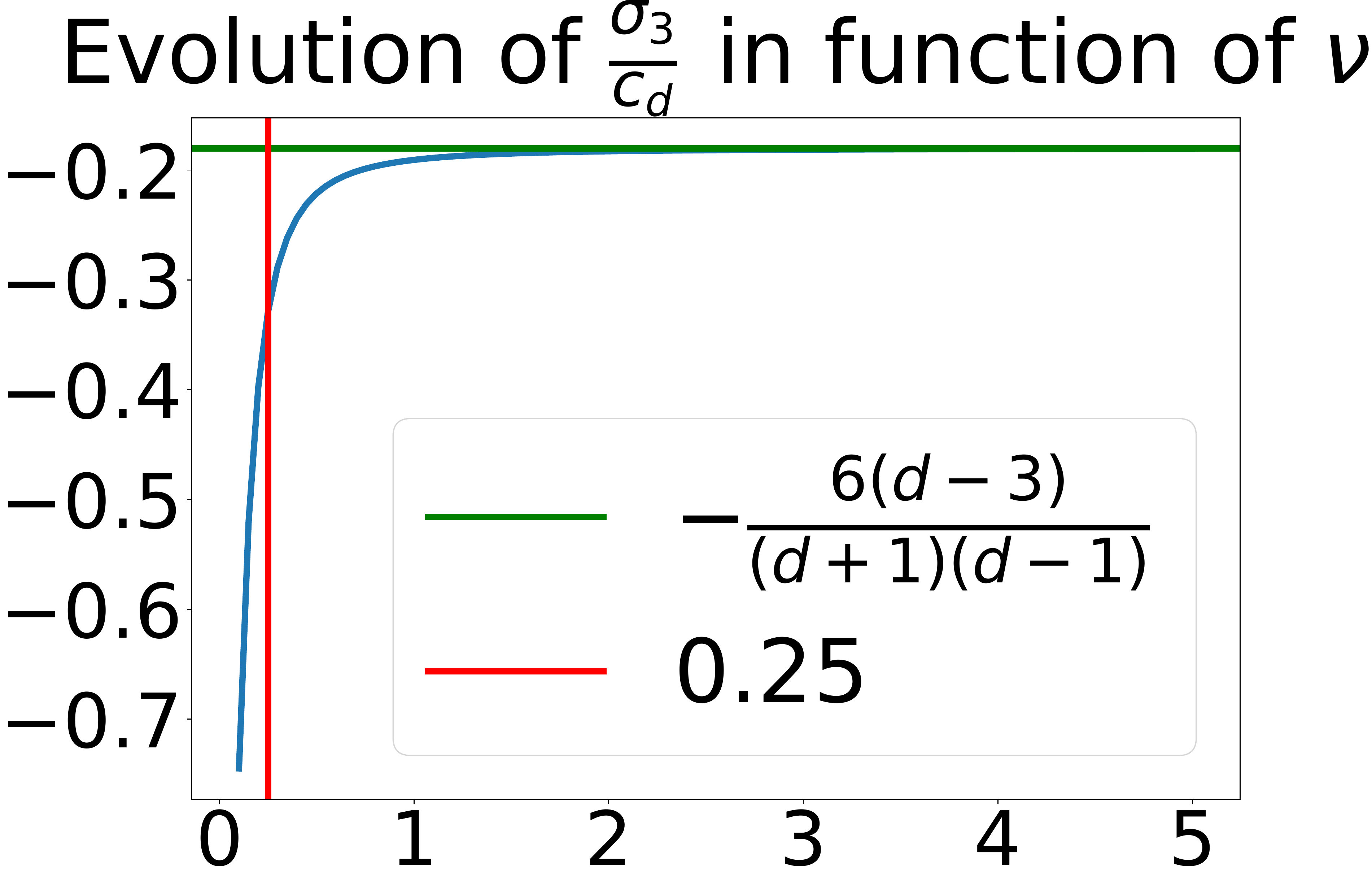}
    \caption{\label{fig:sigmas}Evolution of $\sigma_i/\dencst_d$ as a function of $\nu$ for $1\leq i\leq 4$ for $d=30$. In red the default value of the bandwidth. In green the limits given by Corollary~\ref{cor:approximate-sigma-inverse}. We can see that the $\sigma$ coefficients are close to these limit values for the default bandwidth. }
\end{figure}

\begin{proof}
From Eq.~\eqref{eq:def-sigma}, we can see that $\Sigma$ is a block matrix. 
The result follows from the block matrix inversion formula and one can check directly that $\Sigma\cdot \Sigma^{-1} = \Identity_{d+1}$. 
\end{proof}

Our next result shows that the assumptions of Proposition~\ref{prop:sigma-inverse-computation} are satisfied: $\alpha_1-\alpha_2$ and $\dencst_d$ are positive quantities. 
In fact, we prove a slightly stronger statement which will be necessary to control the operator norm of $\Sigma^{-1}$. 

\begin{proposition}[$\Sigma$ is invertible]
\label{prop:large-nu-makes-everything-ok}
For any $d\geq 2$, 
\[
\alpha_1 - \alpha_2 \geq \frac{\exps{\frac{-1}{2\nu^2}}}{6} > 0
\, ,
\quad \text{ and } \quad 
\dencst_d \geq \frac{\exps{\frac{-2}{\nu^2}}}{40} > 0
\, .
\]
\end{proposition}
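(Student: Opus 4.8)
The plan is to establish the two lower bounds by working directly with the closed-form expressions for $\alpha_0,\alpha_1,\alpha_2$ provided by Proposition~\ref{prop:alphas-computation}, combined with the elementary bound $\exps{-1/(2\nu^2)}\leq \psi(t)\leq 1$ for all $t\in[0,1]$ noted just before Corollary~\ref{cor:alphas-bounds}. First I would tackle $\alpha_1-\alpha_2$. Writing $\alpha_1-\alpha_2 = \frac1d\sum_{s=1}^d \left(1-\frac sd\right)\left[1-\left(1-\frac{s}{d-1}\right)\right]\psi(s/d) = \frac1d\sum_{s=1}^d \left(1-\frac sd\right)\frac{s}{d-1}\psi(s/d)$, I can lower-bound $\psi(s/d)\geq \exps{-1/(2\nu^2)}$ to pull the exponential out, and then I am left with estimating the purely combinatorial sum $\frac{1}{d(d-1)}\sum_{s=1}^d s\left(1-\frac sd\right) = \frac{1}{d^2(d-1)}\sum_{s=1}^d s(d-s)$. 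This last sum evaluates to $\frac{d(d^2-1)}{6}$ via the standard formulas for $\sum s$ and $\sum s^2$, so the combinatorial factor is exactly $\frac{d^2-1}{6d^2}\cdot\frac{d}{d-1}$—wait, more carefully $\frac{1}{d^2(d-1)}\cdot\frac{d(d-1)(d+1)}{6} = \frac{d+1}{6d}\geq \frac16$ for $d\geq 1$. Hence $\alpha_1-\alpha_2\geq \frac16\exps{-1/(2\nu^2)}$, which is exactly the claimed bound.

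For the second bound, the idea is to rewrite $\dencst_d = (d-1)\alpha_0\alpha_2 - d\alpha_1^2 + \alpha_0\alpha_1$ in a form that makes positivity manifest. A natural route is to relate $\dencst_d$ to $\sigma_2, \sigma_3$ and the quantity $\alpha_1-\alpha_2$ already controlled: from Definition~\ref{def:sigmas}, $(\alpha_1-\alpha_2)(\sigma_2+\sigma_3) = (d-2)\alpha_0\alpha_2-(d-1)\alpha_1^2+\alpha_0\alpha_1 + \alpha_1^2-\alpha_0\alpha_2 = (d-1)\alpha_0\alpha_2 - d\alpha_1^2 + \alpha_0\alpha_1 -\alpha_1^2 +\alpha_1^2$; one checks this collapses to $\dencst_d$ up to bookkeeping, so $\dencst_d = (\alpha_1-\alpha_2)(\sigma_2+\sigma_3) + (\text{correction})$, or more likely $\dencst_d$ factors directly. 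Alternatively—and this is probably cleaner—I would expand everything back into the double/triple sums over $s$ using Proposition~\ref{prop:alphas-computation}, symmetrize over the summation indices, and show the resulting expression is a sum of manifestly nonnegative terms times $\psi$-weights, each of which is at least $\exps{-2/\nu^2}$ (three factors of $\psi$ appear since $\dencst_d$ is cubic in the $\alpha$'s). Pulling out $\exps{-2/\nu^2}$ leaves a combinatorial constant that I would bound below by $1/40$ using explicit polynomial identities in $d$ for $d\geq 2$ (checking the edge case $d=2$ by hand).

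The hard part will be the second bound: $\dencst_d$ is a degree-three polynomial in the $\alpha$'s, so naively substituting the sum formulas produces a triple sum over $s,s',s''$ whose sign is not obvious term by term, and the cross terms $-d\alpha_1^2$ carry a minus sign. Making the positivity manifest requires either finding the right algebraic regrouping (ideally a sum-of-squares-type identity in the $\psi(s/d)$ values, exploiting that $\psi$ is decreasing), or a Cauchy–Schwarz / Chebyshev-sum-inequality argument showing $(d-1)\alpha_0\alpha_2 + \alpha_0\alpha_1 \geq d\alpha_1^2$ with a quantitative gap. I expect the cleanest path is to use the correlation inequality: since $s\mapsto \psi(s/d)$ and $s\mapsto \prod_{k=0}^{1}\frac{d-s-k}{d-k}$ are both decreasing in $s$, Chebyshev's sum inequality gives $\alpha_0\alpha_2 \geq$ (something) and similarly controls $\alpha_1^2$, and then a careful accounting of the numerical constants—keeping track of the worst case over $\nu$, which is $\nu$ small, forcing the three $\exps{-1/(2\nu^2)}$ factors down to $\exps{-2/\nu^2}$ after a slight loss—yields the $1/40$. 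The numerical constant $1/40$ suggests the authors were deliberately generous, so once the structural positivity is in place the final constant-chasing should be routine, if tedious.
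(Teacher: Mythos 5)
Your treatment of the first inequality is correct and is exactly the paper's argument: write $\alpha_1-\alpha_2=\frac{1}{d}\sum_{s=1}^d(1-\frac{s}{d})\frac{s}{d-1}\psi(s/d)$, pull out $\psi(s/d)\geq \exps{-1/(2\nu^2)}$, and evaluate the remaining sum to $\frac{d+1}{6d}\geq\frac16$. The problem is the second inequality, where your proposal never actually produces a proof. You offer three candidate routes (an algebraic identity via $\sigma_2+\sigma_3$, a symmetrized multiple sum, a Chebyshev/Cauchy--Schwarz correlation argument), but none is carried out, and the one concrete computation you attempt does not work: $(\alpha_1-\alpha_2)(\sigma_2+\sigma_3)=(d-3)\alpha_0\alpha_2-(d-2)\alpha_1^2+\alpha_0\alpha_1$, which is not $\dencst_d=(d-1)\alpha_0\alpha_2-d\alpha_1^2+\alpha_0\alpha_1$. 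Also note that $\dencst_d$ is \emph{quadratic}, not cubic, in the $\alpha$'s, so your accounting of "three factors of $\psi$" giving $\exps{-2/\nu^2}$ does not parse (three factors of $\exps{-1/(2\nu^2)}$ would give $\exps{-3/(2\nu^2)}$ anyway).

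The genuine gap is the quantitative step. The paper rewrites $\dencst_d=\frac{1}{d}\bigl[\bigl(\sum_s\psi(\tfrac{s}{d})\bigr)\bigl(\sum_s\tfrac{s^2}{d^2}\psi(\tfrac{s}{d})\bigr)-\bigl(\sum_s\tfrac{s}{d}\psi(\tfrac{s}{d})\bigr)^2\bigr]$, i.e.\ as a Cauchy--Schwarz defect $\frac{1}{d}[A^2B^2-(\sum_s a_sb_s)^2]$ with $a_s=\sqrt{\psi(s/d)}$ and $b_s=\frac{s}{d}\sqrt{\psi(s/d)}$. Plain Cauchy--Schwarz (equivalently, plain Chebyshev's sum inequality, which is what your "correlation inequality" route would deliver) gives only $\dencst_d\geq 0$, and strictness gives $\dencst_d>0$ but no bound away from zero uniform in $d$ and $\nu$ --- which is what is needed downstream to control $\opnorm{\Sigma^{-1}}$. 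The paper closes this by invoking a \emph{refinement} of Cauchy--Schwarz due to Filipovski, $AB\geq\sum_s a_sb_s+\frac14\sum_s\frac{(a_s^2B^2-b_s^2A^2)^2}{a_s^4B^4+b_s^4A^4}a_sb_s$, keeping only the $s=d$ term of the correction (where $a_d=b_d$ makes it computable), and then chasing constants to reach $\exps{-2/\nu^2}/40$. That refinement, or some equally explicit quantitative gap, is the missing idea in your proposal; "once the structural positivity is in place the final constant-chasing should be routine" understates the situation, because the structural positivity alone (which you do essentially have) yields no constant at all.
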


\begin{proof}
By definition of the $\alpha$ coefficients (Eq.~\eqref{eq:def-alphas}), we have
\[
\alpha_1 - \alpha_2 = \frac{1}{d}\sum_{s=1}^d \left(1-\frac{s}{d}\right) \frac{s}{d-1} \psi\left(\frac{s}{d}\right)
\, .
\]
Since $\exps{\frac{-1}{2\nu^2}} \leq \psi(t) \leq 1$ for any $t\in [0,1]$, we have
\begin{equation}
\label{eq:large-nu-aux-1}
\exps{\frac{-1}{2\nu^2}} \cdot \frac{1}{d}\sum_{s=1}^d \left(1-\frac{s}{d}\right) \frac{s}{d-1} = \frac{d+1}{6d}\cdot \exps{\frac{-1}{2\nu^2}}
\leq \alpha_1-\alpha_2 \leq \frac{d+1}{6d}
\, .
\end{equation}
The right-hand side of Eq.~\eqref{eq:large-nu-aux-1} yields the promised bound. 
Note that the same reasoning gives
\begin{equation}
\label{eq:large-nu-aux-2}
\frac{d+1}{2d} \cdot \exps{\frac{-1}{2\nu^2}} \leq \alpha_0 - \alpha_1 \leq \frac{d+1}{2d}
\, .
\end{equation}

Let us now find a lower bound for $\dencst_d$. 
We first start by noticing that 
\begin{align}
\dencst_d &= d\alpha_1(\alpha_0-\alpha_1) - (d-1)\alpha_0(\alpha_1-\alpha_2) \label{eq:dencst-alt-writing} \\
&= \sum_{s=1}^d \left(1-\frac{s}{d}\right) \psi\left(\frac{s}{d}\right) \cdot \frac{1}{d}\sum_{s=1}^d \frac{s}{d}\psi\left(\frac{s}{d}\right) - \sum_{s=1}^d \psi\left(\frac{s}{d}\right) \cdot \frac{1}{d} \sum_{s=1}^d \left(1-\frac{s}{d}\right) \psi\left(\frac{s}{d}\right) \notag \\
\dencst_d &= \frac{1}{d}\left[ \sum_{s=1}^d \psi\left(\frac{s}{d}\right) \cdot \sum_{s=1}^d \frac{s^2}{d^2}\psi\left(\frac{s}{d}\right) - \left(\sum_{s=1}^d \frac{s}{d}\psi\left(\frac{s}{d}\right)\right)^2\right] \notag 
\, .
\end{align}
Therefore, by Cauchy-Schwarz inequality, $\dencst_d\geq 0$. 
In fact, $\dencst_d>0$ since the equality case in Cauchy-Schwarz is attained for proportional summands, which is not the case here. 

However, we need to improve this result if we want to control $\opnorm{\Sigma^{-1}}$ more precisely.  
To this extent, we use a refinement of Cauchy-Schwarz inequality obtained by \citet{filipovski_2019}. 
Let us set, for any $1\leq s\leq d$, 
\[
a_s \defeq \sqrt{\psi\left(\frac{s}{d}\right)}\, ,
\quad b_s \defeq \frac{s}{d}\sqrt{\psi\left(\frac{s}{d}\right)}\, ,
\quad A \defeq \sqrt{\sum_{s=1}^d a_s^2} \, ,
\quad \text{ and } B\defeq \sqrt{\sum_{s=1}^d b_s^2}
\, .
\]
With these notation, 
\[
\dencst_d = \frac{1}{d}\left[A^2B^2-\left(\sum_{s=1}^d a_sb_s\right)^2\right]
\, ,
\]
and Cauchy-Schwarz yields $A^2B^2\geq \left(\sum_{s=1}^d a_sb_s\right)^2$. 
Theorem~2.1 in \citet{filipovski_2019} is a stronger result, namely
\begin{equation}
\label{eq:improved-cauchy-schwarz}
AB\geq \sum_{s=1}^da_sb_s + \frac{1}{4}\sum_{s=1}^d \frac{(a_s^2 B^2 - b_s^2 A^2)^2}{a_s^4B^4 + b_s^4 A^4} a_sb_s
\, .
\end{equation}
Let us focus on this last term. 
Since all the terms are non-negative, we can lower bound by the term of order $d$, that is,
\begin{equation}
\label{eq:large-nu-aux-3}
\frac{1}{4}\sum_{s=1}^d \frac{(a_s^2 B^2 - b_s^2 A^2)^2}{a_s^4B^4 + b_s^4 A^4} a_sb_s \geq \frac{1}{4} \frac{(b_d^2A^2-a_d^2B^2)^2}{b_d^4A^4+a_d^4B^4}a_db_d = \frac{1}{4}\frac{(A^2-B^2)^2}{A^4+B^4}\psi(1)
\, ,
\end{equation}
since $a_d=b_d = \sqrt{\psi(1)}$. 
On one side, we notice that
\begin{align}
A^2-B^2 &= \sum_{s=1}^d \left(1-\frac{s^2}{d^2}\right)\psi\left(\frac{s}{d}\right) \notag \\
&\geq \exp{\frac{-1}{2\nu^2}} \cdot \sum_{s=1}^d \left(1-\frac{s^2}{d^2}\right) \notag \tag{for any $t\in[0,1]$, $\psi(t)\geq \exps{-1/(2\nu^2)}$} \\
&= \exp{\frac{-1}{2\nu^2}} \cdot \frac{1}{6}\left(4d-\frac{1}{d}-3\right) \notag \\
A^2-B^2 &\geq \frac{3d\cdot \exp{\frac{-1}{2\nu^2}}}{8} \notag 
\, ,
\end{align}
where we used $d\geq 2$ in the last display. 
We deduce that $(A^2-B^2)^2 \geq 9d^2\exps{\frac{-1}{2\nu^2}}/64$. 
On the other side, it is clear that $A^2\leq d$, and 
\[
B^2 \leq \sum_{s=1}^d \frac{s^2}{d^2} = \frac{(d+1)(2d+1)}{6d}
\, .
\]
For any $d\geq 2$, we have $B^2\leq 5d/8$, and we deduce that $A^4+B^4\leq \frac{89}{64}d^2$. 
Therefore,
\[
\frac{(A^2-B^2)^2}{A^4+B^4} \geq \frac{9\exps{\frac{-1}{\nu^2}}}{89}
\, .
\]
Coming back to Eq.~\eqref{eq:large-nu-aux-3}, we proved that 
\[
\frac{1}{4}\sum_{s=1}^d \frac{(a_s^2 B^2 - b_s^2 A^2)^2}{a_s^4B^4 + b_s^4 A^4} a_sb_s \geq \frac{9\exps{\frac{-3}{2\nu^2}}}{356}
\, .
\]
Plugging into Eq.~\eqref{eq:improved-cauchy-schwarz} and taking the square, we deduce that 
\[
A^2B^2\geq \left(\sum_{s=1}^d a_sb_s\right)^2 + 2\cdot \sum_{s=1}^d a_sb_s\cdot \frac{9\exps{\frac{-3}{2\nu^2}}}{356} + \frac{81\exps{\frac{-3}{\nu^2}}}{126736}
\, .
\]
But $\sum a_sb_s \geq d\exps{\frac{-1}{2\nu^2}}/2$, therefore, ignoring the last term, we have
\[
A^2B^2 -\left(\sum_{s=1}^d a_sb_s\right)^2 \geq \frac{9d\exps{\frac{-2}{\nu^2}}}{356}
\, .
\]
We conclude by noticing that $356/9\leq 40$. 
\end{proof}

\begin{remark}
We suspect that the correct lower bound for $\dencst_d$ is actually of order $d$, but we did not manage to prove it. 
Careful inspection of the proof shows that this $d$ factor is lost when considering only the last term of the summation in Eq.~\eqref{eq:improved-cauchy-schwarz}. 
It is however challenging to control the remaining terms, since $B^2$ is roughly half of $A^2$ and $\frac{s^2}{d^2}B^2-A^2$ is close to $0$ for some values of $s$. 
\end{remark}

We conclude this section by giving an approximation of $\Sigma^{-1}$ for large bandwidth. 
This approximation will be particularly useful in Section~\ref{sec:beta-computation}. 

\begin{corollary}[Large bandwidth approximation of $\Sigma^{-1}$]
\label{cor:approximate-sigma-inverse}
For any $d\geq 2$, when $\nu \to +\infty$, we have 
\[
\dencst_d \longrightarrow \frac{d^2-1}{12d} 
\, ,
\]
and, as a consequence,
\begin{equation}
\label{eq:def-sigma-infty}
\begin{cases}
\frac{\sigma_0}{\dencst_d} &\to \frac{2(2d-1)}{d+1} = 4 - \frac{6}{d} + \bigo{\frac{1}{d^2}} \\
\frac{\sigma_1}{\dencst_d} &\to \frac{-6}{d+1} = - \frac{6}{d} + \bigo{\frac{1}{d^2}} \\
\frac{\sigma_2}{\dencst_d} &\to \frac{6(d^2-2d+3)}{(d+1)(d-1)} = 6 - \frac{12}{d} + \bigo{\frac{1}{d^2}} \\
\frac{\sigma_3}{\dencst_d} &\to \frac{-6(d-3)}{(d+1)(d-1)} = -\frac{6}{d} + \bigo{\frac{1}{d^2}}\, .
\end{cases}
\end{equation}
\end{corollary}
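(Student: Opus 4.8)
The plan is to derive everything from Corollary~\ref{cor:alphas-approx}, which gives $\lim_{\nu\to+\infty}\alpha_p = \frac{d-p}{(p+1)d}$. Specializing to $p=0,1,2$, in the large bandwidth regime we have $\alpha_0\to 1$, $\alpha_1\to\frac{d-1}{2d}$, and $\alpha_2\to\frac{d-2}{3d}$. Every quantity of interest ($\dencst_d$ and the $\sigma_i$) is a fixed rational function of $\alpha_0,\alpha_1,\alpha_2$ (with $d$ as a parameter), so the strategy is simply to substitute these limiting values and simplify, once we have checked that the relevant denominators stay bounded away from $0$.

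First I would compute the limit of the normalization constant. Plugging the above into Eq.~\eqref{eq:def-densct} and factoring out $\frac{d-1}{d}$ gives
\[
\dencst_d \longrightarrow \frac{d-1}{d}\left(\frac{d-2}{3} - \frac{d-1}{4} + \frac{1}{2}\right) = \frac{d-1}{d}\cdot\frac{d+1}{12} = \frac{d^2-1}{12d}\, ,
\]
which is strictly positive for $d\geq 2$. Since moreover $\alpha_1-\alpha_2\to\frac{d+1}{6d}>0$, the maps $(\alpha_0,\alpha_1,\alpha_2)\mapsto \sigma_i/\dencst_d$ are continuous at the limiting point — their denominators being nonzero there, consistently with Proposition~\ref{prop:large-nu-makes-everything-ok} — so one may pass to the limit term by term in Definition~\ref{def:sigmas}. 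This yields $\sigma_0\to\frac{(d-1)(2d-1)}{6d}$, $\sigma_1\to-\frac{d-1}{2d}$, $\sigma_2\to\frac{d^3-d^2+d+3}{2d(d+1)}$ after using the identity $d^3-d^2+d+3=(d+1)(d^2-2d+3)$, and $\sigma_3\to-\frac{d-3}{2d}$ after using $-d^2+2d+3=-(d-3)(d+1)$. Dividing each of these by $\frac{d^2-1}{12d}$ and cancelling the common factor $d-1$ (or $d+1$) produces exactly the four closed forms on the right of Eq.~\eqref{eq:def-sigma-infty}.

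The final step is purely cosmetic: expanding $\frac{2(2d-1)}{d+1}$, $\frac{-6}{d+1}$, $\frac{6(d^2-2d+3)}{(d+1)(d-1)}$ and $\frac{-6(d-3)}{(d+1)(d-1)}$ around $d=+\infty$ via $\frac{1}{d\pm 1}=\frac{1}{d}+\bigo{1/d^2}$ gives the stated $1/d$-expansions ($4-\frac6d$, $-\frac6d$, $6-\frac{12}{d}$, $-\frac6d$, each up to $\bigo{1/d^2}$). I do not anticipate a genuine obstacle here. The only point deserving an explicit word is the interchange of the $\nu\to+\infty$ limit with the divisions, which is licensed by the strict positivity (in the limit) of $\dencst_d$ and of $\alpha_1-\alpha_2$; everything else is bookkeeping, and the main risk is simply an arithmetic slip in the two polynomial factorizations above that cause the final fractions to collapse.
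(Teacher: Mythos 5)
Your proposal is correct and is exactly the route the paper takes (the paper's proof is a one-line appeal to Corollary~\ref{cor:alphas-approx} and the definitions of $\dencst_d$ and the $\sigma$ coefficients); I verified the substitutions, the two polynomial factorizations $d^3-d^2+d+3=(d+1)(d^2-2d+3)$ and $-d^2+2d+3=-(d-3)(d+1)$, and the $1/d$ expansions, and all are accurate. Your explicit remark that $\dencst_d$ and $\alpha_1-\alpha_2$ stay bounded away from zero in the limit, which licenses passing to the limit in the quotients, is a detail the paper leaves implicit.
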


\begin{proof}
The proof is straightforward from the definition of $\dencst_d$ and the $\sigma$ coefficients, and Corollary~\ref{cor:alphas-approx}. 
\end{proof}

\subsection{Concentration of $\Sigmahat_n$}
\label{sec:sigmahat-concentration}

We now turn to the concentration of $\Sigmahat_n$ around $\Sigma$. 
More precisely, we show that $\Sigmahat_n$ is close to $\Sigma$ in operator norm, with high probability. 
Since the definition of $\Sigmahat_n$ is identical to the one in the Tabular LIME case, we can use the proof machinery of \citet{garreau_luxburg_2020_arxiv}. 

\begin{proposition}[Concentration of $\Sigmahat_n$]
\label{prop:sigmahat-concentration}
For any $t\geq 0$, 
\[
\proba{\opnorm{\Sigmahat_n - \Sigma} \geq t} \leq 4d\cdot \exp{\frac{-nt^2}{32d^2}}
\, .
\]
\end{proposition}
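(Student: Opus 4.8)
The plan is to bound the operator norm of $\Sigmahat_n - \Sigma$ by the Frobenius norm, which for $(d+1)\times(d+1)$ matrices introduces at most a factor of $d+1$, and then to control each entry of $\Sigmahat_n - \Sigma$ individually using a scalar concentration inequality. First I would write $\opnorm{\Sigmahat_n - \Sigma} \leq \frobnorm{\Sigmahat_n - \Sigma}$ and observe that each entry of $\Sigmahat_n$ is an average of $n$ i.i.d.\ terms of the form $\pi_i z_{i,j} z_{i,k}$ (with $z_{i,0} := 1$), whose expectation is the corresponding entry of $\Sigma$. Since $0 \leq \pi_i \leq 1$ and $z_{i,j}\in\{0,1\}$, each summand lies in $[0,1]$, so Hoeffding's inequality gives, for each pair $(j,k)$, $\proba{\abs{(\Sigmahat_n)_{j,k} - \Sigma_{j,k}} \geq u} \leq 2\exp{-2nu^2}$.

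Next I would convert this entrywise control into a bound on the Frobenius norm. On the event that every one of the $(d+1)^2$ entries satisfies $\abs{(\Sigmahat_n)_{j,k} - \Sigma_{j,k}} < u$, we have $\frobnorm{\Sigmahat_n - \Sigma}^2 < (d+1)^2 u^2$, hence $\opnorm{\Sigmahat_n - \Sigma} < (d+1)u$. Choosing $u = t/(d+1)$ and applying a union bound over the at most $(d+1)^2$ distinct entries (fewer by symmetry, but this is harmless) yields
\[
\proba{\opnorm{\Sigmahat_n - \Sigma} \geq t} \leq 2(d+1)^2 \exp{\frac{-2nt^2}{(d+1)^2}}
\, .
\]
A short numerical cleanup finishes the job: for $d\geq 1$ one has $2(d+1)^2 \leq 4d^2$ is false in general, so instead I would be slightly more careful — use that $d+1 \leq 2d$ for $d\geq 1$, giving $(d+1)^2\leq 4d^2$ in the exponent (which weakens the exponent to $-nt^2/(8d^2)$, not quite matching) — so the cleaner route is to track constants honestly and absorb them, or simply cite the identical argument in \citet{garreau_luxburg_2020_arxiv} as the text already suggests, since $\Sigmahat_n$ has exactly the same structure there.

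The main obstacle, such as it is, is purely bookkeeping: matching the precise constants $32$ and the factor $4d$ in the statement. This is not a conceptual difficulty — the entrywise-Hoeffding-plus-Frobenius-bound scheme is standard — but one must be mildly attentive about whether to bound $\opnorm{\cdot}$ by $\frobnorm{\cdot}$ or by the maximum row sum, how many entries enter the union bound after exploiting the symmetry of $\Sigmahat_n$, and how $d+1$ is replaced by a multiple of $d$. Since the paper explicitly invokes "the proof machinery of \citet{garreau_luxburg_2020_arxiv}" here, I would expect the cleanest writeup to simply defer to that reference after noting that the definition of $\Sigmahat_n$ is unchanged, rather than reproducing the constant-chasing.
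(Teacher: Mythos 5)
Your proposal is correct in substance but follows a genuinely different route from the paper. The paper treats $\Sigmahat_n-\Sigma$ as an average of i.i.d.\ bounded random matrices $\pi_i Z_iZ_i^\top-\Sigma$ and invokes a \emph{matrix} Hoeffding inequality wholesale (Theorem~21 of \citet{garreau_luxburg_2020_arxiv}); that is where the prefactor $4d$ and the constant $32$ come from, and the proof is two lines once the entries of $\Sigmahat_n$ and $\Sigma$ are checked to lie in $[0,1]$. Your entrywise-Hoeffding-plus-Frobenius scheme is equally valid and yields $\proba{\opnorm{\Sigmahat_n - \Sigma} \geq t} \leq 2(d+1)^2\exp{\frac{-2nt^2}{(d+1)^2}}$, which has a worse ($\bigo{d^2}$) prefactor but a strictly stronger exponent, since $\frac{2}{(d+1)^2}\geq \frac{1}{2d^2}\geq \frac{16}{32d^2}$ for $d\geq 1$. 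This does imply the stated inequality, but only after the one observation you gesture at without carrying out: your bound can exceed the paper's only when $nt^2 \lesssim d^2\log d$, and in that whole range one checks that $4d\exp{\frac{-nt^2}{32d^2}}\geq 1$, so the claim is vacuously true there (concretely, the crossover point $\frac{32d^2}{15}\log\frac{(d+1)^2}{2d}$ is below $32d^2\log(4d)$). With that line added your argument is complete; your fallback of deferring to the cited reference is exactly what the paper does, just via the matrix rather than the entrywise version of Hoeffding, which is what buys the linear-in-$d$ prefactor.
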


\begin{proof}
We can write $\Sigmahat=\frac{1}{n}\sum_i \pi_i Z_iZ_i^\top$. 
The summands are bounded i.i.d. random variables, thus we can apply the matrix version of Hoeffding inequality. 
More precisely, the entries of $\Sigmahat_n$ belong to $[0,1]$ by construction, and Corollary~\ref{cor:alphas-bounds} guarantees that the entries of $\Sigma$ also belong to $[0,1]$. 
Therefore, if we set $M_i\defeq \frac{1}{n}\pi_i Z_iZ_i^\top -\Sigma$, then the $M_i$ satisfy the assumptions of Theorem~21 in \citet{garreau_luxburg_2020_arxiv} and we can conclude since $\frac{1}{n}\sum_i M_i = \Sigmahat_n-\Sigma$. 
\end{proof}

\subsection{Control of $\opnorm{\Sigma^{-1}}$}
\label{sec:control-opnorm}

We now turn to the control of $\opnorm{\Sigma^{-1}}$. 
Essentially, our strategy is to bound the entries of $\Sigma^{-1}$, and then to derive an upper bound for $\opnorm{\Sigma^{-1}}$ by noticing that $\opnorm{\Sigma^{-1}}\leq \frobnorm{\Sigma^{-1}}$. 
Thus let us start by controlling the $\sigma$ coefficients in absolute value. 

\begin{lemma}[Control of the $\sigma$ coefficients]
\label{lemma:sigma-elements-control}
Let $d\geq 2$ and $\nu \geq 1.66$. 
Then it holds that
\[
\abs{\sigma_0} \leq \frac{d}{3} \, ,
\quad \abs{\sigma_1} \leq 1\, ,
\quad \abs{\sigma_2} \leq \frac{3d}{2}\exps{\frac{1}{2\nu^2}}\, ,
\quad \text{ and }\quad \abs{\sigma_3} \leq \frac{3}{2}\exps{\frac{1}{2\nu^2}}
\, .
\]
\end{lemma}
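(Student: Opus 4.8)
The plan is to control each $\sigma$ coefficient by substituting the closed-form expressions from Definition~\ref{def:sigmas} and bounding the $\alpha$ coefficients via Corollary~\ref{cor:alphas-bounds} and Proposition~\ref{prop:large-nu-makes-everything-ok}. For $\sigma_1 = -\alpha_1$ this is immediate: Corollary~\ref{cor:alphas-bounds} gives $0 \leq \alpha_1 \leq (d-1)/(2d) \leq 1/2 \leq 1$, so $\abs{\sigma_1} \leq 1$.

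For $\sigma_3 = (\alpha_1^2 - \alpha_0\alpha_2)/(\alpha_1 - \alpha_2)$, the denominator is controlled below by Proposition~\ref{prop:large-nu-makes-everything-ok}, namely $\alpha_1 - \alpha_2 \geq \exps{-1/(2\nu^2)}/6$. For the numerator, I would write $\alpha_1^2 - \alpha_0\alpha_2 = \alpha_1(\alpha_1 - \alpha_2) - \alpha_2(\alpha_0 - \alpha_1)$ (or a similar regrouping) and bound each piece using Corollary~\ref{cor:alphas-bounds} and Eq.~\eqref{eq:large-nu-aux-2}; since $\alpha_0,\alpha_1,\alpha_2 \in [0,1]$ and the differences $\alpha_0-\alpha_1$, $\alpha_1-\alpha_2$ are $\bigo{1}$, the numerator is $\bigo{1}$. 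Combining with the denominator bound, $\abs{\sigma_3} \leq \tfrac{3}{2}\exps{1/(2\nu^2)}$ follows; here the hypothesis $\nu \geq 1.66$ is used so that $\exps{1/(2\nu^2)}$ is close enough to $1$ that the constants work out to exactly $3/2$. For $\sigma_2$, I would use the alternative writing $\sigma_2 = \sigma_3 + \dencst_d/(\alpha_1 - \alpha_2)$ — which one checks directly from Definition~\ref{def:sigmas} and Eq.~\eqref{eq:def-densct}, since $\dencst_d - (\alpha_1^2 - \alpha_0\alpha_2) = (d-2)\alpha_0\alpha_2 - (d-1)\alpha_1^2 + \alpha_0\alpha_1$ — then bound $\dencst_d \leq d\alpha_1(\alpha_0 - \alpha_1) \leq d/4$ (say) using the decomposition in Eq.~\eqref{eq:dencst-alt-writing} together with the nonnegativity of $(d-1)\alpha_0(\alpha_1 - \alpha_2)$, and again invoke the denominator bound from Proposition~\ref{prop:large-nu-makes-everything-ok} to get $\abs{\sigma_2} \leq \tfrac{3d}{2}\exps{1/(2\nu^2)}$. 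Finally $\sigma_0 = (d-1)\alpha_2 + \alpha_1 \leq (d-1)/3 + 1/2 \leq d/3$ for $d \geq 2$, using $\alpha_2 \leq (d-2)/(3d) \leq 1/3$ and $\alpha_1 \leq 1/2$.

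The main obstacle is getting the numerical constants to land exactly on the stated values ($1/3$, $1$, $3/2$, $3/2$) rather than merely on some $\bigo{1}$ bound. This is where the precise inequalities of Corollary~\ref{cor:alphas-bounds} (two-sided bounds on each $\alpha_p$, not just $\alpha_p \in [0,1]$) and the sharp lower bound $\alpha_1 - \alpha_2 \geq \exps{-1/(2\nu^2)}/6$ must be combined carefully, and where the threshold $\nu \geq 1.66$ enters to absorb the $\exps{1/(2\nu^2)}$ factors that would otherwise spoil the clean constants. Everything else is routine algebra on the closed forms.
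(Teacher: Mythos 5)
Your overall route is the same as the paper's: $\sigma_1$ is immediate, $\sigma_3$ is handled by bounding $\abs{\alpha_1^2-\alpha_0\alpha_2}$ above and $\alpha_1-\alpha_2$ below, and for $\sigma_2$ your identity $\sigma_2=\sigma_3+\dencst_d/(\alpha_1-\alpha_2)$ is exactly the paper's rewriting $\sigma_2=(\dencst_d+\alpha_1^2-\alpha_0\alpha_2)/(\alpha_1-\alpha_2)$, with $\dencst_d\leq d\alpha_1(\alpha_0-\alpha_1)\leq d/4$ from Eq.~\eqref{eq:dencst-alt-writing}. The $\sigma_3$ bound does come out to exactly $\tfrac{3}{2}\exps{1/(2\nu^2)}$ with your regrouping (numerator $\leq 1/4$, denominator $\geq \tfrac16\exps{-1/(2\nu^2)}$), so that part is fine.

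However, two of your final numerical steps are false as written, and both fail for the same reason: you discard the $d$-dependence of the $\alpha$ bounds too early. For $\sigma_0$, the inequality $(d-1)/3+1/2\leq d/3$ is wrong for every $d$ (the left side is $d/3+1/6$); you must keep $\alpha_2\leq\frac{d-2}{3d}$ and $\alpha_1\leq\frac{d-1}{2d}$, which give $\sigma_0\leq\frac{2d^2-3d+1}{6d}\leq\frac{d}{3}$. For $\sigma_2$, using only the denominator bound $\alpha_1-\alpha_2\geq\tfrac16\exps{-1/(2\nu^2)}$ yields $(\tfrac{d}{4}+\tfrac14)\cdot 6\exps{1/(2\nu^2)}=\tfrac{3(d+1)}{2}\exps{1/(2\nu^2)}$, which overshoots the target $\tfrac{3d}{2}\exps{1/(2\nu^2)}$; there is no slack to absorb the extra $\tfrac{d+1}{d}$ factor, since the exponential appears on both sides. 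The paper instead uses the sharper intermediate bound $\alpha_1-\alpha_2\geq\frac{d+1}{6d}\exps{-1/(2\nu^2)}$ from Eq.~\eqref{eq:large-nu-aux-1}, so that the $(d+1)$ in the numerator cancels and one lands exactly on $\tfrac{6d}{4}\exps{1/(2\nu^2)}=\tfrac{3d}{2}\exps{1/(2\nu^2)}$. Finally, your remark that $\nu\geq 1.66$ is what makes the constants ``work out'' is not accurate: the factors $\exps{1/(2\nu^2)}$ are retained explicitly in the stated bounds for $\sigma_2$ and $\sigma_3$, and nothing in the derivation needs them to be close to $1$; the hypothesis plays no role in this particular proof.
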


\begin{proof}
By its definition, we know that $\sigma_0$ is positive. 
Moreover, from Corollary~\ref{cor:alphas-bounds}, we see that
\begin{align*}
\sigma_0 &= (d-1)\alpha_2 + \alpha_1 \\
&\leq \frac{(d-1)(d-2)}{3d} + \frac{d-1}{2d} \\
&= \frac{2d^2-3d+3}{6d}
\, .
\end{align*}
One can check that for any $d\geq 2$, we have $2d^2-3d+3\leq 2d^2$, which concludes the proof of the first claim. 

Since $\abs{\sigma_1}=\alpha_1$, the second claim is straightforward from Corollary~\ref{cor:alphas-bounds}. 

Regarding $\sigma_2$, we notice that
\[
\sigma_2 = \frac{\dencst_d + \alpha_1^2 - \alpha_0\alpha_2}{\alpha_1-\alpha_2}
\, .
\]
Since $\alpha_0\geq \alpha_1\geq \alpha_2$, we have
\[
-\alpha_1(\alpha_0-\alpha_1) \leq \alpha_1^2-\alpha_0\alpha_2 \leq \alpha_0(\alpha_1-\alpha_2)
\, .
\]
Using Eqs.~\eqref{eq:large-nu-aux-1} and~\eqref{eq:large-nu-aux-2} in conjunction with Corollary~\ref{cor:alphas-bounds}, we find that $\abs{\alpha_1^2-\alpha_0\alpha_2} \leq 1/4$. 
Moreover, from Eq.~\eqref{eq:dencst-alt-writing}, we see that $\dencst_d\leq d/4$. 
We deduce that 
\[
\abs{\sigma_2} \leq \left(\frac{d}{4} + \frac{1}{4}\right)\cdot 6\exps{\frac{1}{2\nu^2}}
\, ,
\]
where we used the first statement of Proposition~\ref{prop:large-nu-makes-everything-ok} to lower bound $\alpha_1\alpha_2$. 
The results follows, since $d\geq 2$.  

Finally, we write
\begin{align*}
\abs{\sigma_3} &= \frac{\abs{\alpha_1^2-\alpha_0\alpha_2}}{\alpha_1-\alpha_2} \\
&\leq \frac{1/4}{\frac{d+1}{6d}\cdot \exps{\frac{-1}{2\nu^2}}}
\end{align*}
according to Proposition~\ref{prop:large-nu-makes-everything-ok}. 
\end{proof}

We now proceed to bound the operator norm of $\Sigma^{-1}$. 

\begin{proposition}[Control of $\opnorm{\Sigma^{-1}}$]
\label{prop:opnorm-control}
For any $d\geq 2$ and any $\nu > 0$, it holds that 
\[
\opnorm{\Sigma^{-1}} \leq 70 d^{3/2} \exps{\frac{5}{2\nu^2}}
\, .
\]
\end{proposition}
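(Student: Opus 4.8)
The plan is to bound $\opnorm{\Sigma^{-1}}$ by $\frobnorm{\Sigma^{-1}}$ and then control the Frobenius norm by summing the squares of all entries, using the closed-form expression from Proposition~\ref{prop:sigma-inverse-computation}. From Eq.~\eqref{eq:sigma-inverse-computation}, the matrix $\Sigma^{-1}$ has a very structured form: a single entry equal to $\sigma_0/\dencst_d$, $2d$ entries equal to $\sigma_1/\dencst_d$, $d$ diagonal entries (other than the top-left one) equal to $\sigma_2/\dencst_d$, and the remaining $d^2-d$ off-diagonal entries equal to $\sigma_3/\dencst_d$. Hence
\[
\frobnorm{\Sigma^{-1}}^2 = \frac{1}{\dencst_d^2}\left(\sigma_0^2 + 2d\,\sigma_1^2 + d\,\sigma_2^2 + (d^2-d)\sigma_3^2\right)
\, .
\]

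\medskip

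Next I would plug in the bounds on the $\sigma$ coefficients obtained in Lemma~\ref{lemma:sigma-elements-control}. This requires the assumption $\nu \geq 1.66$ there; to get a statement valid for \emph{all} $\nu > 0$ as claimed, one notes that for small $\nu$ the exponential factor $\exps{5/(2\nu^2)}$ blows up, so the inequality becomes trivially true once the numerical constant is large enough — alternatively, one reruns the same estimates keeping the $\exps{\cdot/\nu^2}$ factors explicit, which is harmless. Using $\abs{\sigma_0}\leq d/3$, $\abs{\sigma_1}\leq 1$, $\abs{\sigma_2}\leq \frac{3d}{2}\exps{1/(2\nu^2)}$, $\abs{\sigma_3}\leq \frac{3}{2}\exps{1/(2\nu^2)}$, the dominant contributions are $d\sigma_2^2 \lesssim d^3\exps{1/\nu^2}$ and $(d^2-d)\sigma_3^2 \lesssim d^2\exps{1/\nu^2}$, so the numerator is $\bigo{d^3\exps{1/\nu^2}}$. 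For the denominator I would invoke the lower bound $\dencst_d \geq \exps{-2/\nu^2}/40$ from Proposition~\ref{prop:large-nu-makes-everything-ok}, giving $\dencst_d^{-2} \leq 1600\,\exps{4/\nu^2}$. Multiplying, $\frobnorm{\Sigma^{-1}}^2 \lesssim d^3 \exps{5/\nu^2}$, and taking square roots yields $\opnorm{\Sigma^{-1}} \lesssim d^{3/2}\exps{5/(2\nu^2)}$; a careful bookkeeping of the constants (collecting $1600$, the $9/4$ from $\sigma_2^2$, the factor $d$, and slack from $d\geq 2$) should land at or below the stated $70$.

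\medskip

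The main obstacle, as usual in such estimates, is the denominator: the lower bound $\dencst_d \geq \exps{-2/\nu^2}/40$ is the weakest link (the authors themselves remark they expect the true order to be $d$ rather than $\Theta(1)$), and it is responsible for the poor $d^{3/2}$ and $\exps{5/(2\nu^2)}$ dependence rather than the $d^{1/2}$ one might hope for. Everything else is routine arithmetic: the structured form of $\Sigma^{-1}$ makes the Frobenius norm a finite sum of four terms, and the $\sigma$-bounds are already in hand. The only subtlety on the numerator side is making sure the $\sigma_2$ term, which scales like $d$, does not dominate in a way that worsens the exponent — but since it enters as $d\cdot \sigma_2^2 \asymp d^3$ it matches the $(d^2-d)\sigma_3^2$ term exactly, so no extra loss occurs. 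I would therefore present the proof as: (1) $\opnorm{\cdot}\leq\frobnorm{\cdot}$; (2) write out the four-term sum of squared entries; (3) substitute Lemma~\ref{lemma:sigma-elements-control} and Proposition~\ref{prop:large-nu-makes-everything-ok}; (4) simplify using $d\geq 2$ and absorb constants.
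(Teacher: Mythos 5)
Your proposal follows the paper's proof essentially verbatim: bound $\opnorm{\Sigma^{-1}}$ by $\frobnorm{\Sigma^{-1}}$, expand the latter as $\dencst_d^{-2}\bigl(\sigma_0^2+2d\sigma_1^2+d\sigma_2^2+(d^2-d)\sigma_3^2\bigr)$, substitute the bounds from Lemma~\ref{lemma:sigma-elements-control} and the lower bound $\dencst_d\geq \exps{-2/\nu^2}/40$ from Proposition~\ref{prop:large-nu-makes-everything-ok}, and collect constants (the paper bounds the numerator by $3d^3\exps{1/\nu^2}$ and uses $\sqrt{3\cdot 1600}\leq 70$). Your side remark about the $\nu\geq 1.66$ hypothesis in Lemma~\ref{lemma:sigma-elements-control} is a fair catch that the paper itself passes over in silence, but it does not alter the argument.
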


\begin{remark}
\label{remark:influence-of-d}
We notice that the control obtained worsens as $d\to +\infty$ and $\nu\to 0$. 
We conjecture that the dependency in $d$ is not tight. 
For instance, showing that $\dencst_d=\Omega(d)$ (that is, improving Proposition~\ref{prop:large-nu-makes-everything-ok}) would yield an upper bound of order $d$ instead of $d^{3/2}$. 
The discussion after Proposition~\ref{prop:large-nu-makes-everything-ok} indicates that such an improvement may be possible. 
Moreover, we see in experiments that the concentration of $\betahat_n$ does not degrade that much for large $d$ (see, in particular, Figure~\ref{fig:linear-large-d} in Section~\ref{sec:add-exp-linear}), another sign that Proposition~\ref{prop:opnorm-control} could be improved. 
\end{remark}

\begin{proof}
We will use the fact that $\opnorm{\Sigma^{-1}}\leq \frobnorm{\Sigma^{-1}}$. 
We first write
\[
\frobnorm{\Sigma^{-1}}^2 = \frac{1}{\dencst_d^2}\left(\sigma_0^2 + 2d\sigma_1^2 + d\sigma_2^2 + (d^2-d)\sigma_3^2\right)
\, ,
\]
by definition of the $\sigma$ coefficients. 
On one hand, using Lemma~\ref{lemma:sigma-elements-control}, we write
\begin{align}
\sigma_0^2 + 2d\sigma_1^2 + d\sigma_2^2 + (d^2-d)\sigma_3^2 &\leq \frac{d^2}{9} + 2d + d\cdot (3d/2)^2 \exps{\frac{1}{\nu^2}} + (d^2-d)\cdot \frac{9}{4}\exps{\frac{1}{\nu^2}} \notag \\
&\leq 3d^3\exps{\frac{1}{\nu^2}} \label{eq:opnorm-control-aux-1}
\, ,
\end{align}
where we used $\dencst_d\leq d$ and $d\geq 2$ in the last display. 
On the other hand, a direct consequence of Proposition~\ref{prop:large-nu-makes-everything-ok} is that
\begin{equation}
\label{eq:opnorm-control-aux-2}
\frac{1}{\dencst_d^2} \leq 1600\exps{\frac{4}{\nu^2}}
\, .
\end{equation}
Putting together Eqs.~\eqref{eq:opnorm-control-aux-1} and~\eqref{eq:opnorm-control-aux-2}, we obtain the claimed result, since $\sqrt{3\cdot 1600}\leq 70$. 
\end{proof}

\section{The study of $\Gamma^f$}
\label{sec:study-of-gamma}

We now turn to the study of the (weighted) responses.  
In Section~\ref{sec:gamma-computation}, we obtain an explicit expression for the average responses. 
We show how to obtain closed-form expressions in the case of indicator functions in Section~\ref{sec:gamma-computation-indicator}. 
In the case of a linear model, we have to resort to approximations that are detailed in Section~\ref{sec:gamma-computation-linear}. 
Section~\ref{sec:concentration-gammahat} contains the concentration result for $\Gammahat_n$.

\subsection{Computation of $\Gamma^f$}
\label{sec:gamma-computation}

We start our study by giving an expression for $\Gamma^f$ for any $f$ under mild assumptions. 
Recall that we defined $\Gammahat_n=\frac{1}{n}Z^\top W y$, where $y\in\Reals^{d+1}$ is the random vector defined coordinate-wise by $y_i=f(x_i)$. 
From the definition of $\Gammahat_n$, it is straightforward that
\[ 
\Gammahat_n  =
\begin{pmatrix}
\frac{1}{n}\sum_{i=1}^n \pi_{i}f(\normtfidf{x_i}) \\ 
\frac{1}{n}\sum_{i=1}^{n} \pi_{i}{z_{i,1}}f(\normtfidf{x_i})\\ 
\vdots\\ 
\frac{1}{n}\sum_{i=1}^{n} \pi_{i}{z_{i,d}}f(\normtfidf{x_i}) \\ 
\end{pmatrix}
\in\Reals^{d+1}
\, .
\]
As a consequence, since we defined $\Gamma^f=\smallexpec{\Gammahat_n}$, it holds that
\begin{equation}
\label{eq:def-gamma}
\Gamma^f = 
\begin{pmatrix}
\expec{ \pi f(\normtfidf{x}) } \\ 
\expec{\pi z_1 f(\normtfidf{x})}\\ 
\vdots\\ 
\expec{\pi z_d f(\normtfidf{x})} \\ 
\end{pmatrix}
\, .
\end{equation}
Of course, Eq.~\eqref{eq:def-gamma} depends on the model $f$. 
These computations can be challenging. 
Nevertheless, it is possible to obtain exact results in simple situations. 

\paragraph{Constant model. }
As a warm up, let us show how to compute $\Gamma^f$ when $f$ is constant. 
Perhaps the simplest model of all: $f$ always returns the same value, whatever the value of $\normtfidf{x}$ may be. 
By linearity of $\Gamma^f$ (see Section~3.2 of the main paper), it is sufficient to consider the case $f=1$. 
From Eq.~\eqref{eq:def-gamma}, we see that 
\[
\Gamma^f_j =
\begin{cases}
\expec{\pi} &\text{ if } j = 0, \\
\expec{\pi z_j} &\text{ otherwise. }
\end{cases}
\]
We recognize the definitions of the $\alpha$ coefficients, and, more precisely, $\Gamma^f_0=\alpha_0$ and $\Gamma^f_j=\alpha_1$ if $j\geq 1$.

\subsection{Indicator functions}
\label{sec:gamma-computation-indicator}

Let us turn to a slightly more complicated class of models: indicator functions, or rather products of indicator functions. 
As explained in the paper, these functions fall into our framework.
We have the following result:  

\begin{proposition}[Computation of $\Gamma^f$, product of indicator functions]
\label{prop:gamma-indicator-general}
Set $J\subseteq \{1,\ldots,d\}$ a set of $p$ distinct indices. 
Define 
\[
f(\normtfidf{x}) \defeq \prod_{j\in J} \indic{\normtfidf{x}_j > 0}
\, .
\]
Then it holds that
\[
\Gamma_\ell^f = 
\begin{cases}
\alpha_p & \text{ if } \ell \in \{0\}\cup J \\
\alpha_{p+1} & \text{ otherwise.}
\end{cases}
\]
\end{proposition}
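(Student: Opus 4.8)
The plan is to compute $\Gamma^f_\ell = \expec{\pi z_\ell f(\normtfidf{x})}$ for each coordinate $\ell$ directly from Eq.~\eqref{eq:def-gamma}, exploiting the fact that $f$ is a product of indicators of the form $\indic{\normtfidf{x}_j>0}$. The first observation is that $\indic{\normtfidf{x}_j>0} = \indic{\word_j\in x} = z_j$ almost surely, because a word survives the deletion step precisely when its TF-IDF coordinate is nonzero. Hence $f(\normtfidf{x}) = \prod_{j\in J} z_j$ as a random variable, and so $\Gamma^f_0 = \expec{\pi \prod_{j\in J} z_j} = \alpha_p$ by Definition~\ref{def:alphas} (Eq.~\eqref{eq:def-alphas}), using that $\alpha_p$ does not depend on which $p$ indices are chosen. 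For $\ell\geq 1$ we must compute $\expec{\pi z_\ell \prod_{j\in J} z_j}$.

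The key case split is whether $\ell\in J$ or $\ell\notin J$. If $\ell\in J$, then $z_\ell \prod_{j\in J} z_j = \prod_{j\in J} z_j$ since $z_\ell\in\{0,1\}$ and $z_\ell^2 = z_\ell$; thus $\Gamma^f_\ell = \expec{\pi \prod_{j\in J} z_j} = \alpha_p$. If $\ell\notin J$, then $z_\ell \prod_{j\in J} z_j$ is a product of $p+1$ distinct interpretable features, so $\Gamma^f_\ell = \expec{\pi z_\ell \prod_{j\in J} z_j} = \alpha_{p+1}$, again invoking the index-independence of the $\alpha$ coefficients noted in Section~\ref{sec:computation-of-sigma}. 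Combining the three subcases ($\ell=0$, $\ell\in J$, $\ell\notin J\cup\{0\}$) gives exactly the claimed formula: $\Gamma^f_\ell = \alpha_p$ on $\{0\}\cup J$ and $\alpha_{p+1}$ otherwise.

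There is essentially no hard step here — the proof is a short computation. The only point that deserves a sentence of care is the identity $\indic{\normtfidf{x}_j>0} = z_j$ almost surely, which relies on the fact (stated in Section~\ref{sec:decision-trees}) that $x$ only contains words already present in $\xi$, so for $j$ in the local dictionary the $j$-th TF-IDF coordinate is positive iff $\word_j$ was not deleted, and for $j$ outside the local dictionary both sides are zero. Once this reduction to products of the $z_j$'s is in place, the result is immediate from the definition of the $\alpha$ coefficients and their invariance under relabeling of indices, and no combinatorial lemma (such as Lemma~\ref{lemma:proba-containing-cond}) is needed beyond what is already folded into Proposition~\ref{prop:alphas-computation}.
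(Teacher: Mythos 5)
Your proof is correct and follows essentially the same route as the paper's: reduce $f(\normtfidf{x})$ to the product $\prod_{j\in J} z_j$, use idempotence of the binary features when $\ell\in J$ so that the product collapses to $p$ distinct factors (giving $\alpha_p$), and recognize $\alpha_{p+1}$ when $\ell\notin J\cup\{0\}$. Your explicit justification of $\indic{\normtfidf{x}_j>0}=z_j$ is a welcome addition that the paper only references implicitly.
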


\begin{proof}
As noticed in the paper, $f$ can be written as a product of $z_j$s. 
Therefore, we only have to compute
\[
\Expec\biggl[\pi \prod_{j\in J}z_j\biggr] \quad \text{ and } \quad \Expec\biggl[\pi z_k \prod_{j\in J}z_j\biggr]
\, ,
\]
for any $1\leq k\leq d$. 
The first term is $\alpha_p$ by definition. 
For the second term, we notice that if $\ell \in \{0\}\cup J$, then two terms are identical in the product of binary features, and we recognize the definition of $\alpha_p$. 
In all other cases, there are no cancellation and we recover the definition of $\alpha_{p+1}$. 
\end{proof}

\subsection{Linear model}
\label{sec:gamma-computation-linear}

We now consider a linear model, that is, 
\begin{equation}
\label{eq:def-linear-model}
f(\normtfidf{x}) \defeq \sum_{j=1}^d \lambda_j \normtfidf{x}_j
\, ,
\end{equation}
where $\lambda_1,\ldots,\lambda_d$ are arbitrary fixed coefficients. 
In order to simplify the computations, we will consider that $\nu \to +\infty$ in this section. 
In that case, $\pi \cvas 1$. 
It is clear that $f$ is bounded on $\sphere{D-1}$, thus, by dominated convergence, 
\begin{equation}
\label{eq:def-gamma-infty}
\Gamma^f \longrightarrow \Gammainf \defeq 
\begin{pmatrix}
\expec{f(\normtfidf{x}} \\
\expec{z_1 f(\normtfidf{x}} \\
\vdots \\
\expec{z_d f(\normtfidf{x}}
\end{pmatrix}
\in\Reals^{d+1}
\, .
\end{equation}

By linearity of $f\mapsto \Gamma^f_{\infty}$, it is sufficient to compute $\expec{\normtfidf{x}_j}$ and $\expec{z_k \normtfidf{x}_j}$ for any $1\leq j,k\leq d$.

For any $1\leq j\leq d$, recall that we defined 
\[
\omega_k = \frac{m_j^2v_j^2}{\sum_{k=1}^d m_k^2v_k^2}
\, ,
\]
and $H_S \defeq \sum_{k \in S}\omega_k$, where $S$ is the random subset of indices chosen by LIME. 
The motivation for the definition of the random variable $H_S$ is the following proposition: it is possible to write the expected TF-IDF as an expression depending on $H_S$. 

\begin{proposition}[Expected normalized TF-IDF]
\label{prop:expected-tfidf}
Let $\word_j$ be a fixed word of $\xi$. 
Then, it holds that 
\begin{equation}
\label{eq:expected-tfidf}
\expec{\normtfidf{x}_j} = \expec{z_j\normtfidf{x}_j} = \frac{d-1}{2d}\cdot \normtfidf{\xi}_j \cdot \condexpec{\frac{1}{\sqrt{1 - H_S}}}{S\not\ni j}
\, ,
\end{equation}
and, for any $k\neq j$, 
\begin{equation}
\label{eq:expected-tfidf-z}
\expec{z_k\normtfidf{x}_j} = \frac{d-2}{3d} \cdot \normtfidf{\xi}_j \cdot \condexpec{\frac{1}{\sqrt{1-H_S}}}{S\not\ni j,k}
\, .
\end{equation}
\end{proposition}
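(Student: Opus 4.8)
The plan is to compute the three expectations $\expec{\normtfidf{x}_j}$, $\expec{z_j\normtfidf{x}_j}$, and $\expec{z_k\normtfidf{x}_j}$ directly from the sampling description, conditioning on the random subset $S$ of deleted indices. The first observation is that $\normtfidf{x}_j = 0$ whenever $j \in S$, so $\normtfidf{x}_j = z_j \normtfidf{x}_j$ almost surely; this immediately gives the first equality in~\eqref{eq:expected-tfidf}. Next, when $j \notin S$, the word $\word_j$ survives in $x$ with the same multiplicity $m_j$ it had in $\xi$, so by Definition~\ref{def:tf-idf} applied to the reduced document we get
\[
\normtfidf{x}_j = \frac{m_j v_j}{\sqrt{\sum_{k\notin S} m_k^2 v_k^2}} = \frac{m_j v_j}{\sqrt{\left(\sum_{k=1}^d m_k^2 v_k^2\right)(1 - H_S)}} = \frac{\normtfidf{\xi}_j}{\sqrt{1 - H_S}}\, ,
\]
where the middle equality uses $\sum_{k \notin S} m_k^2 v_k^2 = \left(\sum_{k=1}^d m_k^2 v_k^2\right)(1 - H_S)$ by definition of $H_S$ and $\omega_k$, and the last uses the definition of $\normtfidf{\xi}_j$. (Here I am tacitly using that the words not in $\xi$ contribute nothing, so the local dictionary is all that matters, as noted in Section~\ref{sec:decision-trees}.)

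First I would then apply the law of total expectation, splitting on the event $\{j \in S\}$ versus $\{j \notin S\}$. On $\{j \in S\}$ the contribution is zero; on $\{j \notin S\}$ we get $\normtfidf{\xi}_j \cdot \condexpec{(1-H_S)^{-1/2}}{S \not\ni j} \cdot \proba{S \not\ni j}$. So the only remaining ingredient is the probability $\proba{S \not\ni j}$. Using the LIME sampling scheme — first pick $s$ uniformly in $\{1,\ldots,d\}$, then pick $S$ uniformly among subsets of size $s$ — we have $\probaunder{j \notin S}{\card{S} = s} = \frac{d-s}{d} = 1 - \frac{s}{d}$, hence
\[
\proba{S \not\ni j} = \frac{1}{d}\sum_{s=1}^d \left(1 - \frac{s}{d}\right) = \frac{1}{d}\cdot\frac{d-1}{2} = \frac{d-1}{2d}\, .
\]
That yields~\eqref{eq:expected-tfidf}. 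For~\eqref{eq:expected-tfidf-z} with $k \neq j$, the same decomposition applies but now conditioning on $\{j \notin S\} \cap \{k \notin S\}$ (on the complement either $z_k = 0$ or $\normtfidf{x}_j = 0$), and I need $\proba{S \not\ni j, k} = \frac{1}{d}\sum_{s=1}^d \frac{(d-s)(d-s-1)}{d(d-1)} = \frac{d-2}{3d}$, which is exactly the $p=2$ case of Lemma~\ref{lemma:proba-containing} / the $\alpha_2$ computation, summed against the uniform distribution on $s$; the resulting arithmetic sum $\frac{1}{d(d-1)}\sum_s (d-s)(d-s-1)$ telescopes to $\frac{d-2}{3d}$.

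I do not anticipate a serious obstacle here — the result is essentially a bookkeeping computation — but the step requiring the most care is verifying the identity $\normtfidf{x}_j = \normtfidf{\xi}_j / \sqrt{1 - H_S}$ on $\{j \notin S\}$: one must be careful that deleting words only removes \emph{terms} from the denominator sum (it does not rescale $m_j$ or $v_j$, since $v_j$ is precomputed from the fixed corpus $\corp$ and $m_j$ is the multiplicity, which is all-or-nothing under LIME's deletion), and that the words outside the local dictionary never contributed. The other mild subtlety is the edge behavior: if $S$ happens to contain \emph{all} indices except possibly $j$, then $H_S$ could be close to $1$; but conditionally on $S \not\ni j$ we always have $\omega_j > 0$ excluded from $H_S$, so $1 - H_S \geq \omega_j > 0$ and the reciprocal square root is well-defined, which is what makes $E_j$ and $E_{j,k}$ finite. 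Finally, I would note that the large-bandwidth assumption $\nu \to +\infty$ (hence $\pi \to 1$ a.s.\ and dominated convergence replacing $\Gamma^f$ by $\Gammainf$, as in~\eqref{eq:def-gamma-infty}) is what lets me drop the weights $\pi$ from all these expectations.
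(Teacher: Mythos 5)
Your proof is correct and follows essentially the same route as the paper's: split the expectation on the event $\{j\notin S\}$ (equivalently $\{\word_j\in x\}$), use the identity $\normtfidf{x}_j = \normtfidf{\xi}_j/\sqrt{1-H_S}$ on that event, and multiply by $\proba{S\not\ni j}=\frac{d-1}{2d}$ (resp.\ $\proba{S\not\ni j,k}=\frac{d-2}{3d}$), which is the $p=1$ (resp.\ $p=2$) case of Lemma~\ref{lemma:proba-containing}. Your extra remarks --- that $1-H_S\geq \omega_j>0$ on $\{S\not\ni j\}$ so the conditional expectation is well defined, and that the statement is free of the weights $\pi$ because it is invoked in the $\nu\to+\infty$ regime --- are correct and slightly more careful than the paper's own write-up.
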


\begin{proof}
We start by proving Eq~\eqref{eq:expected-tfidf}. 
Let us split the expectation depending on $\word_j\in x$. 
Since the term frequency is $0$ if $\word_j\notin x$, we have
\begin{equation}
\label{eq:tfidf-aux-1}
\expec{\normtfidf{x}_j} = \condexpec{\normtfidf{x}_j}{w_j\in x} \proba{w_j\in x}
\, .
\end{equation}
Lemma~\ref{lemma:proba-containing} gives us the value of $\proba{w_j\in x}$. 
Let us focus on the TF-IDF term in Eq.~\eqref{eq:tfidf-aux-1}. 
By definition, it is the product of the term frequency and the inverse document frequency, normalized. 
Since the latter does not change when words are removed from $\xi$, only the norm changes: we have to remove all terms indexed by $S$. 
For any $1\leq j\leq d$, let us set $m_j$ (resp. $v_j$) the term frequency (resp. the inverse term frequency) of $\word_j$ 
Conditionally to $\{\word_j\in x\}$,
\[
\normtfidf{x}_j = \frac{m_j v_j}{\sqrt{\sum_{k \notin S} m_k^2 v_k^2}}
\, .
\]
Let us factor out $\normtfidf{\xi}_j$ in the previous display. 
By definition of $H_S$, we have
\[
\normtfidf{x}_j = \normtfidf{\xi}_j \cdot \frac{1}{\sqrt{1 - \sum_{k\in S} \frac{m_k^2v_k^2}{\norm{\tfidf{\xi}}^2}}} = \normtfidf{\xi}_j \cdot \frac{1}{\sqrt{1-H_S}}
\, .
\]
Since $\{w_j\in x\}$ is equivalent to $\{j\notin S\}$ by construction, we can conclude. 
The proof of the second statement is similar; one just has to condition with respect to $\{w_j,w_k\in x\}$ instead, which is equivalent to $\{S\not\ni j,k\}$. 
\end{proof}

As a direct consequence of Proposition~\ref{prop:expected-tfidf}, we can derive $\Gamma_{\infty}^f=\lim_{\nu\to +\infty}\Gamma^f$ when $f:x\mapsto x_j$. 
Recall that we set $E_j = \condexpec{(1-H_S)^{-1/2}}{S\not\ni j}$ and $E_{j,k} = \condexpec{(1-H_S)^{-1/2}}{S\not\ni j,k}$. 
Then
\begin{equation}
\label{eq:gamma-computation-linear}
\left(\Gamma_{\infty}^f\right)_k = 
\begin{cases}
\left(\frac{1}{2}-\frac{1}{2d}\right) \cdot E_j \cdot \normtfidf{\xi}_j &\text{ if } k=0 \text{ or } k=j, \\
\left(\frac{1}{3}-\frac{2}{3d}\right) \cdot E_{j,k} \cdot \normtfidf{\xi}_j  &\text{ otherwise.}
\end{cases}
\end{equation}

In practice, the expectation computations required to evaluate $E_j$ and $E_{j,k}$ are not tractable as soon as $d$ is large. 
Indeed, in that case, the law of $H_S$ is unknown and approximating the expectation by Monte-Carlo methods requires is hard since one has to sum over all subsets and there are $\bigo{2^d}$ subsets $S$ such that $S\subseteq \{1,\ldots,d\}$. 
Therefore we resort to approximate expressions for these expected values computations. 

We start by writing
\begin{equation}
\label{eq:main-approx-expec}
\expec{\frac{1}{\sqrt{1-X}}} \approx \frac{1}{\sqrt{1-\expec{X}}}
\, .
\end{equation}
All that is left to compute will be $\condexpec{H_S}{S\not\ni j}$ and $\condexpec{H_S}{S\not\ni j,k}$. 
We see in Section~\ref{sec:subsets-sums} that after some combinatoric considerations, it is possible to obtain these expected values as a function of $\omega_j$ and $\omega_k$. 
More precisely, Lemma~\ref{lemma:expectation-computation} states that
\begin{equation}
\label{eq:approx-expec-hs}
\condexpec{H_S}{S\not\ni j} = \frac{1-\omega_j}{3} + \bigo{\frac{1}{d}}
\quad \text{ and }\quad 
\condexpec{H_S}{S\not\ni j,k} = \frac{1-\omega_j-\omega_k}{4} + \bigo{\frac{1}{d}}
\, .
\end{equation}
When $d$ is large and the $\omega_k$s are small, using Eq.~\eqref{eq:main-approx-expec}, we obtain the following approximations:
\begin{equation}
\label{eq:approx-norm-tfidf}
\expec{\normtfidf{x}_j} \approx \frac{1}{2}\cdot \sqrt{\frac{1}{1-\frac{1}{3}}} \cdot \normtfidf{\xi}_j \approx 0.61 \cdot \normtfidf{\xi}_j
\, ,
\end{equation}
and, for any $k\neq j$, 
\begin{equation}
\label{eq:approx-norm-tfidf-2}
\expec{z_k\normtfidf{x}_j} \approx \frac{1}{3}\cdot \sqrt{\frac{1}{1-\frac{1}{4}}}\cdot \normtfidf{\xi}_j \approx 0.38 \cdot \normtfidf{\xi}_j
\, .
\end{equation}
For all practical purposes, we will use Eq.~\eqref{eq:approx-norm-tfidf} and~\eqref{eq:approx-norm-tfidf-2}. 

\begin{remark}
One could obtain better approximations than above in two ways. 
First, it is possible to take into account the dependency in $\omega_j$ and $\omega_k$ in the expectation of $H_S$. 
That is, plugging Eq.~\eqref{eq:approx-expec-hs} into Eq.~\eqref{eq:main-approx-expec} instead of the numerical values $1/3$ and $1/4$. 
This yields more accurate, but more complicated formulas. 
Without being so precise, it is also possible to consider an arbitrary distribution for the $\omega_k$s (for instance, assuming that the term frequencies follow the Zipf's law \citep{powers_1998}). 
Second, since the mapping $\theta : x\mapsto \frac{1}{\sqrt{1-x}}$ is convex, by Jensen's inequality, we are always \emph{underestimating} by considering $\theta(\expec{X})$ instead of $\expec{\theta(X)}$. 
Going further in the Taylor expansion of $\theta$ is a way to fix this problem, namely using
\[
\expec{\frac{1}{\sqrt{1-X}}} \approx \frac{1}{\sqrt{1-\expec{X}}} + \frac{3\var{X}}{8\sqrt{1-\expec{X}}}
\, ,
\]
instead of Eq.~\eqref{eq:main-approx-expec}.
We found that \textbf{it was not useful to do so from an experimental point of view:} our theoretical predictions match the experimental results while remaining simple enough. 
\end{remark}

\subsection{Concentration of $\Gammahat_n$}
\label{sec:concentration-gammahat}

We now show that $\Gammahat_n$ is concentrated around $\Gamma^f$. 
Since the expression of $\Gammahat_n$ is the same than in the tabular case, and since $f$ is bounded on the unit sphere $\sphere{D-1}$, the same reasoning as in the proof of Proposition~24 in \citet{garreau_luxburg_2020_arxiv} can be applied. 

\begin{proposition}[Concentration of $\Gammahat_n$]
\label{prop:concentration-gammahat}
Assume that $f$ is bounded by $M>0$ on $\sphere{D-1}$. 
Then, for any $t>0$, it holds that 
\[
\proba{\smallnorm{\Gammahat_n - \Gamma^f} \geq t} \leq 4d \exp{\frac{-nt^2}{32Md^2}}
\, .
\]
\end{proposition}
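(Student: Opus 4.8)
The plan is to reduce the concentration of $\Gammahat_n$ to a matrix (or rather vector) Hoeffding-type bound, exactly as was done for $\Sigmahat_n$ in Proposition~\ref{prop:sigmahat-concentration}. Write $\Gammahat_n = \frac{1}{n}\sum_{i=1}^n \pi_i y_i Z_i$, where $Z_i \defeq (1,z_i^\top)^\top \in \{0,1\}^{d+1}$, $y_i = f(\normtfidf{x_i})$, and the summands $G_i \defeq \pi_i y_i Z_i$ are i.i.d.\ copies of a single random vector $G \defeq \pi y Z$. Since $\Gamma^f = \smallexpec{\Gammahat_n} = \expec{G}$, we have $\Gammahat_n - \Gamma^f = \frac{1}{n}\sum_{i=1}^n (G_i - \expec{G})$, a sum of i.i.d.\ centered random vectors, and the task is to bound the probability that its Euclidean norm exceeds $t$.

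The key step is the boundedness of each $G_i$. The weights satisfy $0 < \pi_i \leq 1$ by Eq.~\eqref{eq:def-weights}, each coordinate of $Z_i$ is in $\{0,1\}$, and by assumption $\abs{f(\normtfidf{x_i})} \leq M$ since $x_i$ lives on $\sphere{D-1}$ (indeed $\norm{\phi(x_i)}=1$ by Definition~\ref{def:tf-idf}, and in fact $x_i$ only deletes words from $\xi$, so $\phi(x_i)\in\sphere{D-1}$ whenever $x_i$ is nonempty; the empty-document case contributes a bounded term as well). Hence $\norm{G_i} \leq M\sqrt{d+1} \leq M\sqrt{2d}$ almost surely, and likewise $\norm{\expec{G}} \leq M\sqrt{2d}$ by Jensen, so $\norm{G_i - \expec{G}} \leq 2M\sqrt{2d}$. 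First I would invoke the vector Hoeffding / Azuma-type inequality already used in the tabular analysis — this is precisely the setting of Theorem~21 (and its use in Proposition~24) of \citet{garreau_luxburg_2020_arxiv} — applied coordinatewise or directly to the vector, to obtain a tail of the form $\proba{\smallnorm{\Gammahat_n - \Gamma^f} \geq t} \leq 2(d+1)\exp(-cnt^2/(M^2 d))$ for a suitable constant; then I would absorb constants and the $(d+1)$ versus $d$ discrepancy into the stated form $4d\exp(-nt^2/(32Md^2))$, noting the bound on each entry of $G_i$ scales like $M d$ so that the per-coordinate variance proxy is at most $M^2 d^2$ up to constants (being slightly wasteful here is harmless).

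I do not expect a genuine obstacle: the entire argument is the verbatim analogue of Proposition~\ref{prop:sigmahat-concentration}, and the authors explicitly say ``the same reasoning as in the proof of Proposition~24 in \citet{garreau_luxburg_2020_arxiv} can be applied.'' The only point requiring a moment of care is confirming that the factor $M$ (rather than $M^2$) in the exponent is what one gets — this comes from the fact that the relevant second-moment bound $\expec{\pi^2 y^2 z_{j}^2}$ is controlled by $M$ times a quantity of order $d$ times a probability, not by $M^2$, because $\abs{y}\le M$ lets one pull out one factor of $M$ and bound the remaining $\abs{y}$ by $M$ only where needed; tracking this is the one line of the proof where the text data setting differs cosmetically from the $M=1$ normalization one might naively use, but it introduces no difficulty. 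The remainder is bookkeeping of constants.
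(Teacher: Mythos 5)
Your proposal is correct and follows essentially the same route as the paper: the paper's own proof simply observes that $\norm{\normtfidf{x}}=1$ almost surely, hence $\abs{f(\normtfidf{x})}\leq M$, and then invokes the same Hoeffding-type argument from Proposition~24 of \citet{garreau_luxburg_2020_arxiv} that you cite, with the constant bookkeeping left implicit. The one point you rightly flag --- whether the exponent should carry $M$ or $M^2$ --- is also glossed over in the paper, which inherits the $32Md^2$ normalization directly from the referenced proposition rather than rederiving it.
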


\begin{proof}
Recall that $\norm{\normtfidf{x}}=1$ almost surely. 
Since $f$ is bounded by $M$ on $\sphere{D-1}$, it holds that $\abs{f(\normtfidf{x})}\leq M$ almost surely. 
We can then proceed as in the proof of Proposition~24 in \citet{garreau_luxburg_2020_arxiv}. 
\end{proof}

\section{The study of $\beta^f$}
\label{sec:study-of-beta}

In this section, we study the interpretable coefficients. 
We start with the computation of $\beta^f$ in Section~\ref{sec:beta-computation}. 
In Section~\ref{sec:betahat-concentration}, we show how $\betahat_n$ concentrates around $\beta^f$. 

\subsection{Computation of $\beta^f$}
\label{sec:beta-computation}

Recall that, for any model $f$, we have defined $\beta^f = \Sigma^{-1}\Gamma^f$. 
Directly multiplying the expressions found for $\Sigma^{-1}$ (Eq.~\eqref{eq:sigma-inverse-computation}) and $\Gamma^f$ (Eq.~\eqref{eq:def-gamma}) obtained in the previous sections, we obtain the expression of $\beta^f$ in the general case (this is Proposition~2 in the paper). 

\begin{proposition}[Computation of $\beta^f$, general case]
\label{prop:beta-computation-general}
Assume that $f$ is bounded on the unit sphere. 
Then 
\begin{equation}
\label{eq:beta-computation-intercept}
\beta^f_0 = \dencst^{-1}_d\biggl\{\sigma_0\expec{\pi f(\normtfidf{x})} + \sigma_1\sum_{k=1}^d \expec{\pi z_k f(\normtfidf{x})}\biggr\}
\, ,
\end{equation}
and, for any $1\leq j\leq d$, 
\begin{equation}
\label{eq:beta-computation-general}
\beta^f_j = 
\dencst^{-1}_d\biggl\{\sigma_1 \expec{\pi f(\normtfidf{x})} + \sigma_2 \expec{\pi z_j f(\normtfidf{x})} + \sigma_3 \sum_{\substack{k=1 \\ k\neq j}}^d \expec{\pi z_k f(\normtfidf{x})}\biggr\}
\, .
\end{equation}
\end{proposition}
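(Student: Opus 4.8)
The plan is to obtain $\beta^f = \Sigma^{-1}\Gamma^f$ by explicitly multiplying the closed-form inverse from Proposition~\ref{prop:sigma-inverse-computation} by the vector $\Gamma^f$ from Eq.~\eqref{eq:def-gamma}. First I would note that the hypotheses of Proposition~\ref{prop:sigma-inverse-computation} are in force: Proposition~\ref{prop:large-nu-makes-everything-ok} guarantees $\dencst_d > 0$ and $\alpha_1 \neq \alpha_2$, so $\Sigma$ is invertible and the stated formula for $\Sigma^{-1}$ applies. Since $f$ is bounded on $\sphere{D-1}$ and $\norm{\normtfidf{x}} = 1$ almost surely, all the expectations $\expec{\pi f(\normtfidf{x})}$ and $\expec{\pi z_k f(\normtfidf{x})}$ appearing in $\Gamma^f$ are finite, so the matrix-vector product is well-defined.

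Next I would carry out the product row by row. Writing $\Gamma^f = (\gamma_0, \gamma_1, \ldots, \gamma_d)^\top$ with $\gamma_0 = \expec{\pi f(\normtfidf{x})}$ and $\gamma_k = \expec{\pi z_k f(\normtfidf{x})}$ for $1 \leq k \leq d$, the zeroth row of $\dencst_d \Sigma^{-1}$ is $(\sigma_0, \sigma_1, \ldots, \sigma_1)$, which gives $\dencst_d \beta_0^f = \sigma_0 \gamma_0 + \sigma_1 \sum_{k=1}^d \gamma_k$, i.e.\ Eq.~\eqref{eq:beta-computation-intercept}. For a row $j$ with $1 \leq j \leq d$, the entries of $\dencst_d \Sigma^{-1}$ are: $\sigma_1$ in column $0$, $\sigma_2$ in column $j$, and $\sigma_3$ in every other column $k \in \{1,\ldots,d\}\setminus\{j\}$. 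Hence $\dencst_d \beta_j^f = \sigma_1 \gamma_0 + \sigma_2 \gamma_j + \sigma_3 \sum_{k \neq j} \gamma_k$, which is exactly Eq.~\eqref{eq:beta-computation-general} after substituting back the expressions for the $\gamma$'s. Dividing through by $\dencst_d$ completes the derivation.

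This proof is essentially a bookkeeping exercise once Proposition~\ref{prop:sigma-inverse-computation} is available, so there is no real obstacle; the only point requiring a little care is matching the block structure of $\Sigma^{-1}$ to the coordinates of $\Gamma^f$ and confirming that the off-diagonal pattern of $\sigma_3$'s excludes exactly the entry $j$ (and the intercept column, which carries $\sigma_1$). The nontrivial content has already been done upstream, namely the identification of the $\alpha$ coefficients in $\Sigma$ (Section~\ref{sec:computation-of-sigma}), the closed-form inversion (Proposition~\ref{prop:sigma-inverse-computation}, verified by checking $\Sigma \Sigma^{-1} = \Identity_{d+1}$), and the invertibility bounds (Proposition~\ref{prop:large-nu-makes-everything-ok}). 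I would therefore keep the proof short, citing these three results and displaying the single matrix-vector multiplication.
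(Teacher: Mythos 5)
Your proposal is correct and is essentially identical to the paper's argument: the paper also obtains $\beta^f=\Sigma^{-1}\Gamma^f$ by directly multiplying the closed-form inverse from Proposition~\ref{prop:sigma-inverse-computation} with the vector $\Gamma^f$ of Eq.~\eqref{eq:def-gamma}. Your additional remarks on the hypotheses (invertibility via Proposition~\ref{prop:large-nu-makes-everything-ok} and finiteness of the expectations from boundedness of $f$ on $\sphere{D-1}$) are accurate and, if anything, slightly more careful than the paper's one-line justification.
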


This is Proposition~2 in the paper, with the additional expression of the intercept $\beta_0^f$. 
Let us see how to obtain an approximate, simple expression when both the bandwidth parameter and the size of the local dictionary are large.  
When $\nu\to +\infty$, using Corollary~\ref{cor:approximate-sigma-inverse}, we find that 
\[
\beta_0^f \longrightarrow \left(\betainf^f\right)_0\defeq \frac{4d-2}{d+1}\expec{\pi f(\normtfidf{x})} - \frac{6}{d+1}\sum_{k=1}^d \expec{\pi z_k f(\normtfidf{x})}
\, ,
\]
and, for any $1\leq j\leq d$,
\[
\beta_j^f \longrightarrow \left(\betainf^f\right)_j \defeq \frac{-6}{d+1}\expec{\pi f(\normtfidf{x})} + \frac{6(d^2-2d+3)}{d^2-1}\expec{\pi z_j f(\normtfidf{x})} - \frac{6(d-3)}{d^2-1}\sum_{k\neq j} \expec{\pi z_k f(\normtfidf{x})}
\, .
\]
For large $d$, since $f$ is bounded on $\sphere{D-1}$, we find that
\[
\left(\betainf^f\right)_0 = 4\expec{\pi f(\normtfidf{x})} - \frac{6}{d}\sum_{k=1}^d \expec{\pi z_k f(\normtfidf{x})} + \bigo{\frac{1}{d}}
\, ,
\]
and, for any $1\leq j\leq d$,
\[
\left(\betainf^f\right)_j =  6\expec{\pi z_j f(\normtfidf{x})} - \frac{6}{d}\sum_{k\neq j} \expec{\pi z_k f(\normtfidf{x})} + \bigo{\frac{1}{d}}
\, .
\]
Now, by definition of the interpretable features, for any $1\leq j\leq d$, 
\begin{align*}
\expec{\pi z_j f(\normtfidf{x})} &= \condexpec{\pi z_j f(\normtfidf{x})}{\word_j \in x} \cdot \proba{\word_j \in x} + \condexpec{\pi z_j f(\normtfidf{x})}{\word_j \notin x} \cdot \proba{\word_j \notin x} \\
&= \condexpec{\pi f(\normtfidf{x})}{\word_j \in x}\cdot \frac{d-1}{2d} + 0
\, ,
\end{align*}
where we used Lemma~\ref{lemma:proba-containing} in the last display. 
Therefore, we have the following approximations of the interpretable coefficients: 
\begin{equation}
\label{eq:betainf-simplified-intercept}
\left(\betainf^f\right)_0 = 2\expec{\pi f(\normtfidf{x})} - \frac{3}{d}\sum_k \condexpec{\pi f(\normtfidf{x})}{\word_k\in x} + \bigo{\frac{1}{d}}
\, ,
\end{equation}
and, for any $1\leq j\leq d$,
\begin{equation}
\label{eq:betainf-simplified}
\left(\betainf^f\right)_j = 3\condexpec{\pi f(\normtfidf{x})}{\word_j\in x} - \frac{3}{d}\sum_k \condexpec{\pi f(\normtfidf{x})}{\word_k\in x} + \bigo{\frac{1}{d}}
\, .
\end{equation}
The last display is the approximation of Proposition~\ref{prop:beta-computation-general} presented in the paper. 

\begin{remark}
In \citet{garreau_luxburg_2020_arxiv}, it is noted that LIME for tabular data provably ignores unused coordinates.
In other words, if the model $f$ does not depend on coordinate $j$, then the explanation $\beta^f_j$ is $0$. 
We could not prove such a statement in the case of text data, even for simplified expressions such as Eq.~\eqref{eq:betainf-simplified}. 
\end{remark}

We now show how to compute~$\beta^f$ in specific cases, thus returning to generic $\nu$ and $d$.

\paragraph{Constant model. }
As a warm up exercise, let us assume that $f$ is a constant, which we set to $1$ without loss of generality (by linearity). 
Recall that, in that case, $\Gamma^f_0=\alpha_0$ and $\Gamma^f_j=\alpha_1$ for any $1\leq j\leq d$. 
From the definition of $\dencst_d$ and the $\sigma$ coefficients (Proposition~\ref{prop:sigma-inverse-computation}), we find that 
\[
\begin{cases}
\sigma_0 \alpha_0 + d\sigma_1\alpha_1 &= \dencst_d \, ,\\
\sigma_1\alpha_0 + \sigma_2\alpha_1 + (d-1)\sigma_3\alpha_1 &= 0 
\, .
\end{cases}
\]
We deduce from Proposition~\ref{prop:beta-computation-general} that $\beta^f_0=1$ and $\beta^f_j=0$ for any $1\leq j\leq d$. 
This is conform to our intuition: if the model is constant, then no word should receive nonzero weight in the explanation provided by Text LIME.  

\paragraph{Indicator functions. }
We now turn to indicator functions, more precisely \emph{products} of indicator functions. 
We will prove the following (Proposition~3 in the paper):

\begin{proposition}[Computation of $\beta^f$, product of indicator functions]
\label{prop:beta-computation-indicator-product-general}
Let $j\subseteq \{1,\ldots,d\}$ be a set of $p$ distinct indices and set $f(x) = \prod_{j\in J}\indic{x_j>0}$. 
Then 
\[
\begin{cases}
\beta_0^f &= \dencst_d^{-1}\left(\sigma_0\alpha_p+p\sigma_1\alpha_p+(d-p)\sigma_1\alpha_{p+1}\right)\, , \\
\beta_j^f &= \dencst_d^{-1}\left(\sigma_1\alpha_p + \sigma_2\alpha_p + (d-p)\sigma_3\alpha_{p+1} + (p-1)\sigma_3\alpha_p\right) \text{ if }j \in J\, ,\\
\beta_j^f &= \dencst_d^{-1}\left(\sigma_1\alpha_p+\sigma_2\alpha_{p+1}+(d-p-1)\sigma_3\alpha_{p+1} + p\sigma_3\alpha_p \right) \text{ otherwise}
\, .
\end{cases}
\]
\end{proposition}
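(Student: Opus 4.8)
The plan is to obtain the statement by directly feeding the value of $\Gamma^f$ for products of indicators into the closed-form expression of $\beta^f=\Sigma^{-1}\Gamma^f$; no new analysis is required beyond careful combinatorial bookkeeping. First I would record that, since $\normtfidf{x}_j>0$ exactly when $\word_j\in x$, the function $f(\normtfidf{x})=\prod_{j\in J}\indic{\normtfidf{x}_j>0}$ coincides with $\prod_{j\in J}z_j$; in particular $f$ takes values in $\{0,1\}$ and is therefore bounded on $\sphere{D-1}$. Combined with Proposition~\ref{prop:large-nu-makes-everything-ok} (which yields $\dencst_d>0$ and $\alpha_1\neq\alpha_2$), this ensures that Proposition~\ref{prop:beta-computation-general} is applicable, so that $\beta_0^f$ and $\beta_j^f$ are given by Eqs.~\eqref{eq:beta-computation-intercept} and~\eqref{eq:beta-computation-general}. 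Proposition~\ref{prop:gamma-indicator-general} then tells us that $\expec{\pi f(\normtfidf{x})}=\Gamma_0^f=\alpha_p$ and, for $1\leq k\leq d$, $\expec{\pi z_k f(\normtfidf{x})}=\Gamma_k^f$ equals $\alpha_p$ when $k\in J$ and $\alpha_{p+1}$ otherwise.

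The core of the argument is then to count, in each of the sums $\sum_{k=1}^d$ and $\sum_{k\neq j}$ appearing in Proposition~\ref{prop:beta-computation-general}, how many indices lie in $J$ and how many lie outside. For the intercept, $\sum_{k=1}^d\Gamma_k^f$ contributes $p$ copies of $\alpha_p$ and $d-p$ copies of $\alpha_{p+1}$, which after substitution into Eq.~\eqref{eq:beta-computation-intercept} and collecting terms gives the first line. For $\beta_j^f$ with $j\in J$ I would use $\Gamma_j^f=\alpha_p$ and note that $\sum_{k\neq j}\Gamma_k^f$ now has $p-1$ copies of $\alpha_p$ (the remaining elements of $J$) and $d-p$ copies of $\alpha_{p+1}$; plugging into Eq.~\eqref{eq:beta-computation-general} and rearranging yields the second line. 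For $j\notin J$ I would use $\Gamma_j^f=\alpha_{p+1}$ and observe that $\sum_{k\neq j}\Gamma_k^f$ has $p$ copies of $\alpha_p$ (all of $J$ survives) and $d-p-1$ copies of $\alpha_{p+1}$, producing the third line.

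The only point requiring attention — rather than a genuine obstacle — is the case split on whether $j\in J$: deleting $j$ from the summation range lowers the count of $J$-indices from $p$ to $p-1$ precisely when $j\in J$, and lowers the count of non-$J$ indices from $d-p$ to $d-p-1$ precisely when $j\notin J$. Once this is handled, everything is exact substitution and reordering of terms; no approximation or estimate is involved, so the proof is short.
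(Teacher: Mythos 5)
Your proposal is correct and follows exactly the paper's route: the paper's own proof is a one-liner stating that the result is straightforward from Proposition~\ref{prop:gamma-indicator-general} and Proposition~\ref{prop:beta-computation-general}, and your careful count of how many indices in each sum contribute $\alpha_p$ versus $\alpha_{p+1}$ (with the $j\in J$ case split) is precisely the bookkeeping the paper leaves implicit. All three resulting expressions check out against the statement.
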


\begin{proof}
The proof is straightforward from  Proposition~\ref{prop:gamma-indicator-general} and Proposition~\ref{prop:beta-computation-general}.  
\end{proof}

\paragraph{Linear model. }
In this last paragraph, we treat the linear case. 
As noted in Section~\ref{sec:gamma-computation-linear}, we have to resort to approximate computations: in this paragraph, we assume that $\nu = +\infty$. 
We start with the simplest linear function: all coefficients are zero except one (this is Proposition~4 in the paper). 

\begin{proposition}[Computation of $\beta^f$, linear case]
\label{prop:beta-computation-linear}
Let $1\leq j\leq d$ and assume that $f(\normtfidf{x})=\normtfidf{x}_j$. 
Recall that we set $E_j= \condexpec{(1-H_S)^{-1/2}}{S\not\ni j}$ and for any $k\neq j$, $E_{j,k} = \condexpec{(1-H_S)^{-1/2}}{S\not\ni j,k}$. 
Then 
\begin{equation*}
\left(\beta_\infty^f\right)_0 =
\left\{5 E_j -\frac{2}{d} \sum_{k \neq j}E_{j,k}\right\} \normtfidf{\xi}_j + \bigo{\frac{1}{d}}
\end{equation*}
for any $k\neq j$, 
\begin{equation*}
\left(\beta_\infty^f\right)_k = 
\left\{2E_{j,1} -\frac{2}{d}\sum_{\ell \neq k,j}E_{j,\ell} \right\}\normtfidf{\xi}_j + \bigo{\frac{1}{d}}
\, ,
\end{equation*}
and
\begin{equation*}
\left(\beta_\infty^f\right)_j =
\left\{3E_j -\frac{2}{d} \sum_{k \neq j}E_{j,k}\right\} \normtfidf{\xi}_j + \bigo{\frac{1}{d}}
\, .
\end{equation*}
\end{proposition}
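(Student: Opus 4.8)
The plan is to multiply the closed-form expressions already derived for $\Sigma^{-1}$ and $\Gamma^f$ and to read off the leading order in $1/d$. Since $f$ is linear it is bounded on $\sphere{D-1}$ and $\pi\cvas 1$ as $\nu\to+\infty$, so by dominated convergence (exactly as in the derivation of Eq.~\eqref{eq:def-gamma-infty}) $\beta^f=\Sigma^{-1}\Gamma^f$ converges to the product of the limiting $\Sigma^{-1}$ and $\Gammainf^f$, and it suffices to evaluate that product. For $f(\normtfidf{x})=\normtfidf{x}_j$ as in the statement, Proposition~\ref{prop:expected-tfidf} gives $\Gammainf^f$ in closed form: its $0$-th and $j$-th coordinates both equal $\tfrac{d-1}{2d}E_j\normtfidf{\xi}_j$, and every other coordinate $k$ equals $\tfrac{d-2}{3d}E_{j,k}\normtfidf{\xi}_j$, which is precisely Eq.~\eqref{eq:gamma-computation-linear}.

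Next I contract this vector against each row of $\Sigma^{-1}$, using the pattern of Eq.~\eqref{eq:sigma-inverse-computation}: the $0$-th row pairs $\sigma_0$ with $\Gamma_0^f$ and $\sigma_1$ with each of $\Gamma_1^f,\dots,\Gamma_d^f$; the $j$-th row pairs $\sigma_1$ with $\Gamma_0^f$, $\sigma_2$ with $\Gamma_j^f$ and $\sigma_3$ with every remaining $\Gamma_k^f$; and for $k\neq j$ the $k$-th row pairs $\sigma_1$ with $\Gamma_0^f$, $\sigma_2$ with $\Gamma_k^f$ and $\sigma_3$ with all the other $\Gamma_\ell^f$. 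Into the three resulting scalar expressions I substitute the large-bandwidth expansions from Corollary~\ref{cor:approximate-sigma-inverse}, namely $\sigma_0/\dencst_d=4-6/d+\bigo{1/d^2}$, $\sigma_1/\dencst_d=-6/d+\bigo{1/d^2}$, $\sigma_2/\dencst_d=6-12/d+\bigo{1/d^2}$ and $\sigma_3/\dencst_d=-6/d+\bigo{1/d^2}$, I replace $\tfrac{d-1}{2d}$ by $\tfrac12$ and $\tfrac{d-2}{3d}$ by $\tfrac13$ in the leading terms, and I collect; this produces the stated expressions for $(\betainf^f)_0$, $(\betainf^f)_k$ and $(\betainf^f)_j$. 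No approximation beyond $\nu=+\infty$ and the $\bigo{1/d}$ truncation enters at this stage; the later replacement of $E_j,E_{j,k}$ by numerical constants is a separate matter handled through Eq.~\eqref{eq:main-approx-expec}.

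The one step needing care — and the only genuine obstacle — is keeping track of orders in the ``cross'' contributions. An isolated term such as $(\sigma_1/\dencst_d)\,\Gamma_0^f$ or $(\sigma_3/\dencst_d)\,\Gamma_j^f$ is $\bigo{1/d}\cdot\bigo{1}$ and disappears into the remainder, but a sum like $(\sigma_1/\dencst_d)\sum_{k\neq j}\Gamma_k^f$ or $(\sigma_3/\dencst_d)\sum_{\ell\neq j,k}\Gamma_\ell^f$ runs over $\Theta(d)$ terms, each $\bigo{1/d}$, and so survives at order $1$: these are exactly the $-\tfrac{2}{d}\sum E_{j,\cdot}\,\normtfidf{\xi}_j$ terms, and the ``small'' coefficient there must not be dropped. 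Conversely, replacing $\tfrac{d-2}{3d}$ by $\tfrac13$ inside such a sum introduces a $\bigo{1/d}$ relative error on a $\Theta(d)$-term sum that already carries a $\tfrac1d$ prefactor, so the net error stays $\bigo{1/d}$. That all the $\bigo{\cdot}$ terms are uniform (not hiding a blow-up as $d$ grows or in the remaining parameters) is guaranteed by the two-sided bounds on $\dencst_d$ and on the $\sigma$ coefficients from Proposition~\ref{prop:large-nu-makes-everything-ok} and Lemma~\ref{lemma:sigma-elements-control}. Everything else is routine substitution into formulas already established.
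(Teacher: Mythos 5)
Your proposal is correct and is essentially the paper's own proof: the paper's argument is precisely ``straightforward from Eqs.~\eqref{eq:def-sigma-infty} and~\eqref{eq:gamma-computation-linear},'' i.e.\ contract the large-bandwidth expansion of $\Sigma^{-1}$ against $\Gamma_\infty^f$ and track which cross-terms survive because they are sums of $\Theta(d)$ contributions of size $\bigo{1/d}$ — exactly the bookkeeping you spell out. One remark: carrying out your computation for the intercept gives $(\sigma_0/\dencst_d)\cdot\tfrac12 E_j \approx 2E_j$ rather than the stated $5E_j$ (and the $E_{j,1}$ in the $k\neq j$ case should read $E_{j,k}$), so the statement's intercept coefficient appears to be a typo rather than a flaw in your argument.
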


\begin{proof}
Straightforward from Eqs.~\eqref{eq:def-sigma-infty} and~\eqref{eq:gamma-computation-linear}. 
\end{proof}

Assuming that the $\omega_k$ are small, we deduce from Eqs.~\eqref{eq:approx-norm-tfidf} and~\eqref{eq:approx-norm-tfidf-2} that $E_j \approx 1.22$ and $E_{j,k}\approx 1.15$. 
In particular, \emph{they do not depend on $j$ and $k$.} 
Thus we can drastically simplify the statement of Proposition~\ref{prop:beta-computation-linear}:
\begin{equation}
\label{eq:simplified-betainf-linear-1}
\forall k\neq j,\quad \left(\beta_\infty^f\right)_k \approx 0
\quad \text{ and } \left(\beta_\infty^f\right)_j \approx 1.36 \normtfidf{\xi}_j
\, .
\end{equation}
We can now go back to our original goal: $f(x) = \sum_{j=1}^{d}\lambda_j x_j$. 
By linearity, we deduce from Eq.~\eqref{eq:simplified-betainf-linear-1} that 
\begin{equation}
\label{eq:simplified-betainf-linear}
\forall 1\leq j\leq d, \quad \left(\beta_\infty^f\right)_j \approx 1.36 \cdot \lambda_j \cdot  \normtfidf{\xi}_j
\, .
\end{equation}
In other words, as noted in the paper, \textbf{the explanation for a linear~$f$ is the TF-IDF of the word multiplied by the coefficient of the linear model,} up to a numerical constant and small error terms depending on~$d$.


\subsection{Concentration of $\betahat$}
\label{sec:betahat-concentration}

In this section, we state and prove our main result: the concentration of $\betahat_n$ around $\beta^f$ with high probability (this is Theorem~1 in the paper). 

\begin{theorem}[Concentration of $\betahat_n$]
\label{th:betahat-concentration}
Suppose that $f$ is bounded by $M>0$ on $\sphere{D-1}$. 
Let $\epsilon >0$ be a small constant, at least smaller than $M$. 
Let $\eta\in (0,1)$.  
Then, for every 
\[
n\geq \max \left\{2^9\cdot 70^4 M^2d^{9} \exps{\frac{10}{\nu^2}}, 2^9\cdot 70^2 Md^5\exps{\frac{5}{\nu^2}}\right\} \frac{\log \frac{8d}{\eta}}{\epsilon^2}
\, ,
\]
we have $\proba{\smallnorm{\betahat_n - \beta^f} \geq \epsilon} \leq \eta$. 
\end{theorem}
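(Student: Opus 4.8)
The plan is to decompose the error $\betahat_n - \beta^f$ using the identity $\betahat_n = \Sigmahat_n^{-1}\Gammahat_n^f$ and $\beta^f = \Sigma^{-1}\Gamma^f$, and then control each piece with the concentration results already established (Propositions~\ref{prop:sigmahat-concentration} and~\ref{prop:concentration-gammahat}) together with the deterministic bound on $\opnorm{\Sigma^{-1}}$ (Proposition~\ref{prop:opnorm-control}). Writing
\[
\betahat_n - \beta^f = \Sigmahat_n^{-1}\Gammahat_n^f - \Sigma^{-1}\Gamma^f = \Sigmahat_n^{-1}\bigl(\Gammahat_n^f - \Gamma^f\bigr) + \bigl(\Sigmahat_n^{-1} - \Sigma^{-1}\bigr)\Gamma^f,
\]
and using $\Sigmahat_n^{-1} - \Sigma^{-1} = -\Sigmahat_n^{-1}(\Sigmahat_n - \Sigma)\Sigma^{-1}$, I would reduce everything to bounding $\opnorm{\Sigmahat_n - \Sigma}$, $\smallnorm{\Gammahat_n^f - \Gamma^f}$, $\opnorm{\Sigma^{-1}}$, and $\smallnorm{\Gamma^f}$ (the last being $\leq \sqrt{d+1}\,M$ since each coordinate of $\Gamma^f$ is an expectation of $\pi z_k f$ with $\abs{\pi}\leq 1$ and $\abs{f}\leq M$).

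The main technical nuisance is that $\opnorm{\Sigmahat_n^{-1}}$ is itself random. I would handle this by working on the event $\mathcal{E} \defeq \{\opnorm{\Sigmahat_n - \Sigma} \leq \tfrac{1}{2}\opnorm{\Sigma^{-1}}^{-1}\}$, on which a standard Neumann-series argument gives $\opnorm{\Sigmahat_n^{-1}} \leq 2\opnorm{\Sigma^{-1}}$. Proposition~\ref{prop:sigmahat-concentration} guarantees $\proba{\mathcal{E}^c}$ is exponentially small once $n \gtrsim d^2 \opnorm{\Sigma^{-1}}^2 \log(d/\eta)$; plugging in Proposition~\ref{prop:opnorm-control} ($\opnorm{\Sigma^{-1}} \leq 70 d^{3/2}\exps{5/(2\nu^2)}$) produces the $d^5\exps{5/\nu^2}$ term in the statement. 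On $\mathcal{E}$, I would then bound
\[
\smallnorm{\betahat_n - \beta^f} \leq 2\opnorm{\Sigma^{-1}}\smallnorm{\Gammahat_n^f - \Gamma^f} + 2\opnorm{\Sigma^{-1}}^2 \opnorm{\Sigmahat_n - \Sigma}\smallnorm{\Gamma^f},
\]
and ask each of the two terms to be at most $\epsilon/2$. The first requires $\smallnorm{\Gammahat_n^f - \Gamma^f} \leq \epsilon/(4\opnorm{\Sigma^{-1}})$, which by Proposition~\ref{prop:concentration-gammahat} needs $n \gtrsim M d^2 \opnorm{\Sigma^{-1}}^2 \log(d/\eta)/\epsilon^2$, again $\asymp Md^5\exps{5/\nu^2}$. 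The second requires $\opnorm{\Sigmahat_n - \Sigma} \leq \epsilon/(4\opnorm{\Sigma^{-1}}^2 \sqrt{d+1}\,M)$, which via Proposition~\ref{prop:sigmahat-concentration} forces $n \gtrsim M^2 d^3 \opnorm{\Sigma^{-1}}^4 \log(d/\eta)/\epsilon^2 \asymp M^2 d^9 \exps{10/\nu^2}$, the dominant term. A union bound over the (at most three) failure events, each of probability $\leq \tfrac{8d}{3}\exp(-\text{const}\cdot n\epsilon^2/(\cdots))$, gives total failure probability $\leq \eta$, and tracking the explicit constants $70^2$, $70^4$, and the factor $2^9$ coming from the $32$ in the Hoeffding bounds and the repeated factors of $2$ yields exactly the stated threshold.

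The one genuinely delicate point — beyond bookkeeping — is ensuring $\Sigma$ is invertible with the claimed quantitative control, but this is already done: Proposition~\ref{prop:large-nu-makes-everything-ok} shows $\dencst_d \geq \exps{-2/\nu^2}/40 > 0$, which is what makes $\Sigma^{-1}$ (and hence $\beta^f \defeq \Sigma^{-1}\Gamma^f$) well-defined, and Proposition~\ref{prop:opnorm-control} turns this into the operator-norm bound that drives the sample complexity. I expect the proof to be essentially a careful concatenation of these ingredients; the only real choices are how to split $\epsilon$ between the two error terms and how to allocate $\eta$ among the failure events, both of which only affect constants.
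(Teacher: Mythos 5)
Your proposal is correct and follows essentially the same route as the paper: the decomposition you derive (via the resolvent identity and a Neumann-series argument on the event that $\smallopnorm{\Sigmahat_n-\Sigma}$ is small) is exactly the ``binding lemma'' the paper imports from \citet{garreau_luxburg_2020_arxiv}, namely $\smallnorm{\betahat_n-\beta^f} \leq 2\opnorm{\Sigma^{-1}} \smallnorm{\Gammahat_n-\Gamma^f} + 2\opnorm{\Sigma^{-1}}^2 \norm{\Gamma^f}\smallopnorm{\Sigmahat_n-\Sigma}$, and the paper then combines Propositions~\ref{prop:sigmahat-concentration}, \ref{prop:concentration-gammahat} and~\ref{prop:opnorm-control} with a union bound exactly as you describe, obtaining the same dominant $M^2 d^9 \exps{10/\nu^2}$ and secondary $M d^5 \exps{5/\nu^2}$ sample-size terms.
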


\begin{proof}
We follow the proof scheme of Theorem~28 in \citet{garreau_luxburg_2020_arxiv}. 
The key point is to notice that 
\begin{equation}
\label{eq:binding-lemma}
\smallnorm{\betahat_n-\beta^f} \leq 2\opnorm{\Sigma^{-1}} \smallnorm{\Gammahat_n-\Gamma^f} + 2\opnorm{\Sigma^{-1}}^2 \norm{\Gamma^f}\smallopnorm{\Sigmahat_n-\Sigma} 
\, ,
\end{equation}
provided that $\smallopnorm{\Sigma^{-1}(\Sigmahat_n-\Sigma)}\leq 0.32$ (this is Lemma~27 in \citet{garreau_luxburg_2020_arxiv}. 
Therefore, in order to show that $\smallnorm{\betahat_n-\beta^f}\leq \epsilon$, it suffices to show that each term in Eq.~\eqref{eq:binding-lemma} is smaller than $\epsilon/4$ and that $\smallopnorm{\Sigma^{-1}(\Sigmahat-\Sigma)}\leq 0.32$. 
The concentration results obtained in Section~\ref{sec:study-of-sigma} and ~\ref{sec:study-of-gamma} guarantee that both $\smallopnorm{\Sigmahat-\Sigma}$ and $\smallnorm{\Gammahat-\Gamma^f}$ are small if $n$ is large enough, with high probability. 
This, combined with the upper bound on $\smallopnorm{\Sigma^{-1}}$ given by Proposition~\ref{prop:opnorm-control}, concludes the proof. 

Let us give a bit more details. 
We start with the control of $\smallopnorm{\Sigma^{-1}(\Sigmahat_n-\Sigma)}$. 
Set $t_1\defeq (220 d^{3/2}\exps{\frac{5}{2\nu^2}})^{-1}$ and $n_1\defeq 32d^2\log \frac{8d}{\eta} / t_1^2$. 
Then, according to Proposition~\ref{prop:sigmahat-concentration}, for any $n\geq n_1$, 
\[
\proba{\smallopnorm{\Sigmahat_n-\Sigma} \geq t_1} \leq 4d\exp{\frac{-nt_1^2}{32d^2}} \leq \frac{\eta}{2}
\, .
\]
Since $\smallopnorm{\Sigma^{-1}}\leq 70 d^{3/2}\exps{\frac{5}{2\nu^2}}$ (according to Proposition~\ref{prop:opnorm-control}), by sub-multiplicativity of the operator norm, it holds that
\begin{equation}
\label{eq:proof-main-aux-1}
\smallopnorm{\Sigma^{-1}(\Sigmahat-\Sigma)} \leq \smallopnorm{\Sigma^{-1}} \smallopnorm{\Sigmahat-\Sigma} \leq 70/220 < 0.32
\, ,
\end{equation}
with probability greater than $1-\eta/2$. 

Now let us set $t_2\defeq (4\cdot 70^2 M d^{7/2} \exps{\frac{5}{\nu^2}})^{-1}\epsilon$ and $n_2 \defeq 32d^2 \log \frac{8d}{\eta} / t_2^2$. 
According to Proposition~\ref{prop:sigmahat-concentration}, for any $n\geq n_2$, it holds that 
\[
\smallopnorm{\Sigmahat_n-\Sigma} \leq \frac{\epsilon}{4Md^{1/2}} \cdot (70^2 d^3 \exps{5/\nu^2})^{-1}
\, ,
\]
with probability greater than $\eta/2$. 
Since $\smallnorm{\Gamma^f}\leq M\cdot d^{1/2}$ and $\smallopnorm{\Sigma^{-1}}^2\leq 70^2d^3\exps{5/\nu^2}$, 
\[
\opnorm{\Sigma^{-1}} \smallnorm{\Gammahat-\Gamma^f} \leq \frac{\epsilon}{4}
\]
with probability grater than $1-\eta/2$. 
Notice that, since we assumed $\epsilon < M$, $t_2< t_1$, and thus Eq.~\eqref{eq:proof-main-aux-1} also holds. 

Finally, let us set $t_3\defeq \epsilon / (4\cdot 70 d^{3/2}\exps{\frac{5}{2\nu^2}})$ and $n_3\defeq 32Md^2\log \frac{8d}{\eta}/t_3^2$. 
According to Proposition~\ref{prop:concentration-gammahat}, for any $n\geq n_3$, 
\[
\proba{\smallnorm{\Gammahat_n-\Gamma^f} \geq t_3} \leq 4d\exp{\frac{-nt_3^2}{32Md^2}} \leq \frac{\eta}{2}
\, .
\]
Since $\smallopnorm{\Sigma^{-1}}\leq 70d^{3/2}\exps{\frac{5}{2\nu^2}}$, we deduce that 
\[
\opnorm{\Sigma^{-1}}^2 \norm{\Gamma^f}\smallopnorm{\Sigmahat_n-\Sigma}  \leq \frac{\epsilon}{2}
\, ,
\]
with probability greater than $1-\eta/2$. 
We conclude by a union bound argument. 
\end{proof}

\section{Sums over subsets}
\label{sec:subsets-sums}

In this section, independent from the rest, we collect technical facts about sums over subsets. 
More particularly, we now consider arbitrary, fixed positive real numbers $\omega_1,\ldots,\omega_d$ such that $\sum_k \omega_k = 1$. 
We are interested in subsets $S$ of $\{1,\ldots,d\}$. 
For any such $S$, we define $H_S\defeq \sum_{k\in S}\omega_k$ the sum of the $\omega_k$ coefficients over $S$. 
Our main goal in this section is to compute the expectation of $H_S$ conditionally to $S$ not containing a given index (or two given indices), which is the key quantity appearing in Proposition~\ref{prop:beta-computation-linear}.

\begin{lemma}[First order subset sums]
\label{lemma:first-order-subset-sums}
Let $1\leq s\leq d$ and $1\leq j,k\leq d$ with $j\neq k$. 
Then 
\[
\sum_{\substack{\card{S} = s \\ S\not\ni j}} H_S = \binom{d-2}{s-1}(1-\omega_j)
\, ,
\]
and 
\[
\sum_{\substack{\card{S} = s \\ S\not\ni j,k}} H_S = \binom{d-3}{s-1}(1-\omega_j-\omega_k)
\, .
\]
\end{lemma}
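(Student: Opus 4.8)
The plan is to prove both identities by a direct counting argument, switching the order of summation to sum over the elements $\ell$ contributing to $H_S$ rather than over the subsets $S$. Since $H_S = \sum_{\ell \in S} \omega_\ell$, we can write
\[
\sum_{\substack{\card{S} = s \\ S\not\ni j}} H_S = \sum_{\substack{\card{S} = s \\ S\not\ni j}} \sum_{\ell \in S} \omega_\ell = \sum_{\ell \neq j} \omega_\ell \cdot \cardset{S : \card{S}=s,\ S\not\ni j,\ S\ni \ell}
\, .
\]
The first step is therefore to count, for a fixed $\ell \neq j$, the number of $s$-subsets of $\{1,\ldots,d\}$ that avoid $j$ and contain $\ell$: having committed to including $\ell$ and excluding $j$, the remaining $s-1$ elements must be chosen from the $d-2$ indices other than $j$ and $\ell$, giving $\binom{d-2}{s-1}$. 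This count is the same for every $\ell \neq j$, so it factors out of the sum, leaving $\binom{d-2}{s-1}\sum_{\ell \neq j}\omega_\ell = \binom{d-2}{s-1}(1-\omega_j)$, using the normalization $\sum_k \omega_k = 1$.

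For the second identity the argument is identical with one more index to track. Writing $\sum_{\card{S}=s,\ S\not\ni j,k} H_S = \sum_{\ell \neq j,k} \omega_\ell \cdot \cardset{S : \card{S}=s,\ S\not\ni j,k,\ S\ni \ell}$, I would count the $s$-subsets that exclude both $j$ and $k$ and include a fixed $\ell \notin\{j,k\}$: the remaining $s-1$ elements come from the $d-3$ indices distinct from $j,k,\ell$, giving $\binom{d-3}{s-1}$. Again this is independent of the choice of $\ell$, so it factors out and we are left with $\binom{d-3}{s-1}\sum_{\ell \neq j,k}\omega_\ell = \binom{d-3}{s-1}(1-\omega_j-\omega_k)$.

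There is essentially no obstacle here — the only things to be slightly careful about are (i) the edge cases where the binomial coefficient vanishes (e.g. $s > d-1$ in the first identity, or $s=1$ where $\binom{d-2}{0}=1$ correctly counts the singletons $\{\ell\}$), which are handled automatically by the usual convention $\binom{n}{r}=0$ for $r<0$ or $r>n$, and (ii) confirming that the inner sum $\sum_{\ell \in S}\omega_\ell$ really does range only over $\ell \neq j$ (resp. $\ell \notin \{j,k\}$), which is immediate since $S$ avoids those indices by hypothesis. The normalization $\sum_k \omega_k = 1$ is what converts the residual sum $\sum_{\ell \neq j}\omega_\ell$ into the clean expression $1-\omega_j$.
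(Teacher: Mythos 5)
Your proof is correct and uses essentially the same idea as the paper: exchange the order of summation so that one counts, for each index $\ell$, the number of admissible subsets containing it, then use $\sum_k \omega_k = 1$. The only (cosmetic) difference is that the paper runs the count over subsets \emph{containing} $j$ and finishes with the Pascal identity $\binom{d-1}{s-1}-\binom{d-2}{s-2}=\binom{d-2}{s-1}$, whereas your direct count of subsets avoiding $j$ (resp.\ $j,k$) and containing $\ell$ reaches the same conclusion without the case split at $\ell=j$.
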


\begin{proof}
The main idea of the proof is to rearrange the sum, summing over all indices and then counting how many subsets satisfy the condition. 
That is, 
\begin{align*}
\sum_{\substack{\card{S} = s \\ S \ni j}} H_S &= \sum_{k=1}^d \omega_k \cdot \cardset{S \text{ s.t. } j,k \in S} \\
&= \sum_{k\neq j} \omega_k \cdot \binom{d-2}{s-2} + \omega_j \cdot \binom{d-1}{s-1} \\
&= \binom{d-2}{s-2} + \left[ \binom{d-1}{s-1} - \binom{d-2}{s-2} \right]\omega_j
\, . 
\end{align*}
We conclude by using the binomial identity
\[
\binom{d-1}{s-1} - \binom{d-2}{s-2} = \binom{d-2}{s-1}
\, .
\]
Notice that, in the previous derivation, we had to split the sum to account for the case $j=k$. 
The proof of the second formula is similar. 
\end{proof}

Let us turn to expectation computation that are important to derive approximation in Section~\ref{sec:gamma-computation-linear}.
We now see $S$ and $H_S$ as random variables. 
We will denote by $\expecunder{\cdot}{s}$ the expectation conditionally to the event $\{\card{S}=s\}$. 

\begin{lemma}[Expectation computation]
\label{lemma:expectation-computation}
Let $j,k$ be distinct elements of $\{1,\ldots,d\}$. 
Then
\begin{equation}
\label{eq:subset-sum-expectation-computation}
\condexpec{H_S}{S\not\ni j} = \frac{(1-\omega_j)(d+1)}{3(d-1)} = \frac{1-\omega_j}{3} + \bigo{\frac{1}{d}}
\, ,
\end{equation}
and
\begin{equation}
\label{eq:subset-sum-expectation-2}
\condexpec{H_S}{S\not\ni j,k} = \frac{(1-\omega_j-\omega_k)(d+1)}{4(d-2)} = \frac{1-\omega_j-\omega_k}{4} + \bigo{\frac{1}{d}}
\end{equation}
\end{lemma}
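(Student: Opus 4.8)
The statement to prove is Lemma~\ref{lemma:expectation-computation}, which asks for the conditional expectation of $H_S$ given that $S$ avoids one index $j$ (resp.\ two indices $j,k$). The natural strategy is to reduce everything to Lemma~\ref{lemma:first-order-subset-sums}, which already computes the \emph{unnormalized} sums $\sum_{\card{S}=s,\,S\not\ni j}H_S$. The only extra ingredients needed are (i) the sampling law of $\card{S}$, which is uniform on $\{1,\ldots,d\}$, and (ii) the count of how many subsets of $\{1,\ldots,d\}$ of size $s$ avoid a given index (or pair of indices), in order to normalize correctly. Since the sampling scheme first picks the cardinality uniformly and then the subset uniformly among those of that cardinality, I will condition on $\card{S}=s$, average over $s$ with weights $1/d$, and then condition further on the avoidance event.

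First I would fix $j$ and write $\condexpec{H_S}{S\not\ni j}$ via the law of total expectation over $\{\card{S}=s\}$. Conditionally on $\card{S}=s$, the subset $S$ is uniform among the $\binom{d}{s}$ subsets of that size; restricting to $S\not\ni j$, it is uniform among the $\binom{d-1}{s}$ such subsets, so $\condexpecunder{H_S}{S\not\ni j}{s} = \binom{d-1}{s}^{-1}\sum_{\card{S}=s,\,S\not\ni j}H_S = \binom{d-1}{s}^{-1}\binom{d-2}{s-1}(1-\omega_j)$ by the first identity in Lemma~\ref{lemma:first-order-subset-sums}. The ratio $\binom{d-2}{s-1}/\binom{d-1}{s}$ simplifies to $s/(d-1)$. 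One subtlety: the conditioning on $\card{S}=s$ combined with $S\not\ni j$ changes the effective distribution of $s$ — but since $\proba{\card{S}=s}=1/d$ is independent of $s$ and $\proba{S\not\ni j \mid \card{S}=s}=\binom{d-1}{s}/\binom{d}{s}=(d-s)/d$, one must weight the sum over $s$ by $(d-s)/d$ and renormalize by $\proba{S\not\ni j}=\frac{1}{d}\sum_s (d-s)/d = (d-1)/(2d)$. Assembling: $\condexpec{H_S}{S\not\ni j} = (1-\omega_j)\cdot\frac{1}{(d-1)/2}\cdot\frac{1}{d}\sum_{s=1}^d \frac{s}{d} \cdot \frac{d-s}{d} \cdot\frac{d}{d-s}$, which collapses to a clean sum of $s$ and yields $(1-\omega_j)(d+1)/(3(d-1))$ after using $\sum_{s=1}^d s = d(d+1)/2$.

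For the two-index case the argument is identical: conditionally on $\card{S}=s$ and $S\not\ni j,k$, $S$ is uniform among the $\binom{d-2}{s}$ such subsets, the relevant unnormalized sum is $\binom{d-3}{s-1}(1-\omega_j-\omega_k)$ from the second identity of Lemma~\ref{lemma:first-order-subset-sums}, the ratio $\binom{d-3}{s-1}/\binom{d-2}{s}$ simplifies to $s/(d-2)$, and the probability weighting $\proba{S\not\ni j,k\mid \card{S}=s}=\binom{d-2}{s}/\binom{d}{s}=(d-s)(d-s-1)/(d(d-1))$ with overall normalizer $\proba{S\not\ni j,k}$ produces, after cancellation, a sum proportional to $\sum_s s(d-s-1)$ or similar; evaluating that with standard power-sum formulas gives $(1-\omega_j-\omega_k)(d+1)/(4(d-2))$. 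Finally, the $\bigo{1/d}$ remainders come from expanding $(d+1)/(3(d-1)) = \frac13 + \bigo{1/d}$ and likewise $(d+1)/(4(d-2)) = \frac14 + \bigo{1/d}$, using $|1-\omega_j|\le 1$ to absorb the prefactor.

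The only real bookkeeping obstacle is getting the normalization right — specifically, correctly accounting for the fact that the avoidance event is \emph{not} independent of $\card{S}$, so one cannot simply average $\condexpecunder{H_S}{S\not\ni j}{s}$ with uniform weights $1/d$; instead the weights must be reweighted by $\proba{S\not\ni j\mid\card{S}=s}$ and renormalized. Once that is handled, everything reduces to elementary binomial simplifications and the evaluation of $\sum_{s=1}^d s$ and $\sum_{s=1}^d s^2$, which are routine.
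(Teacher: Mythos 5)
Your proposal follows essentially the same route as the paper's proof: condition on $\card{S}=s$, use Lemma~\ref{lemma:first-order-subset-sums} together with the count $\binom{d-1}{s}$ to get $\condexpecunder{H_S}{S\not\ni j}{s}=\frac{s}{d-1}(1-\omega_j)$, reweight by $\condproba{\card{S}=s}{S\not\ni j}=\frac{2(d-s)}{d(d-1)}$ via Bayes, and sum over $s$ (and analogously for the two-index case). The only flaw is a garbled intermediate display: the summand should be $\frac{s}{d-1}\cdot\frac{1}{d}\cdot\frac{d-s}{d}$ with normalizer $\frac{2d}{d-1}$ (your spurious factor $\frac{d}{d-s}$ cancels the $(d-s)$ weighting), so the computation reduces to $\sum_s s(d-s)$ --- requiring $\sum_s s^2$ as you acknowledge at the end, not ``a clean sum of $s$'' --- after which one indeed obtains $(1-\omega_j)(d+1)/(3(d-1))$.
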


\begin{proof}
By the law of total expectation, we know that 
\[
\condexpec{H_S}{S\not\ni j} = \sum_{s=1}^{d} \condexpecunder{H_S}{S\not\ni j}{s} \cdot \condproba{\card{S}=s}{S\not\ni j}
\, .
\]
We first notice that, for any $s< d$, 
\begin{align*}
\condproba{\card{S}=s}{S\not\ni j} &= \frac{\condproba{S\not\ni j}{\card{S} = s} \proba{\card{S}=s}}{\proba{j\notin S}} \\
&= \frac{\binom{d-1}{s} / \binom{d}{s}\cdot \frac{1}{d}}{\frac{d-1}{2d}}\\
\condproba{\card{S}=s}{S\not\ni j} &= \frac{2(d-s)}{d(d-1)}
\, .
\end{align*}
According to Lemma~\ref{lemma:first-order-subset-sums}, for any $1\leq s < d$, 
\[
\sum_{\substack{\card{S} = s \\ S\not\ni j}} H_S = \binom{d-2}{s-1}(1-\omega_j)
\, .
\]
Moreover, there are $\binom{d-1}{s}$ such subsets. 
Since $\binom{d-1}{s-1}^{-1}\binom{d-2}{s}=\frac{s}{d-1}$, we deduce that
\[
\condexpecunder{H_S}{S\not\ni j}{s} = \frac{s}{d-1}(1-\omega_j)
\, .
\]
Finally, we write
\begin{align*}
\condexpec{H_S}{S\not\ni j} &= \sum_{s=1}^{d-1} \frac{s}{d-1}(1-\omega_j) \cdot \frac{2(d-s)}{d(d-1)} \\
&= (1-\omega_j) \cdot \frac{2}{d(d-1)^2} \sum_{s=1}^{d-1} s(d-s) \\
\condexpec{H_S}{S\not\ni j} &= \frac{(d+1)(1-\omega_j)}{3(d-1)}
\, .
\end{align*}
The second case is similar. 
One just has to note that
\begin{align*}
\condproba{\card{S}=s}{S\not\ni j,k} &=  \frac{\condproba{S\not\ni j,k}{\card{S}=s}}{\proba{j,k \notin S}} \\
&= \frac{3(d-s)(d-s-1)}{d(d-1)(d-2)} \tag{Lemma~\ref{lemma:proba-containing}}
\, .
\end{align*}
Then we can conclude since 
\[
\sum_{s=1}^{d-2} s(d-s)(d-s-1) = \frac{(d-2)(d-1)d(d+1)}{12}
\, .
\]
\end{proof}

\section{Technical results}
\label{sec:technical}

In this section, we collect small probability computations that are ubiquitous in our derivations. 
We start with the probability for a given word to be present in the new sample $x$, conditionally to $\card{S} =s$. 

\begin{lemma}[Conditional probability to contain given words]
\label{lemma:proba-containing-cond}
Let $\word_1,\ldots,\word_p$ be $p$ distinct words of $\dl$. 
Then, for any $1\leq s\leq d$, 
\[
\probaunder{\word_1\in x,\ldots,\word_p\in x}{s} = \frac{(d-s)(d-s-1)\cdots (d-s-p+1)}{d(d-1)\cdots (d-p+1)} = \frac{(d-s)!}{(d-s-p)!}\cdot \frac{(d-p)!}{d!}
\, .
\]
\end{lemma}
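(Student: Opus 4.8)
The plan is to reduce the statement to an elementary counting problem about uniformly random subsets. Recall from Section~\ref{sec:sampling} that, conditionally on $\card{S}=s$, the random set $S$ is drawn uniformly among all $s$-element subsets of $\{1,\ldots,d\}$. Moreover, since every occurrence of $\word_j$ is deleted from $\xi$ as soon as the index of $\word_j$ lies in $S$ (the multiplicity remark in Section~\ref{sec:sampling}), we have the exact equivalence $\{\word_j\in x\}=\{j\notin S\}$. Hence the event $\{\word_1\in x,\ldots,\word_p\in x\}$ coincides with $\{S\cap J=\emptyset\}$, where $J$ denotes the set of (distinct) indices of $\word_1,\ldots,\word_p$, so that $\card{J}=p$.

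The first step is then to count the favourable outcomes. A subset $S$ of size $s$ satisfies $S\cap J=\emptyset$ if and only if $S$ is an $s$-element subset of $\{1,\ldots,d\}\setminus J$, a set of cardinality $d-p$; there are exactly $\binom{d-p}{s}$ such subsets, with the usual convention that this is $0$ when $s>d-p$, which is consistent with the probability being $0$ in that regime. Since there are $\binom{d}{s}$ equally likely choices of $S$ overall, we get $\probaunder{\word_1\in x,\ldots,\word_p\in x}{s}=\binom{d-p}{s}/\binom{d}{s}$.

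The last step is a routine simplification: expanding the binomial coefficients,
\[
\frac{\binom{d-p}{s}}{\binom{d}{s}} = \frac{(d-p)!}{s!\,(d-p-s)!}\cdot\frac{s!\,(d-s)!}{d!} = \frac{(d-s)!}{(d-s-p)!}\cdot\frac{(d-p)!}{d!}\,,
\]
and rewriting $(d-s)!/(d-s-p)!$ as the descending product $(d-s)(d-s-1)\cdots(d-s-p+1)$ and $d!/(d-p)!$ as $d(d-1)\cdots(d-p+1)$ yields the claimed expression. An alternative route to the same formula, avoiding binomials altogether, is a sequential argument: order the $p$ indices and compute one index at a time the probability that it avoids $S$ given that the previous ones do, obtaining the telescoping product $\prod_{k=0}^{p-1}\frac{d-s-k}{d-k}$ directly (this is also the form in which the lemma is used in the proof of Proposition~\ref{prop:alphas-computation}).

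There is essentially no hard part here: the only point requiring a little care is the equivalence $\{\word_j\in x\}=\{j\notin S\}$, which relies on deletions acting on indices (all copies at once) rather than on individual occurrences; once this is granted, the statement is pure counting, and the same reasoning underlies Lemma~\ref{lemma:proba-containing} and the conditional-subset-size computations in the proof of Lemma~\ref{lemma:expectation-computation}.
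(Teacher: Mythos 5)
Your proof is correct and follows essentially the same route as the paper: condition on $\card{S}=s$ so that $S$ is uniform over $s$-subsets, identify $\{\word_1,\ldots,\word_p\in x\}$ with $\{S\cap J=\emptyset\}$, and count $\binom{d-p}{s}$ favourable subsets out of $\binom{d}{s}$. The extra algebraic simplification and the sequential telescoping remark are fine additions but not a different argument.
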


In the proofs, we use extensively Lemma~\ref{lemma:proba-containing-cond} for $p=1$ and $p=2$, that is,
\[
\probaunder{\word_j\in x}{s} = \frac{d-s}{d} 
\quad \text{ and } \quad 
\probaunder{\word_j \in x, \word_k \in x}{s} = \frac{(d-s)(d-s-1)}{d(d-1)}
\, ,
\]
for any $1\leq j,k\leq d$ with $j\neq k$. 

\begin{proof}
We prove the more general statement. 
Conditionally to $\card{S} =s$, the choice of $S$ is uniform among all subsets of $\{1,\ldots,d\}$ of cardinality $s$. 
There are $\binom{d}{s}$ such subsets, and only $\binom{d-p}{s}$ of them do not contain the indices corresponding to $\word_1,\ldots,\word_p$.
\end{proof}

We have the following result, without conditioning on the cardinality of $S$: 

\begin{lemma}[Probability to contain given words]
	\label{lemma:proba-containing}
Let $\word_1,\ldots,\word_p$ be $p$ distinct words of $\dl$. 
	Then
	\[
	\proba{\word_1,\ldots,\word_p\in x} = \frac{d-p}{(p+1)d}
	\, .
	\]
\end{lemma}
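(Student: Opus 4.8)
The plan is to condition on the cardinality of the random deletion set $S$ and reduce the claim to a binomial coefficient identity. Since $\proba{\card{S}=s}=\frac{1}{d}$ for every $1\leq s\leq d$, the law of total probability gives
\[
\proba{\word_1,\ldots,\word_p\in x} = \frac{1}{d}\sum_{s=1}^d \probaunder{\word_1\in x,\ldots,\word_p\in x}{s}
\, ,
\]
and the conditional probabilities on the right are exactly those computed in Lemma~\ref{lemma:proba-containing-cond}, namely $\frac{(d-s)!}{(d-s-p)!}\cdot\frac{(d-p)!}{d!}$. Note that the factor $\frac{(d-s)!}{(d-s-p)!}$ vanishes as soon as $d-s<p$, so only the terms $1\leq s\leq d-p$ contribute.

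Next I would rewrite the falling factorial as a binomial coefficient: $\frac{(d-s)!}{(d-s-p)!}=p!\binom{d-s}{p}$. After the change of variables $j=d-s$, the sum becomes $p!\sum_{j=p}^{d-1}\binom{j}{p}$. The key step is then the hockey-stick identity $\sum_{j=p}^{d-1}\binom{j}{p}=\binom{d}{p+1}$, which can either be cited or proved by a one-line induction (or telescoping using $\binom{j+1}{p+1}-\binom{j}{p+1}=\binom{j}{p}$). This yields $\sum_{s=1}^{d}\frac{(d-s)!}{(d-s-p)!} = p!\binom{d}{p+1} = \frac{d!}{(p+1)(d-p-1)!}$.

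Finally, I would substitute this back: multiplying by $\frac{1}{d}\cdot\frac{(d-p)!}{d!}$ collapses the factorials to
\[
\proba{\word_1,\ldots,\word_p\in x} = \frac{1}{d}\cdot\frac{(d-p)!}{(p+1)(d-p-1)!} = \frac{d-p}{(p+1)d}
\, ,
\]
which is the claimed formula. (As a sanity check, $p=1$ gives $\frac{d-1}{2d}$, matching the expression for $\probaunder{\word_j\in x}{\cdot}$ averaged over $s$, and $p=2$ gives $\frac{d-2}{3d}$, both of which are used repeatedly in the preceding sections.) There is no real obstacle here beyond spotting the hockey-stick identity; the rest is bookkeeping with factorials. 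An alternative, if one prefers to avoid the identity, is to recognize directly that the $\alpha_p$ coefficients in Proposition~\ref{prop:alphas-computation} specialize at $\nu\to+\infty$ (where $\psi\equiv 1$) to $\frac{1}{d}\sum_{s=1}^d\prod_{k=0}^{p-1}\frac{d-s-k}{d-k}$, and that this is precisely $\proba{\word_1,\ldots,\word_p\in x}$ by the same total-probability argument, then invoke Corollary~\ref{cor:alphas-approx}; but the self-contained combinatorial computation above is cleaner.
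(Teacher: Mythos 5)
Your proof is correct and follows essentially the same route as the paper's: condition on $\card{S}=s$, invoke Lemma~\ref{lemma:proba-containing-cond}, and collapse the resulting sum of falling factorials via the hockey-stick identity. The extra remarks (vanishing terms for $s>d-p$, the sanity checks at $p=1,2$, and the alternative via Corollary~\ref{cor:alphas-approx}) are fine but not needed.
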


\begin{proof}
By the law of total expectation,
\begin{align*}
\proba{\word_1,\ldots,\word_p\in x} &= \frac{1}{d}\sum_{s=1}^d \condproba{\word_1,\ldots,\word_p\in x}{s} \\
&= \frac{1}{d}\sum_{s=1}^d \frac{(d-s)!}{(d-s-p)!}\cdot \frac{(d-p)!}{d!}
\, ,
\end{align*}
where we used Lemma~\ref{lemma:proba-containing-cond} in the last display. 
By the hockey-stick identity~\citep{ross_1997}, we have
\[
\sum_{s=1}^d \binom{d-s}{p} = \sum_{s=p}^{d-1} \binom{s}{p} = \binom{d}{p+1}
\, .
\]
We deduce that 
\begin{equation}
\label{eq:aux-limit-1}
\sum_{s=1}^d \frac{(d-s)!}{(d-s-p)!} = \frac{d!}{(p+1)\cdot (d-p-1)!}
\, .
\end{equation}
We deduce that 
\begin{align*}
\proba{\word_1,\ldots,\word_p\in x}
&= \frac{1}{d} \frac{(d-p)!}{d!} \sum_{s=1}^d \frac{(d-s)!}{(d-s-p)!} \\
&= \frac{1}{d} \frac{(d-p)!}{d!} \frac{d!}{(p+1)\cdot (d-p-1)!} \tag{by Eq.~\eqref{eq:aux-limit-1}} \\
\proba{\word_1,\ldots,\word_p\in x} &= \frac{d-p}{(p+1)d}
\, .
\end{align*}
\end{proof}

\section{Additional experiments}
\label{sec:experiments}

In this section, we present additional experiments. 
We collect the experiments related to decision trees in Section~\ref{sec:add-exp-trees} and those related to linear models in Section~\ref{sec:add-exp-linear}. 

\paragraph{Setting.}
All the experiments presented here and in the paper are done on Yelp reviews (the data are publicly available at \url{https://www.kaggle.com/omkarsabnis/yelp-reviews-dataset}). 
For a given model $f$, the general mechanism of our experiments is the following. 
For a given document $\xi$ containing $d$ distinct words, we set a bandwidth parameter $\nu$ and a number of new samples $n$. 
Then we run LIME $n_\text{exp}$ times on $\xi$, with no feature selection procedure (that is, all words belonging to the local dictionary receive an explanation). 
We want to emphasize again that this is the only difference with the default implementation. 
Unless otherwise specified, the parameters of LIME are chosen by default, that is, $\nu=0.25$ and $n=5000$.
The number of experiments $n_\text{exp}$ is set to $100$. 
The whisker boxes are obtained by collecting the empirical values of the $n_\text{exp}$ runs of LIME: they give an indication as to the variability in explanations due to the sampling of new examples. 
Generally, we report a subset of the interpretable coefficients, the other having near zero values. 

Let us explain briefly how to read these whisker boxes: to each word corresponds a whisker box containing all the $n_\text{exp}$ values of interpretable coefficients provided by LIME ($\betahat_j$ in our notation). 
The horizontal dark lines mark the quartiles of these values, and the horizontal blue line is the median. 
On top of these experimental results, we report with red crosses the values predicted by our analysis ($\beta_j^f$ in our notation).

The Python code for all experiments is available at \url{https://github.com/dmardaoui/lime_text_theory}.
We encourage the reader to try and run the experiments on other examples of the dataset and with other parameters.   

\subsection{Decision trees}
\label{sec:add-exp-trees}

In this section, we present additional experiments for small decision trees. 
We begin by investigating the influence of $\nu$ and $n$ on the quality of our theoretical predictions. 

\begin{figure}
\centering
\includegraphics[scale=0.21]{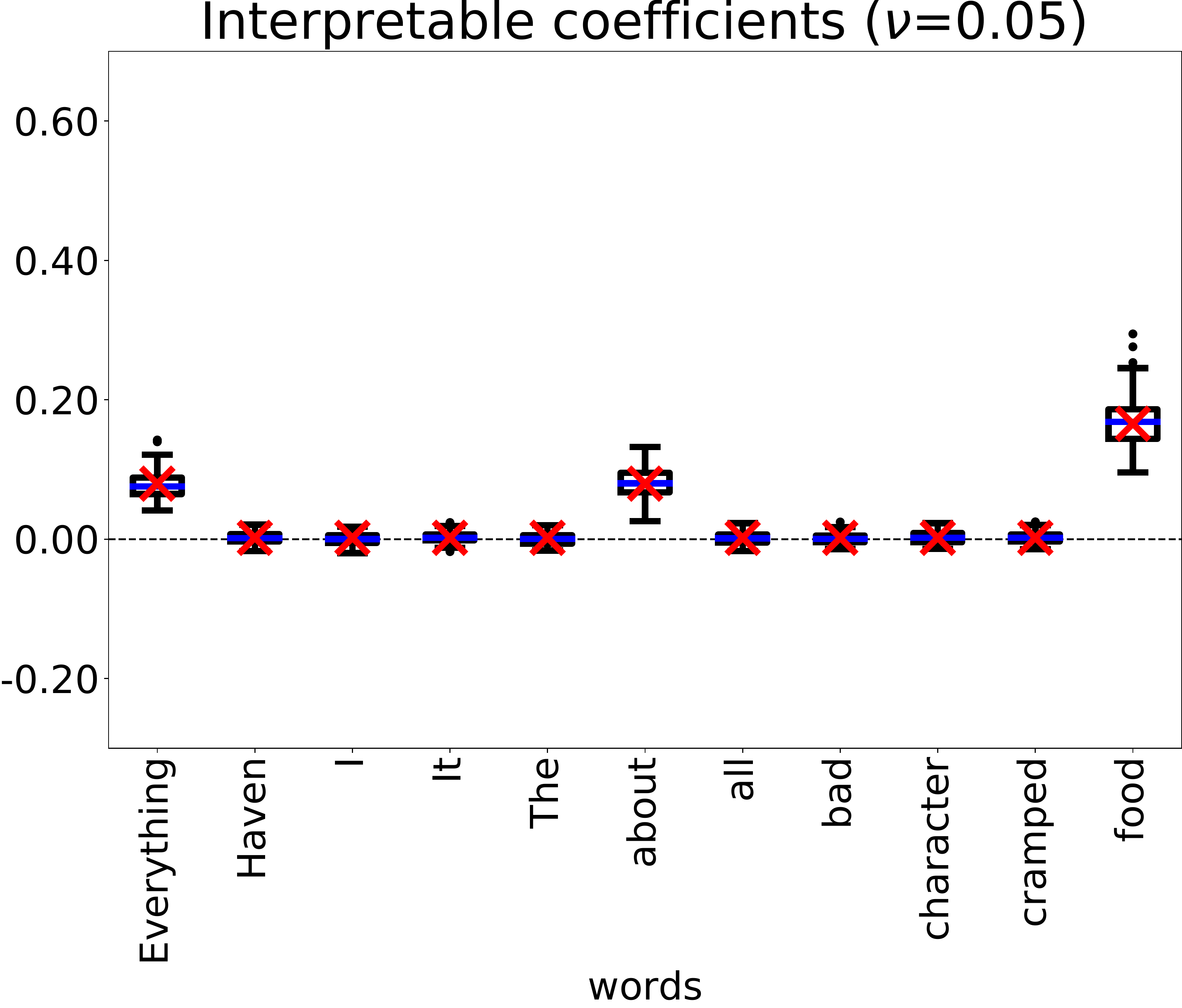} 
\hspace{0.5cm}
\includegraphics[scale=0.21]{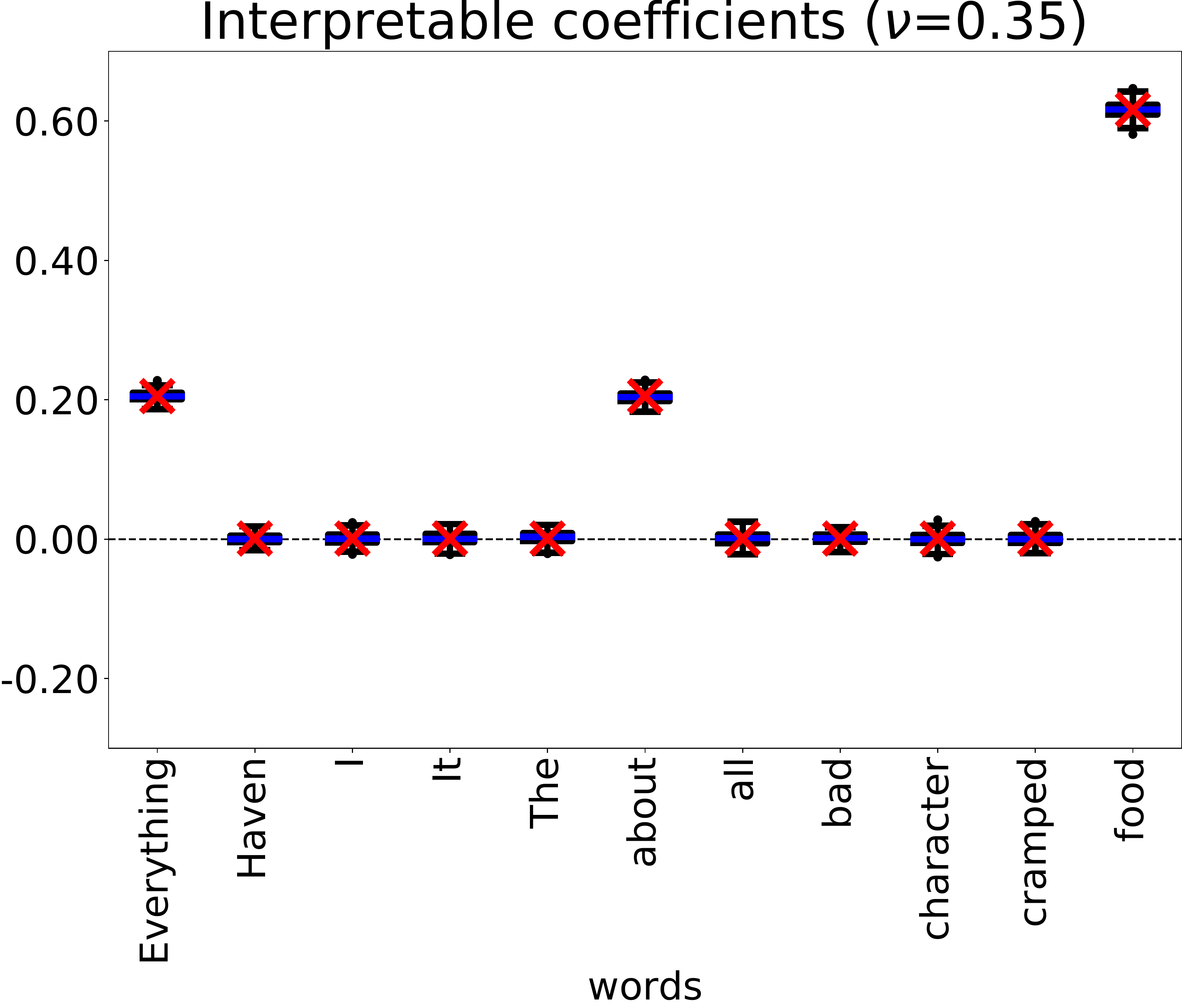}
\caption{\label{fig:tree-bandwidth}Influence of the bandwidth on the explanation  given for a small decision tree on a Yelp review ($n=5000,n_\text{exp}=100$, $d=29$). \emph{Left panel:} $\nu=0.05$, \emph{right panel:} $\nu=0.35$. Our theoretical predictions remain accurate for non-default bandwidths.}
\end{figure}

\paragraph{Influence of the bandwidth.}
Let us consider the same example $\xi$ and decision tree as in the paper. 
In particular, the model $f$ is written as 
\[
\indic{\text{``food''}} + (1-\indic{\text{``food''}}) \cdot \indic{\text{``about''}} \cdot \indic{\text{``Everything''}}
\, .
\]
We now consider non-default bandwidths, that is, bandwidths different than $0.25$. 
We present in Figure~\ref{fig:tree-bandwidth} the results of these experiments. 
In the left panel, we took a smaller bandwidth ($\nu=0.05$) and in the right panel a larger bandwidth ($\nu=0.35$). 
We see that while the numerical value of the coefficients changes slightly, their relative order is preserved. 
Moreover, our theoretical predictions remain accurate in that case, which is to be expected since we did not resort to any approximation in this case. 
Interestingly, the empirical results for small $\nu$ seem more spread out, as hinted by Theorem~\ref{th:betahat-concentration}.

\begin{figure}
\centering
\includegraphics[scale=0.21]{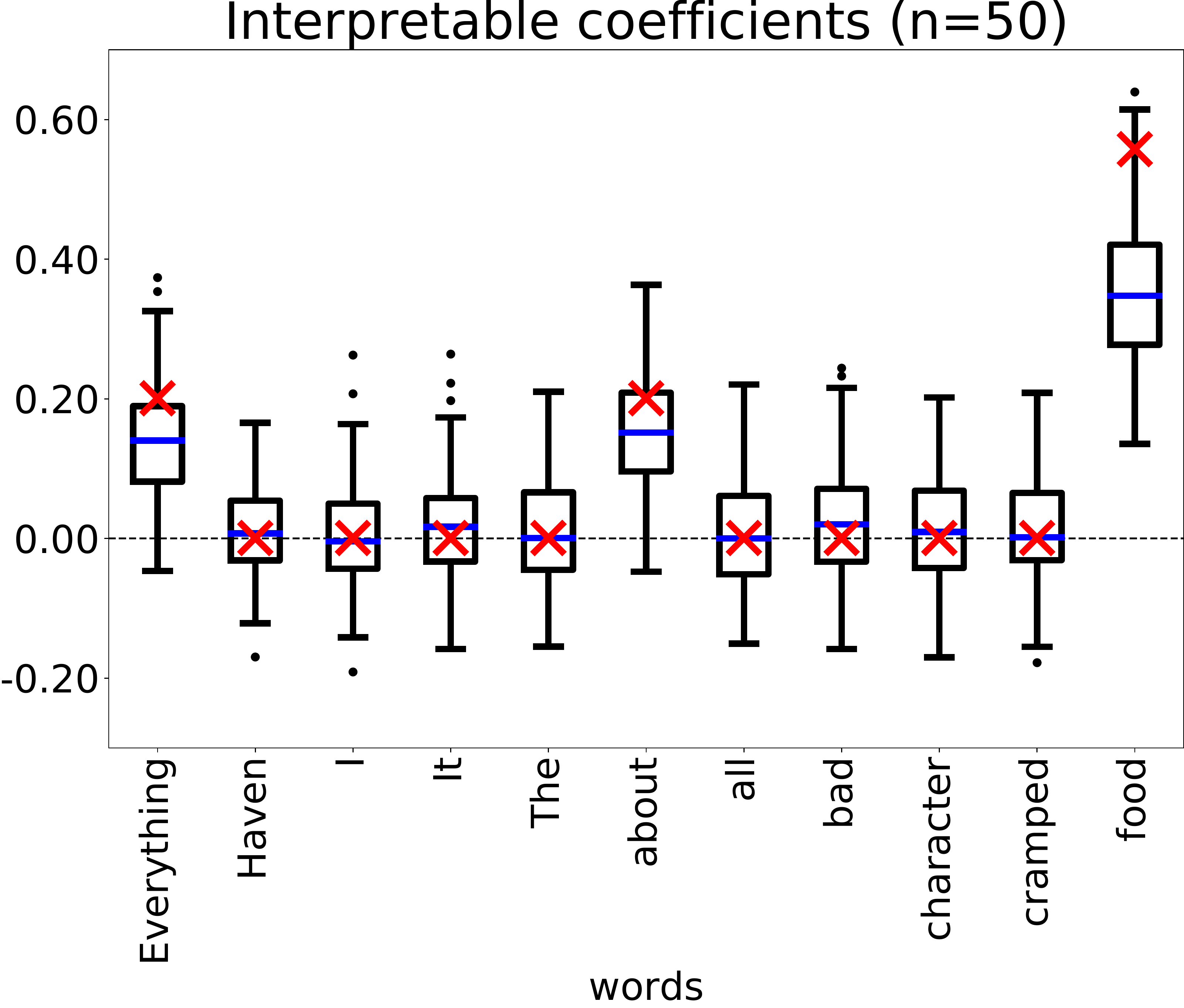}  
\hspace{0.5cm}
\includegraphics[scale=0.21]{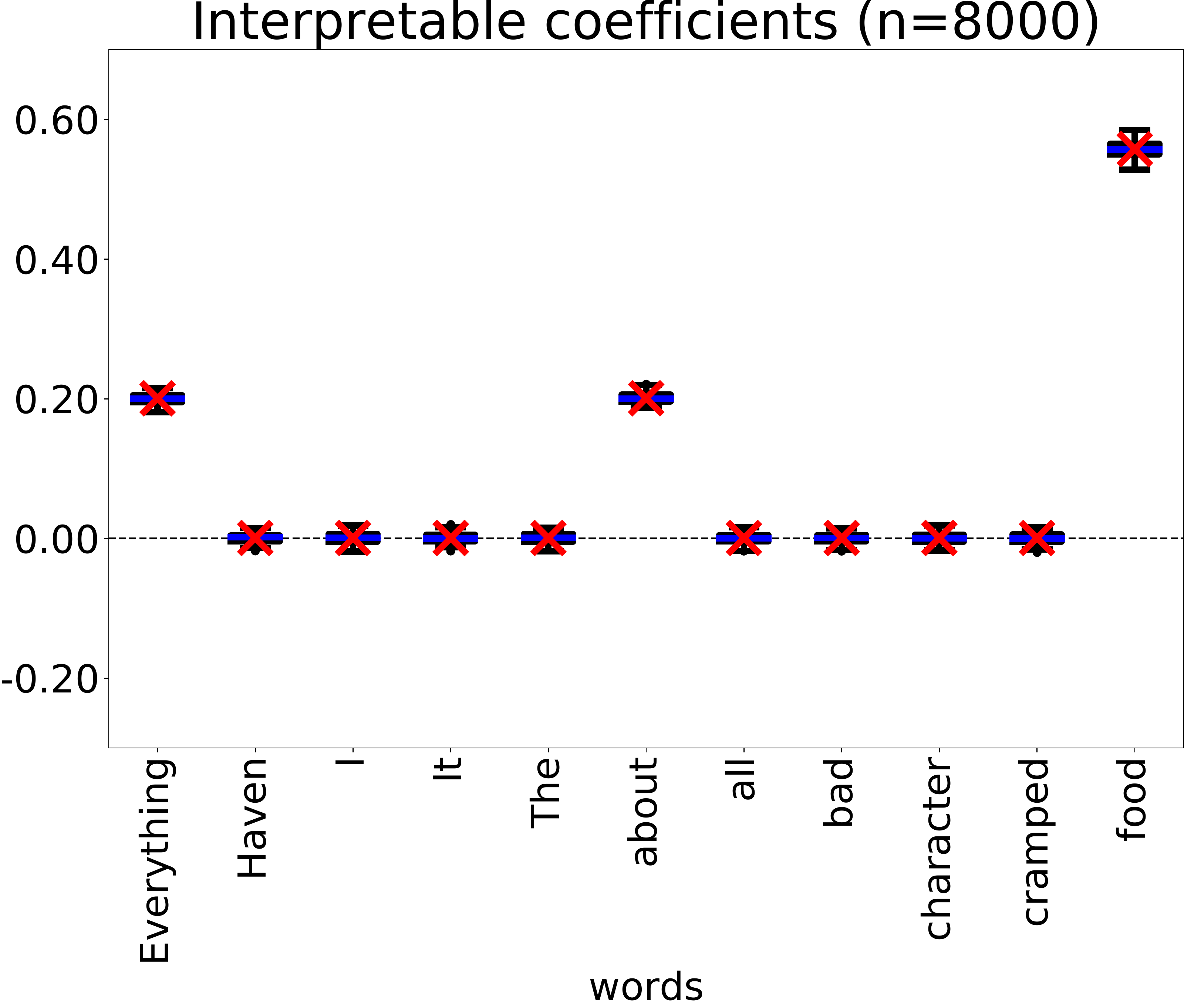}
\caption{\label{fig:tree-nsample}Influence of the number of perturbed samples on the explanation  given for a small decision tree on a Yelp review ($\nu=0.25,n_\text{exp}=100,d=29$). \emph{Left panel:} $n=50$, \emph{right panel:} $n=8000$. Empirical values are less likely to be close to the theoretical predictions for small $n$.}
\end{figure}

\paragraph{Influence of the number of samples.}
Keeping the same model and example to explain as above, we looked into non-default number of samples $n$. 
We present in Figure~\ref{fig:tree-nsample} the results of these experiments. 
We took a very small $n$ in the left panel ($n=50$ is two orders of magnitude smaller than the default $n=5000$) and a larger $n$ in the right panel. 
As expected, when $n$ is larger, the concentration around our theoretical predictions is even better. 
To the opposite, for small $n$, we see that the explanations vary wildly. 
This is materialized by much wider whisker boxes. 
Nevertheless, to our surprise, it seems that our theoretical predictions still contain some relevant information in that case. 

\paragraph{Influence of depth.}
Finally, we looked into more complex decision trees. 
The decision rule used in Figure~\ref{fig:tree-complex} is given by 
\[
\indic{\text{``food''}} + (1-\indic{\text{``food''}})\indic{\text{``about''}}\indic{\text{``Everything''}} +\indic{\text{``bad''}}+ \indic{\text{``bad''}}\indic{\text{``character''}}
\, .
\]
We see that increasing the depth of the tree is not a problem from a theoretical point of view. 
It is interesting to see that words used in several nodes for the decision receive more weight (\emph{e.g.}, ``bad'' in this example). 

\begin{figure}
\centering
\includegraphics[scale=0.25]{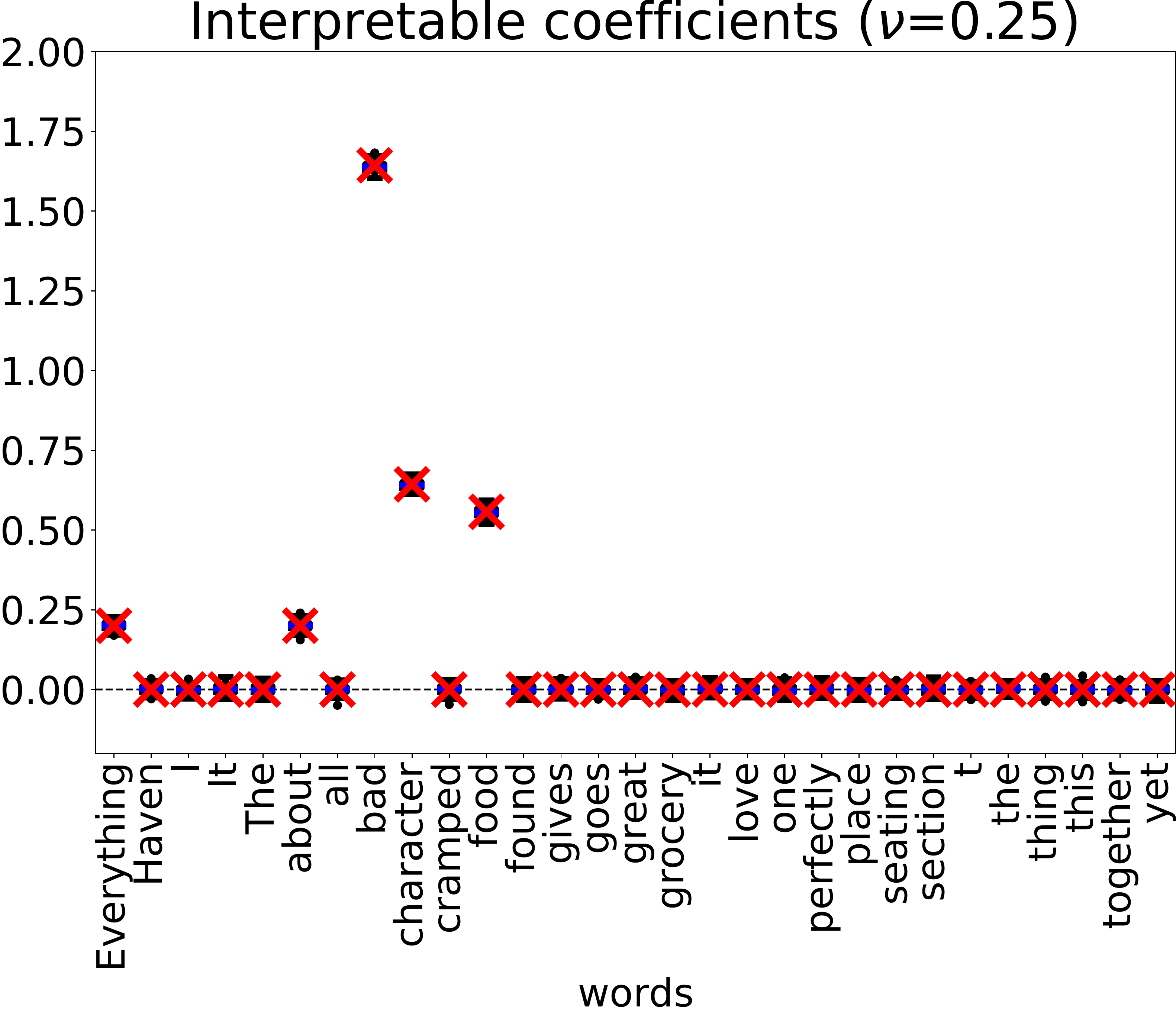}  
\caption{\label{fig:tree-complex}Theory meets practice for a more complex decision tree  ($\nu=0.25,n_\text{exp}=100,n=5000,d=29$). Here we report all coefficients. The theory still holds for more complex trees.}
\end{figure}

\subsection{Linear models}
\label{sec:add-exp-linear}

Let us conclude this section with additional experiments for linear models. 
As in the paper, we consider an arbitrary linear model 
\[
f(\normtfidf{x}) = \sum_{j=1}^d \lambda_j \normtfidf{x}_j
\, .
\]
In practice, the coefficients $\lambda_j$ are drawn i.i.d. according to a Gaussian distribution. 

\paragraph{Influence of the bandwidth.}
As in the previous section, we start by investigating the role of the bandwidth in the accuracy of our theoretical predictions. 
We see in the right panel of Figure~\ref{fig:linear-bandwidth} that taking a larger bandwidth does not change much neither the explanations nor the fit between our theoretical predictions and the empirical results. 
This is expected, since our approximation (Eq.~\eqref{eq:simplified-betainf-linear}) is based on the large bandwidth approximation. 
However, the left panel of Figure~\ref{fig:linear-bandwidth} shows how this approximation becomes dubious when the bandwidth is small.  
It is interesting to note that in that case, the theory seems to always \emph{overestimate} the empirical results, in absolute value. 
The large bandwidth approximation is definitely a culprit here, but it could also be the regularization coming into play. 
Indeed, the discussion at the end of Section~2.4 in the paper that lead us to ignore the regularization is no longer valid for a small $\nu$. 
In that case, the $\pi_i$s can be quite small and the first term in Eq.~(5) of the paper is of order $\exps{-1/(2\nu^2)}n$ instead of $n$. 

\begin{figure}
\centering
\includegraphics[scale=0.25]{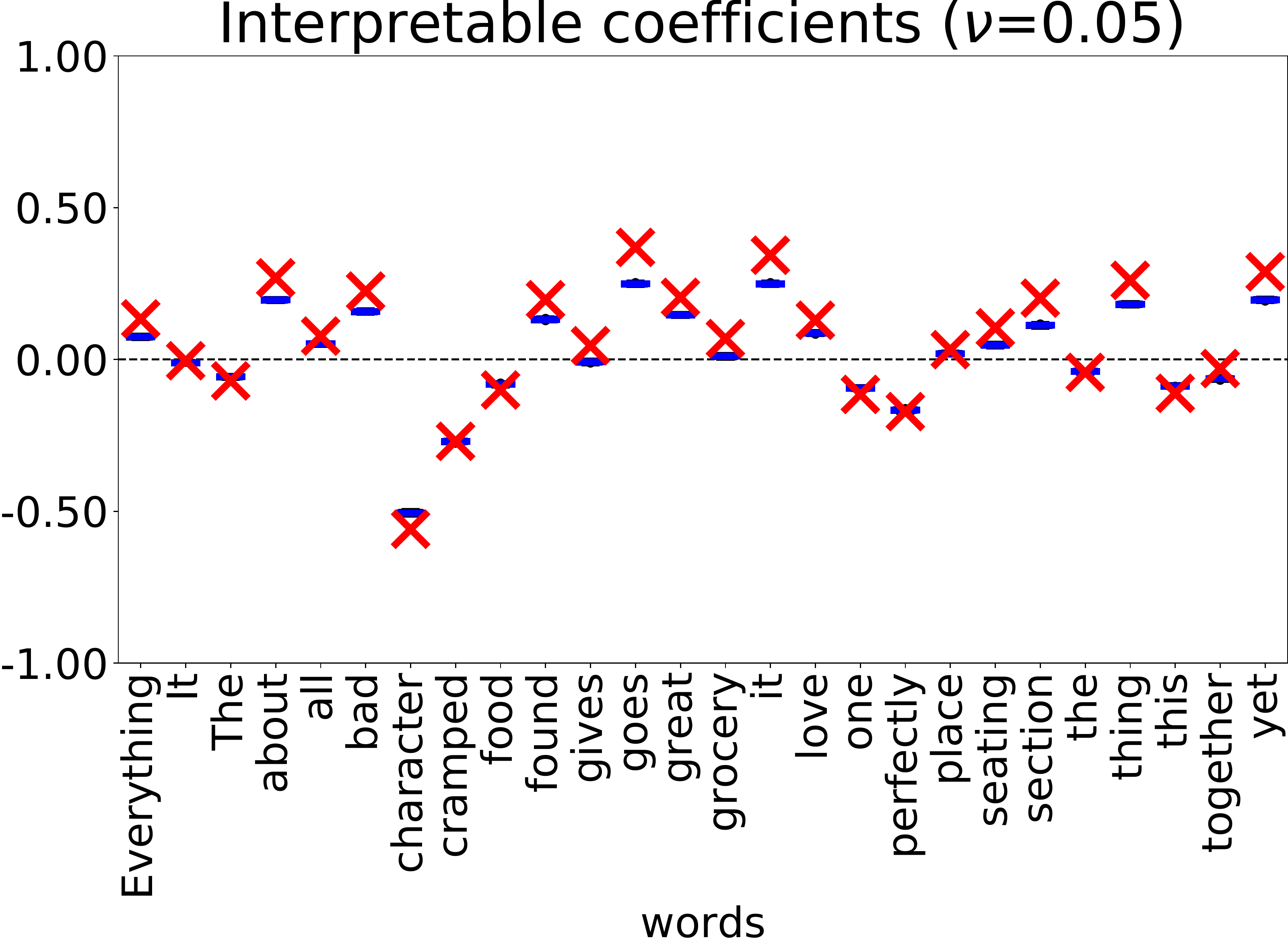} 
\includegraphics[scale=0.25]{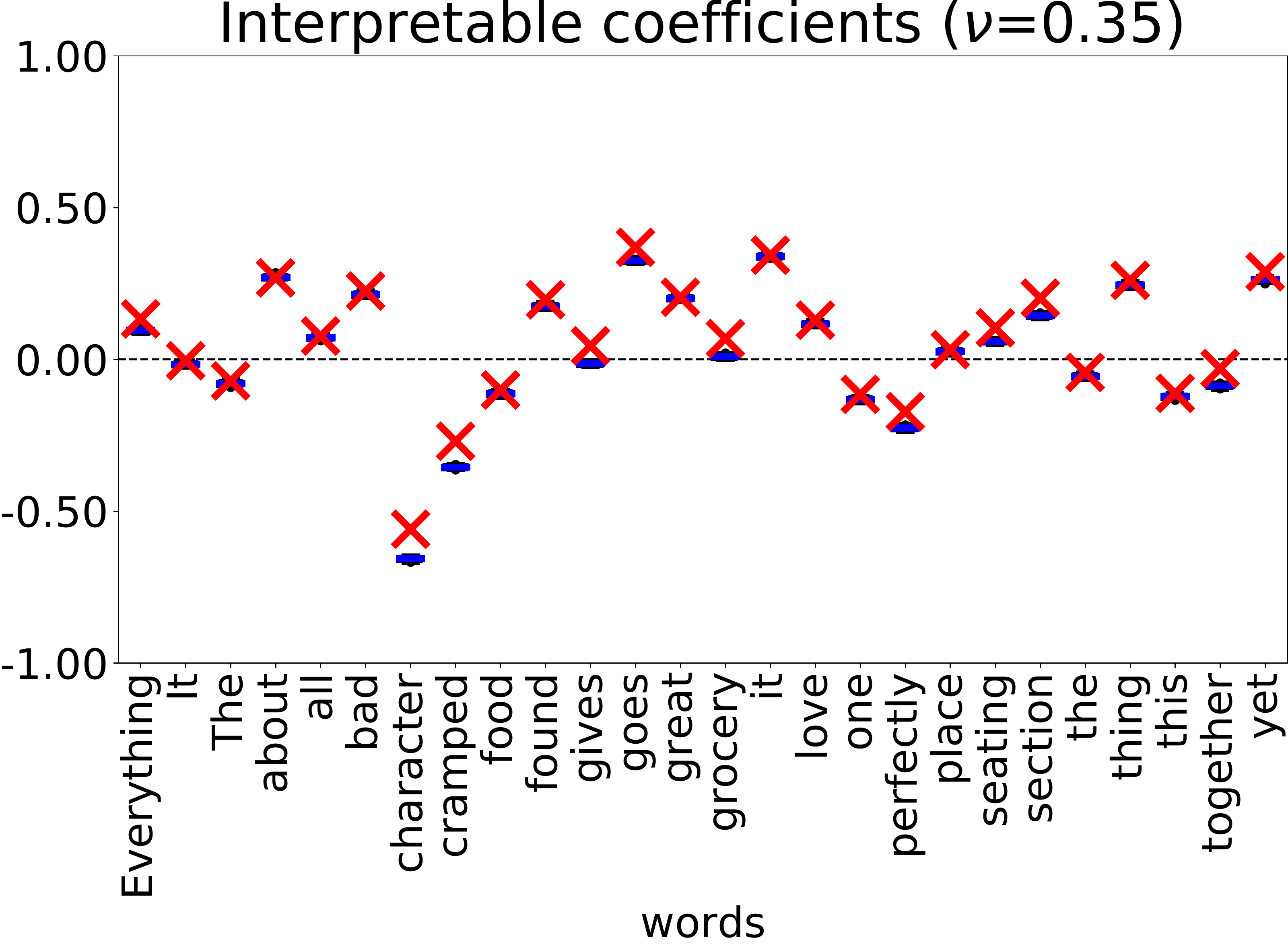}  
\caption{\label{fig:linear-bandwidth}Influence of the bandwidth on the explanation for a linear model on a Yelp review ($n_\text{exp}=100,n=5000, d=29 $). \emph{Left panel:} $\nu=0.05$, \emph{right panel:} $\nu=0.35$. The approximate theoretical values are less accurate for smaller bandwidths.}
\end{figure}

\paragraph{Influence of the number of samples.}
Now let us look at the influence of the number of perturbed samples. 
As in the previous section, we look into very small values of~$n$, \emph{e.g.}, $n=50$.  
We see in the left panel of Figure~\ref{fig:linear-n} that, as expected, the variability of the explanations increases drastically. 
The theoretical predictions seem to overestimate the empirical results in absolute value, which could again be due to the regularization beginning to play a role for small $n$, since the discussion in Section~2.4 of the paper is only valid for large~$n$. 

\begin{figure}
\centering
\includegraphics[scale=0.25]{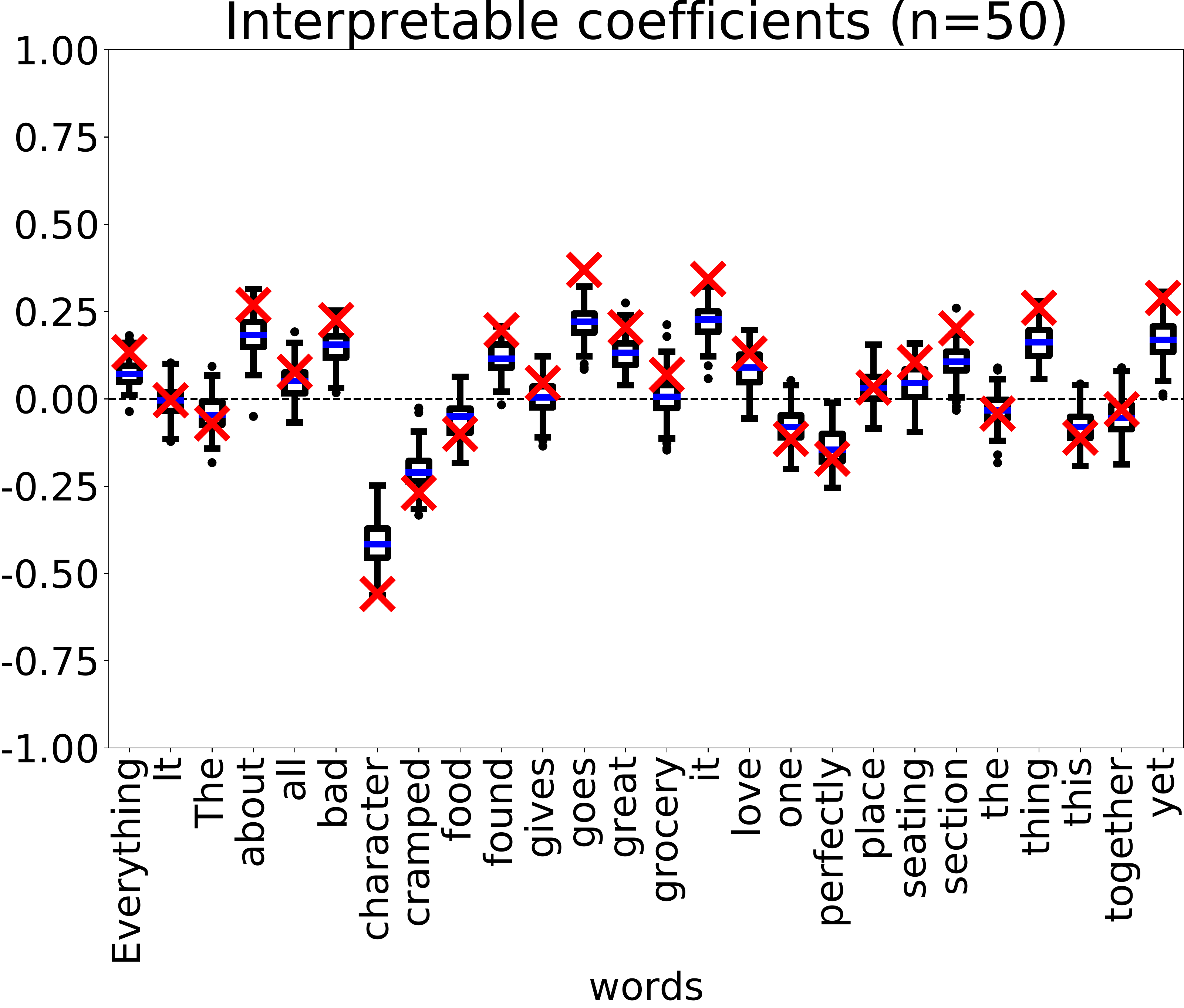} 
\includegraphics[scale=0.25]{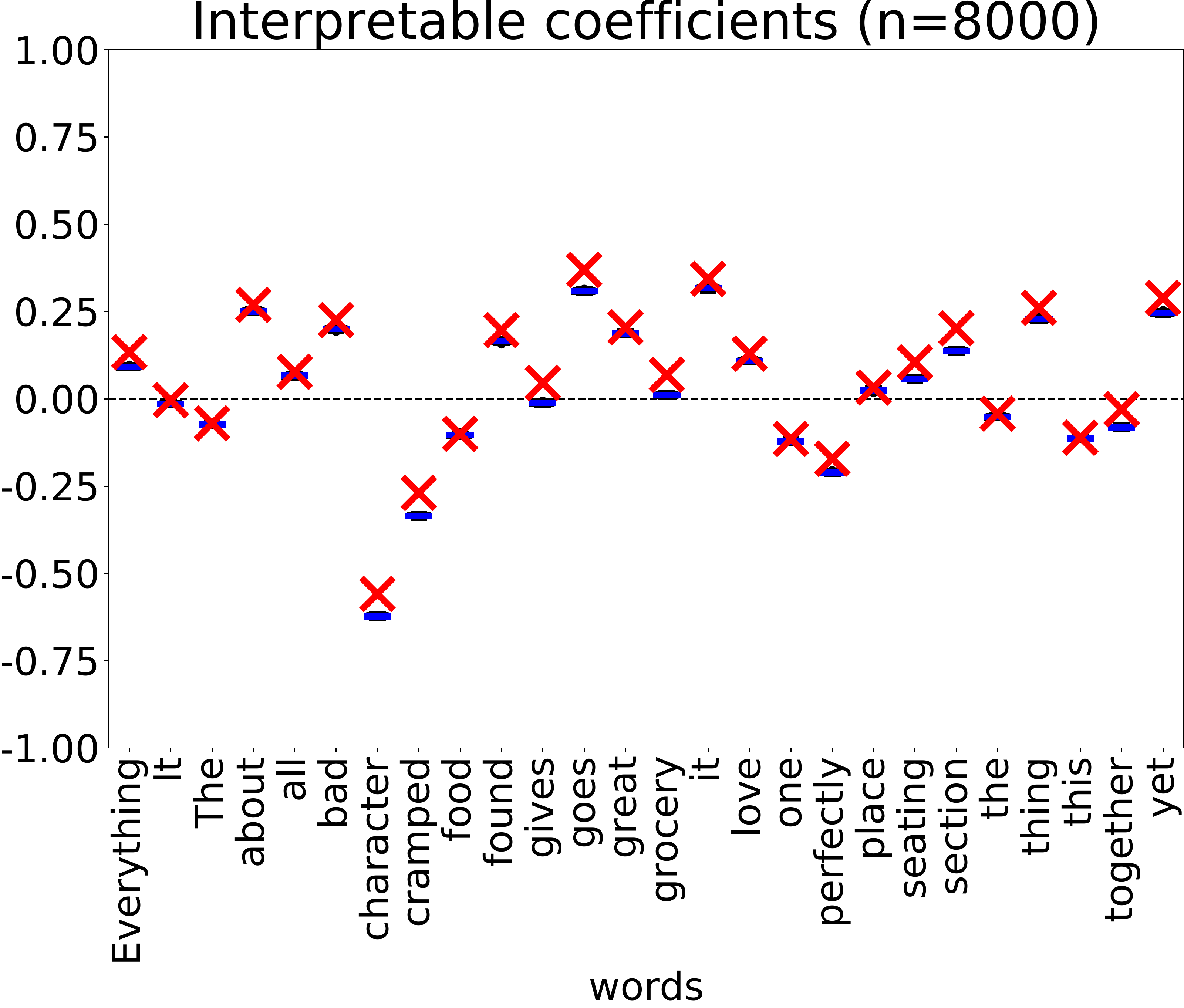}  
\caption{\label{fig:linear-n}Influence of the number of perturbed samples on the explanation  for a linear model on a Yelp review ($ \nu=0.25,n_\text{exp}=100, d=29 $). \emph{Left panel:} $n=50$, \emph{right panel:} $n=8000$. The empirical explanations are more spread out for small values of $n$.}
\end{figure}

\paragraph{Influence of $d$.}
To conclude this section, let us note that $d$ does not seem to be a limiting factor in our analysis. 
While Theorem~\ref{th:betahat-concentration} hints that the concentration phenomenon may worsen for large $d$, as noted before in Remark~\ref{remark:influence-of-d}, we have reason to suspect that it is not the case. 
All experiments presented on this section so far consider an example whose local dictionary has size $d=29$. 
In Figure~\ref{fig:linear-large-d} we present an experiment on an example that has a local dictionary of size $d=52$.
We observed no visible change in the accuracy of our predictions. 

\begin{figure}

\centering
\includegraphics[scale=0.25]{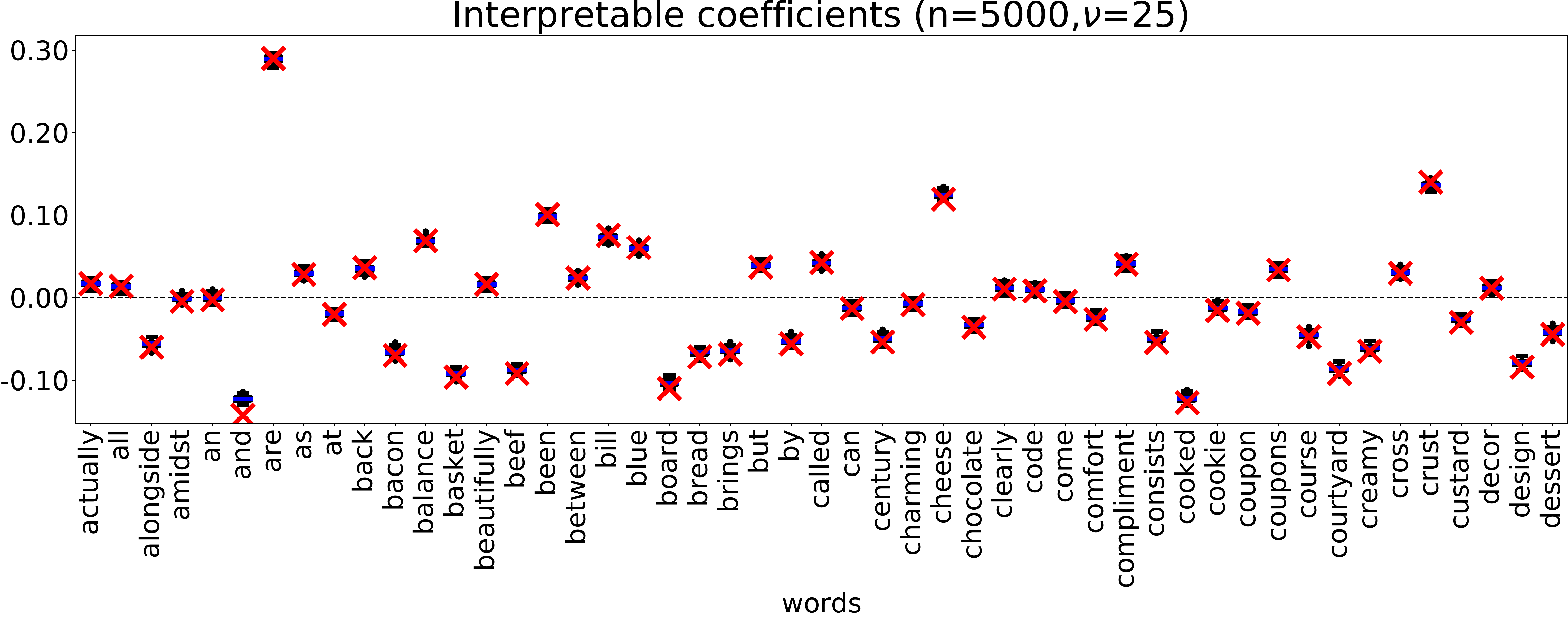}  
\caption{\label{fig:linear-large-d}Theory meets practice for an example with a larger vocabulary ($\nu=0.25,n_\text{exp}=100,n=5000,d=537$). Here we report only $50$ interpretable coefficients. Our theoretical predictions seem to hold for larger local dictionaries.}
\end{figure}

\end{document}